\newcites{auth}{Publications}
\newcites{other}{References}
\begin{document}
\voffset=0.8cm

\thispagestyle{empty}
\enlargethispage{3.5cm}
\begin{center}

\mbox{}

\vspace{1cm}
{\Huge \textbf\textsf{Doktori \'ertekez\'es}}

\vspace{2cm}
{\Large \textbf\textsf{Dar\'oczy B\'alint Zolt\'an}}

\vspace{14cm}
{\Large \textbf\textsf{2016}}

\end{center}


\newpage
\thispagestyle{empty}
\mbox{ }


\newpage
\thispagestyle{empty}

\enlargethispage{3.5cm}
\begin{center}
{\sf

{\Huge \textbf\textsf{Machine learning methods for multimedia information retrieval}}

\vspace{1cm}
{\Large B\'alint Zolt\'an Dar\'oczy}

\vspace{1cm}
{\large Supervisor: }
\medskip
{\large Andr\'as Bencz\'ur Ph.D.}

\vspace{2cm}

\includegraphics[height=5cm]{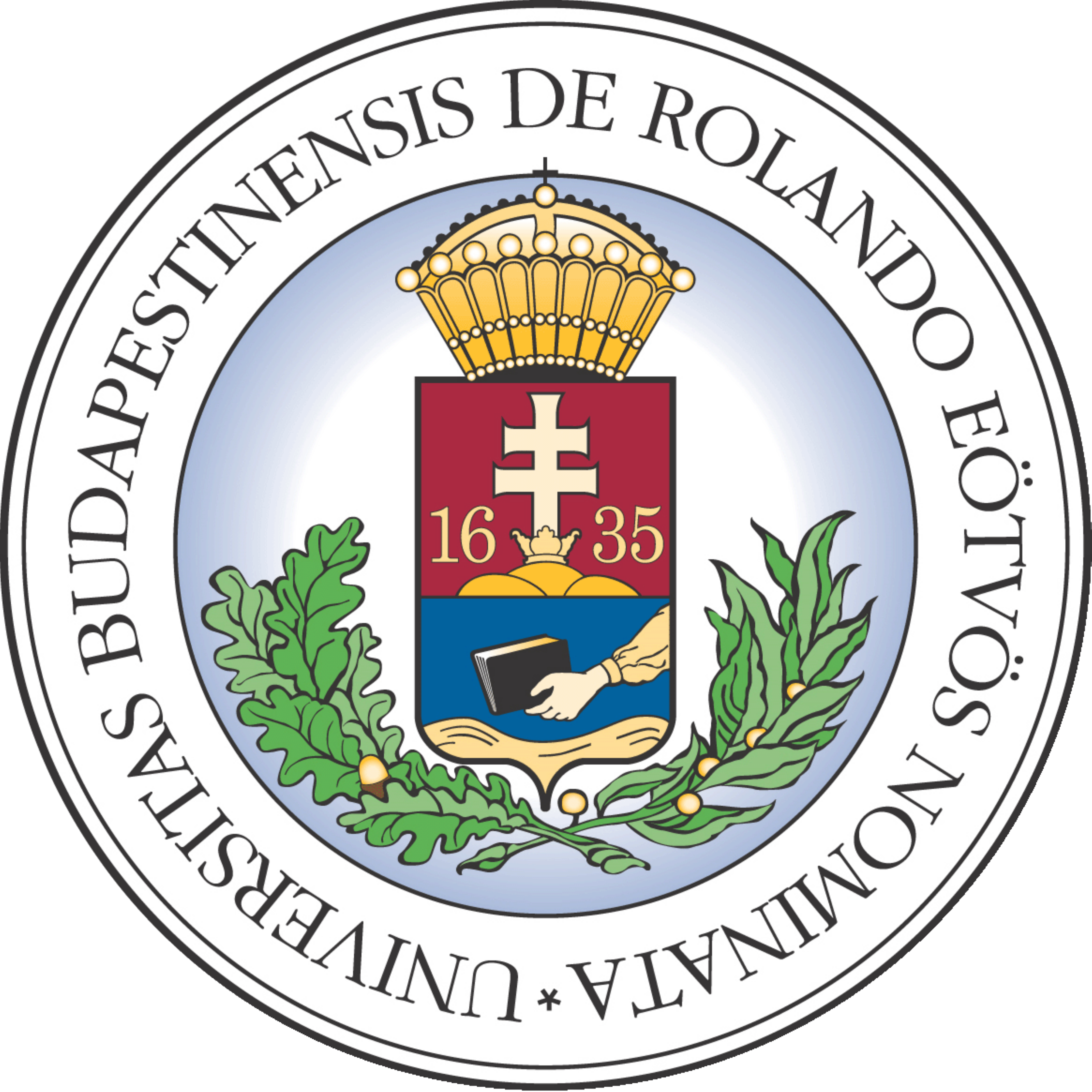}

\vspace{1cm}

{\large E\"{o}tv\"{o}s Lor\'and University}\\
{\large Faculty of Informatics}\\
{\large Department of Information Systems}

\vspace{0.5cm}

{\large Ph.D.\ School of Computer Science}\\
{\large Erzs\'ebet Csuhaj-Varj\'u D.Sc.}

\vspace{0.5cm}

{\large Ph.D.\ Program of ``Basics and Methodology of Informatics''}\\
{\large J\'anos Demetrovics D.Sc.}

\vspace{1cm}

A dissertation submitted for the degree of\\
Philosophiae Doctor (PhD)

\vspace{1cm}

{\large Budapest, 2016.}

\vspace{1cm}
{DOI: 10.15476/ELTE.2016.086}
}
\end{center}


\newpage
\thispagestyle{empty}
\mbox{}


\newpage
\setcounter{page}{1}
\tableofcontents
\newpage
\listoffigures
\newpage
\listoftables
\newpage

\section*{Acknowledgement}

During my years working on the presented thesis I have had the chance to meet with wonderful people who helped and supported me in a lot of ways. I am eternally grateful for the humanity, the brilliance and complete support of my supervisor, Andr\'as Bencz\'ur. Without his teachings, endless help and patience toward me I could not finish this thesis. 

I would like to express my gratitude to professors J\'anos Demetrovics and Lajos R\'onyai for their limitless guidance and kindness. 

I am grateful to my co-authors and the members of the Data Mining and Search Group at the Institute for Computer Science and Control, a part of the Hungarian Academy of Sciences (MTA SZTAKI), especially to the people I closely worked with: D\'avid Sikl\'osi, Mikl\'os Kurucz, Istv\'an Petr\'as, Frederick Ayala-G\'omez, Zsolt Fekete, R\'obert P\'alovics, Levente Kocsis, P\'eter Vaderna, D\'avid Nemeskey, Tam\'as Kiss, Andr\'as Garz\'o, R\'obert Pethes and Matthias Brendel. 


I cannot be thankful enough the support of my family and my friends. Their friendship, kindness and inspiration and the endless conversations helped me ineffably. I would like to thank especially my father for his profound thoughts, ideas, advices and endless patience.

To my little daughter, S\'ari. 
\newpage
\thispagestyle{empty}
\newpage

\section{Introduction}

Text and image classification or retrieval are well-known challenging problems. Textual content is usually represented as a set of occurring terms (bag-of-words) while images can be described as a set of regions (segment or an environment of a keypoint \citeother{lowe1999object,hog,csurka2004visual}). The chosen feature 
extraction methods highly affect the quality of the retrieval or the classification in both cases. One of the interesting cases is, when both modalities are present at the same time, giving us the opportunity to increase the quality of the classification and retrieval. In my thesis I will examine several feature extraction and learning methods for retrieval and classification purposes and give examples where the combination of them increase the quality. As a final result we introduce a general probabilistic model for joining at first sight incompatible feature spaces, the Fisher kernel based similarity kernel \citeauth{daroczy2015text,daroczy2015machine} in Section \ref{sect:simker}. We use it as a basis for various problems such as multi-modal image annotation \citeauth{daroczy2012sztaki} in Section~\ref{sect:mm_img_class}, for session drop detection \citeauth{daroczy2015machine} in Section~\ref{sect:time_series} or in Section~\ref{sect:text} for web based document quality prediction\citeauth{daroczy2015text}.

The current state-of-the-art image representations, including the Convolutional Neural Networks (CNN \citeother{lecun1998gradient,krizhevsky2012imagenet,he2015delving}) are modelling the image as a set of regions instead of extracting global statistics \citeother{csurka2004visual,Zissermann2011}. We can either extract features around the environment of the detected keypoints (e.g. SIFT \citeother{lowe1999object}) or describe previously determined coherent image parts (e.g. graph-cut based segmentation \citeother{FelzandHutt,shimalik}). 

In Section \ref{sec:seg_img} we will describe a model for multimedia image retrieval. In \citeauth{daroczy2009sztaki_lncs} we elaborated on the importance of choices in the segmentation procedure for retrieval with emphasis on edge detection and pyramidal segmentation. Evaluation was performed on the ImageCLEF IAPRTC-12 dataset. We measured 6-12\% increase in MAP (Mean Average Precision) and Precision over the original graph-cut based segmentation suggested by Felzenswalb et al. \citeother{FelzandHutt} with the same features. Beside determining the proper regions, we investigated the relative importance of the visual features as well as the right choice of the distance function between segment descriptors. Our experiments showed 31.9\% increase over a simple color statistic. We also suggested a method, that for parametric optimization of the parameters by measuring how well the similarity measures separate sample images of the same topic from those of different topics increased the quality of retrieval by 16.1\%. We used a simplified version of the segmentation algorithm for object recognition in \citeauth{deselaers2008overview} measuring the similarity between the non-artificial sample object and the actual test images with re-segmentation. 

For retrieval, in \citeauth{benczur2008multimodal} we suggested a novel method consists of biclustering image segments and annotation words. Given the query words, it is possible to select the image segment clusters that have strongest co-occurrence with the corresponding word clusters. These image segment clusters act as the selected segments relevant to a query. 

In \citeauth{daroczy2013fisher} we overviewed the theoretical foundations of the Fisher kernel method. In most cases, the Gaussian Mixture Modelling (GMM) with a Fisher information based distance over the mixtures yields the most accurate classification results out of the keypoint based method \citeother{IFK2010,perronnin2007fisher,Zissermann2011,bartFlickr12}. We indicated that it yields a natural metric over images characterized by low level content descriptors generated from a Gaussian mixtures. We justified the theoretical observations by reproducing standard measurements over the Pascal VOC 2007 data and showing the importance of dense sampling with an efficient GPU based implementation. The resulted image classification system is comparable to the best performing PASCAL VOC systems using SIFT descriptors, in some categories outperforming the best published Fisher vector based systems \citeother{IFK2010,Zissermann2011} without Spatial Pooling \citeother{LazebnikSpatial06} and with 3.3 times lower dimension. We suggested that a further improvement could be a better approximation of the Fisher information and a generative model capturing the intra image structure. The latter issue is quite serious. If we rearrange the samples (patches of a particular image) in an arbitrary way, then the Fisher vector of the resulting image will be the same as before, while the new image may be radically different. To overcome this we will introduce a model based on Markov Random Fields in Section \ref{sect:spat_fish}. 

In \citeauth{daroczy2010interest} we showed that the segmentation based feature extraction method in \citeauth{daroczy2009sztaki_lncs} and the Fisher vector representation complement each other in some cases. 

In Section \ref{sect:vcdt} we will examine the problem of multimodal image classification. One of the key points of multimodal image classification is how to handle the increasing number of different representations of the same image such as spatial pooling, keypoint detection and dense sampling, different color and grayscale descriptors or textual context (e.g. Flickr tags). In \citeauth{daroczy2011sztaki} we suggested an efficient fusion method using different similarity measures. By this method we were able combine, before the classification by Support Vector Machine (SVM \citeother{Vapnik95}), a large variety of representations to improve the classification quality. This descriptor is a combination of several visual (Fisher vectors per modality and per pooling/sampling) and textual similarity values (Jensen-Shannon divergence) between the actual image and a reference image set (a subset of the training images). Our experiments showed near zero loss in performance with a reference set sized less than half of the training set \citeauth{daroczy2012sztaki} while the optimized combination resulted 4.5\% increase in MAP over a simple averaging of similarities \citeauth{daroczy2011sztaki}. As an alternative fusion method, we suggested a novel method in \citeauth{daroczy2012sztaki} by biclustering the images. The algorithm calculates the similarity of the entities (particularly images) by Jensen-Shannon divergence of their Flickr tags combined with the visual similarity. 

As an extension we describe a high quality method to exploit cross-media tags for image indexing and classification. The suggested algorithm learns the mapping between free text annotation and the visual content. Our method exploits image tags of unrestricted vocabulary composed not necessary of objects only, without the need for explicit labelled regions in the training data. By our method, content based image indexing can be done by assigning text to image regions and at the same time we improved the visual model by the text annotation. Key in our solution is the use of our highly efficient GPU based generative image modelling algorithms. We train Gaussian Mixture Models to define a generative model for low-level descriptors extracted from the training set using a very dense grid that enables us to obtain a high quality model of individual image segments. The final model arises by biclustering a combined matrix of the uniform representation and annotation text distance that yields clusters of features and words representing image segments. In addition to solving the new, double ambiguous labelling task, our method performed very well for the standard MIR Flickr classification data outperforming results in the literature by 2.99\% in MAP \citeother{liu2014selective,bartFlickr12}.

In Section \ref{sect:simker} we will expand the idea of the fusion method we used in \citeauth{daroczy2011sztaki,daroczy2012sztaki} and define the similarity kernel, a theoretically justified probabilistic model based on Markov Random Fields and the Fisher Information \citeauth{daroczy2015text,daroczy2015machine} with various approximations. 

In Section \ref{sect:text} we will suggest a method for web document classification. Similarly to images, web pages are often contain additional modalities besides the main modality, the text. While in \citeauth{siklosi2012content} we examined different kernel based methods to detect english web spam based on the text, link and content features of the web pages, in \citeauth{garzo2013cross} we also investigated cross-lingual web spam detection based on pure English models. In \citeauth{daroczy2015text} we predicted quality aspects of web pages beside spamicity. We gave methods for automatically assessing the credibility, presentation, knowledge, intention and completeness. We used both regression and classification based models over the evaluator, site, evaluation triplets and their metadata combined with the textual representation of the page. In our experiments best results can be reached by the similarity kernel based on various feature sets including distances extracted from the clusters of the bicluster. 

As our final application, in Section \ref{sect:time_series} we examine an interesting problem related to cellular telecommunication networks. The abnormal bearer session release (i.e. bearer session drop) in cellular telecommunication networks may seriously impact the quality of experience of mobile users. The latest mobile technologies enable high granularity real-time reporting of all conditions of individual sessions, which gives rise to use data analytic methods to process and monetize this data for network optimization. One such example for analytic is classification to predict session drops well before the end of session. In \citeauth{daroczy2015machine} we presented a novel method based on Dynamic-time warping \citeother{keogh2006decade} that is able to predict session drops with higher accuracy than traditional models such as AdaBoost \citeother{freund1995decision} used in recent publications \citeother{zhou2013proactive}. Interestingly, the predictor can be part of a SON (Self-organizing Network) function in order to eliminate the session drops or mitigate their effects.

The thesis is organized as follows. As a starting point we will overview briefly several theoretical fundamentals of learning in Section~\ref{sect:learning} and review some supervised and unsupervised models in Section~\ref{sect:prob}. After joining the generative and discriminative probabilistic models with Fisher Information in Section \ref{sect:simker}, we will review the results for Gaussian Mixtures and introduce a novel Markov Random Field based model. After a detailed description of the similarity kernel in Section \ref{sect:simker} we will suggest models for above mentioned problems. Finally, in the last chapters we will describe various representations and models for images (Section~\ref{sect:mm_img_class}), web documents (Section~\ref{sect:text}) and time-series (Section~\ref{sect:time_series}). 

\newpage

\section{Brief introduction to learning theory}
\label{sect:learning}

Statistical learning was inspired by the work of Fisher \citeother{fisher1960design} in the first half of the 20th century. Sir Ronald A. Fisher's ``Lady tasting tea" problem introduced the basics of the statistical decision making and the evaluation of such a procedure. He showed the importance of the underlying distribution (randomization) in decision making and suggested various tests and methods. His famous experiment based on actual events in Fisher's life. He met with a Lady (Dr. Muriel Bristol-Roach) who declared ``that by tasting a cup of tea made with milk she can discriminate whether the milk or the tea infusion was first added to the cup"\citeother{fisher1960design}. Fisher's initial hypothesis (the null hypothesis) was that the Lady cannot tell it. To prove it he prepared four cups for both cases randomly and asked the Lady to choose the ones which were filled first with tea. He showed that the probability of selecting correctly all the four cups is 1 to 70 and choosing four incorrect cups is exactly as rare. His pioneer experiment and reasoning opened a new field in statistics which based validity of any procedure on randomization.  

\subsection{Generalisation theory} 

A more general theoretical contribution was given by Vapnik and Chervonenkis in the early 1970s \citeother{vapnik1971uniform,vapnik1998statistical}. The \textit{Vapnik-Chervonenkis theorem} explains the connection between generalisation, training set selection and model selection. Let us define the \textit{empirical risk} as 

\begin{equation}
R_{emp}(f)=\frac{1}{T}\sum_{t=1}^T l(f(x_i),y_i))
\end{equation}

where $X=\{x_1,..,x_T\}$ in $\mathbb{R}^d$ is a set of examples with know target $Y=\{y_1,..,y_T\}$ and $l(f(x_i),y_i)$ is a loss function given a previously chosen model function $f(x)$. The theorem states that if we optimize for a binary loss function (0 if $f(x_i)=y_i$ and 1 if not) over a set of independent samples from a fixed distribution $D$ with known labels (the training set) than the \textit{true risk} $R_{true}(f)$ (the expected value of the loss function over $D$) is upper bounded by the empirical risk plus an additional value depending on the chosen function's capabilities. The \textit{VC-theorem} \citeother{vapnik1971uniform} tells us about the worst case scenario, formally for binary classification with a binary loss function and a chosen function class $\mathcal{F}$ the generalisation (the difference between the true and the empirical risk) is bounded as follows

\begin{equation}
\label{eq:vc_base}
P(\sup_{f \in \mathcal{F}} \mid R_{emp}(f) - R_{true}(f) \mid > \epsilon ) \leq 8 \mathcal{S}(\mathcal{F},T) \mathrm{e}^{-\frac{T \epsilon^2}{32}}
\end{equation}

and 

\begin{equation}
\mathbf{E}[\sup_{f \in \mathcal{F}} \mid R_{emp}(f) - R_{true}(f) \mid ]\leq 2 \sqrt{\frac{\log \mathcal{S}(\mathcal{F},T) + \log 2}{T}}.
\end{equation}

The theory shows that the bound is depending only on the size of the training set and the separating capability of the chosen function class measured by the \textit{shattering coefficient} $\mathcal{S}(\mathcal{F},T)$, the maximum number of different labellings the function class $\mathcal{F}$ can realize over $T$ samples. For binary labels the maximum and the ideal would be $\mathcal{S}(\mathcal{F},T)= 2^T$ but in practice usually it is not the case. To capture this amount, they defined the so called \textit{Vapnik-Chervonenkis dimension} (VC-dimension) that is independent from the size of the training set. The \textit{VC-dimension} of a function class $VC(\mathcal{F})$ is the cardinality of the largest set in the d-dimensional space which can be separated correctly (or \textit{shattered}) with any label set. According to \textit{Sauer's lemma} \citeother{sauer1972density} the \textit{shattering coefficient} is upper bounded as $\mathcal{S}(\mathcal{F},T)\leq (1+T)^{VC(\mathcal{F})}$. For example, as a consequence of the Radon theorem the \textit{VC-dimension} of the linear separator (a hyperplane which separates the space into two half-spaces) is $d+1$ in $d$-dimensional space (but not a sharp bound, imagine three points on a line in $\mathbb{R}^2$). Let us consider a linear separator capable of separating with low empirical risk. If the number of examples in the training set were high, the feature space may had been high dimensional according to the theory. This suggests a high \textit{shattering coefficient} and high upper bound. Another example is the class of the polynomial functions in $\mathbb{R}^d$ with degree $D$. It can be viewed as a mapping into a higher, $d'=\sum_{k=1}^D\binom{d+k-1}{k}+1$ dimensional space (for example if $d=2$ and $D=2$, the transformed feature space is $d'=6$ dimensional). Since $T$ is finite by definition we can always find a polynomial function with a high enough degree to exceed in dimension the number of the training examples to minimize the empirical risk to zero at the cost of a high \textit{shattering coefficient} and higher expected generalisation error.

Interestingly, this means that optimization for low true risk is a balance between low empirical risk and low \textit{VC-dimension} or as Hopcroft and Kannan wrote ``The concept of \textit{VC-dimension} is fundamental and the backbone of learning theory." \citeother{HopcroftKannan}. The \textit{VC-theorem} suggests a key role for the empirical risk optimization to achieve low overall risk. Although the result is independent from the distribution (\textit{distribution free}), it presumes a fixed distribution. This limitation is particularly painful in case of machine learning problems such as recommender systems or social networks analysis where the distribution is changing rapidly. An example for the seriousness of this issue is the problem of predicting the retweet cascade size of a twitter message, where even the labels of the known tweets have to be approximated because of the short time period of significance among others \citeauth{daroczy2015predict}.  

In the proof of the \textit{VC-theorem} by \citeother{devroye1996probabilistic} the main idea is to take advantage of the size of the training set and examine the difference between the empirical (in practice computable) risk taken over two disjoint sets, the training set and a same, finite sized sample set drawn independently from the fixed distribution, $X'=\{x_1',..,x_T'\}$. It can be proven that the left hand size of the inequality (eq. \ref{eq:vc_base}) is upper bounded as 

\begin{equation}
P(\sup_{f \in \mathcal{F}} \mid R_{emp}(f) - R_{true}(f) \mid > \epsilon ) \leq 2 P(\sup_{f \in \mathcal{F}} \mid R_{emp}(f) - R_{emp}'(f) \mid > \frac{\epsilon}{2})/
\end{equation}

According to this, lowering the difference between the empirical risk taken over the training set and an independent, but same sized set will most likely reduce the difference between the empirical and the overall risk. In some cases we will refer the additional set as the \textit{validation set} or simply as the \textit{test set}. In practice we can split the known set of observations into two subsets. The first we use to lower the empirical risk by searching for a well enough element in the chosen function class while the second part justifies our decision. 

At this point it may seem that the problem of binary classification is almost impossible to solve and more dependent on our initial choices (training set, function class selection, optimisation method and test set selection) than not. We could not be any closer to the reality. But before we go into the details about model selection and other very interesting questions we revise the measurement of the quality. So far we measured the quality of a model with a simple loss function (binary), but in practice there can be very diverse motivations why we want to classify. Since we no longer measure the quality over the training set we are free to define any suitable evaluational method. Next we review several widely used evaluational methods. 

\subsection{Evaluation methods}
\label{sect:eval}

In practice we can measure in many different ways the quality of a model on any set (such as the \textit{evaluation set} $X=\{x_1,..,x_T\}$) with known labels and known classification outcome (prediction)\citeother{TSK}. The first and most obvious measure is the binary loss function or the misclassification error. If we measure the ratio of the correctly classified samples to the cardinality of the evaluation set we get the \textit{accuracy}:

\begin{equation}
\text{Accuracy}= \frac{\mid\{x_i \mid f(x_i)=y_i, x_i \in X\}\mid}{\mid X\mid}.
\end{equation}

Notice how misleading it could be. Let us consider an evaluation set with three points with label ``+" and 997 points with label ``-". If the predicted class is ``-" for all, the accuracy will be still very high, $0.997$ not far from the perfect. In contrast a model that predicts the three ``+" examples correctly and three negative samples as ``+", the \textit{accuracy} is the same. To overcome this we can define other measures based on the four basic measures in the confusion matrix:

\begin{itemize}
\item True positive (TP): the number of correctly classified positive samples
\item True negative (TN): the number of correctly classified negative samples
\item False positive (FN): the number of incorrectly classified positive samples
\item False negative (FP): the number of incorrectly classified negative samples.
\end{itemize}

With this notation the \textit{Accuracy} is equal to (TP+TN)/(TP+TN+FP+FN). One of the useful measures is the \textit{precision} for a class, particularly for ``+", 

\begin{equation}
\text{Precision}_+ = \frac{\#\{x_i \mid f(x_i)=y_i, y_i=``+", x_i \in X\}}{\#\{x_i \mid f(x_i)=``+", x_i \in X\}} = \frac{TP}{TP+FP}
\end{equation}

or in other words the ratio of the correctly classified samples with a ``+" label to the number of positively classified examples. As a shortage, the \textit{Precision} ignores the misclassified positive examples, therefore if we measure the \textit{precision} we can also measure the \textit{recall} by replacing the denominator with the number of positive examples:

\begin{equation}
\text{Recall}_+ = \frac{\#\{x_i \mid f(x_i)=y_i, y_i=``+", x_i \in X\}}{\#\{x_i \mid y_i=``+", x_i \in X\}} = \frac{TP}{TP+FN}.
\end{equation} 

The importance of the \textit{recall} or the \textit{precision} depends on the problem. Imagine a medical screening to detect spreading of a disease. In this case our goal is to classify correctly any patient who has the disease or have maximal recall rate. In general a good balance between the two may be a useful indicator about the performance of the model. A common way is to calculate the harmonic mean of the \textit{precision} and the \textit{recall},

\begin{equation}
\text{F-measure}_+ = \frac{2 *Precision*Recall}{Precision + Recall}. 
\end{equation}

Reasonably if there are no correctly classified positive examples (both \textit{precision} and \textit{recall} are zero) we define the \textit{F-measure} as zero. In our original example the \textit{accuracy} is very misleading. If a model classifies all the examples as ``-" both the \textit{precision} and the \textit{recall} will be zero and therefore the \textit{F-measure} too. If the model classifies the three positive samples correctly and predicts only three negative samples as ``+", the \textit{precision}, \textit{recall} and \textit{F-measure} will be 0.5, 1 and $\frac{2}{3}$ respectively, clearly distinguishing the second model from the first. Nonetheless the \textit{F-measure} has shortcomings too. Suppose we have two models both predicting only ``-" class labels because of a high threshold. Lowering the threshold could result a better decision if the predictions for the positive samples are surpassing the predictions for the negative samples. The main drawback of all class confusion based measures is their dependence on the classification threshold. If in an application we may relieve certain amount of the samples that are most likely positive, the threshold and hence the recall and precision change dynamically with the available budget for relieving positive samples. A solution for it is to define a threshold independent evaluation score based on the actual continuous predictions. 

There are many ranking based models of quality but the most popular are still the \textit{Receiver Operating Characteristic Area Under Curve} (ROC AUC)\citeother{fogarty05roc}, the \textit{Average Precision} (AP) \citeother{TSK} and the \textit{normalized Discounted Cumulative Gain} (nDCG) \citeother{jarvelin2002cumulated}. They are only slightly different in general, but for particular problems each of them is more suitable than the others. The ROC and AP are only for binary classification while the nDCG can be used for regression type of problems such as rating prediction (recommendation). The ROC Curve plots the \textit{True Positive Rate} (TPR, equal to Recall) as the function of the \textit{False Positive Rate} (FPR = FP/(FP+TN)) by varying the decision threshold. An example ROC curve is shown in Fig.~\ref{fig:ROC}. The Area Under the ROC curve (AUC) is a stable metric to compare different machine learning methods since it does not depend on the decision threshold:

\begin{equation}
\text{AUC} = \sum_{t=1}^T \frac{TPR(t) (1-rel(t))}{N}
\end{equation}

where $TPR(t)$ is the $TPR$ at $t$, the Recall rate if the $t$ highest ranked samples are classified as a positive instance. $N$ is the number of negative samples in the evaluation set and $rel(t)$ is $1$ if the $t$-th ranked element has positive label, zero otherwise. As an intuitive interpretation, AUC is the probability that a uniformly selected positive sample is ranked higher in the prediction than a uniformly selected negative sample.

\begin{figure}\begin{centering}\includegraphics[scale=0.35]{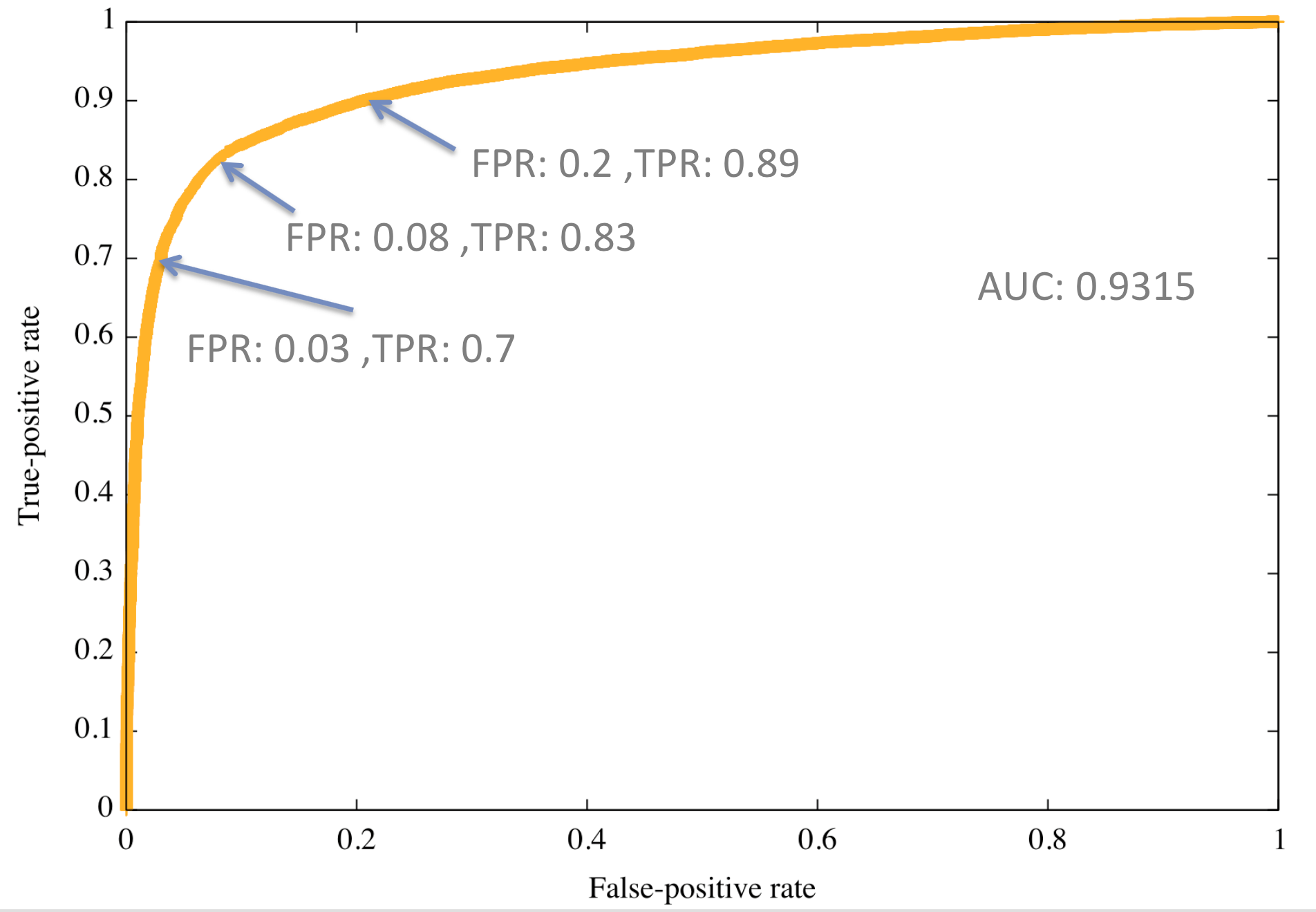}
\par\end{centering}
\caption{An example for the Receiver Operating Characteristic curve.}
\label{fig:ROC}
\end{figure}

If we replace the axis of the ROC curve to precision/recall and measure the area under curve similarly to ROC we get the \textit{Average Precision}:

\begin{equation}
\text{AP} = \sum_{t=1}^T \frac{Pr(t) rel(t)}{P}
\end{equation}

where $Pr(t)$ is the precision at $t$ and $P$ is the number of positive samples. Despite the similarities they are a bit different. Both are monotone increasing and scale between zero and one. The main difference is how they handle random lists. The AUC of the ROC curve will be around the diagonal, a meaningful $0.5$. This cannot be said about the AP, where the random point varies with the ratio of the negative and positive samples. Both indicate if the value is lower than the random point an invert list will perform better than random. 
	
By nDCG the relevance of a sample plays also a key part. The nDCG does not constrain the labels to be binary instead we assume to have a relevance value to each sample. The DCG of a ranked list is 
	
\begin{equation}
\text{DCG} = rel(1) + \sum_{t=2}^T \frac{rel(t)}{\log_2 t} 
\end{equation}	
	  
and the normalized DCG is the DCG divided by the ideal DCG (IDCG), formally

\begin{equation}
\text{nDCG} = \frac{DCG}{IDCG}.
\end{equation}	

In the next chapter we focus mainly on the empirical optimisation and examine some generative and discriminative probabilistic learning methods.
	 
\newpage

\section{Probabilistic models for unsupervised and supervised learning}
\label{sect:prob}

In statistical analysis empirical methods are well-known for estimating the parameters of a distribution. By classification type of problems we can, among others, estimate the underlying distribution of the samples (generative models) or directly estimate the probability of labelling with a conditional probability (discriminative models). Either way, we can define them as a learning process. The main difference is the target variable, by generative models the samples and by discriminative models the label. Since the generative models are assumed to ignore the labels with reason, we call them as label independent, unsupervised models. Similarly, the discriminative models are called supervised models. Important to mention, the \textit{VC-theorem} is only valid for the supervised case with binary class label, therefore the theorem indicates different treatment. We will see in the next chapter that despite the differences there is a natural connection between the generative and discriminative models. 

\subsection{Generative models}

As we mentioned briefly previously, one of the main problems of the statistical analysis is to determine a probabilistic model to fit a known set of observations. More formally, we have a set of observations $X=\{x_1,..,x_T\}$ in $\mathbb{R}^d$ and a probability density function (\textit{pdf}) as

\begin{equation}
p(x \mid \theta)
\end{equation}

where $\theta=\{\theta_1,..,\theta_N\}$ is the parameter set of the density function. Now let us define the \textit{likelihood} function to be equal to the probability of observing our sample set $X$:
 
\begin{equation}
\mathcal{L}(\theta \mid X=\{x_1,..,x_T\}) = p(X \mid \theta).
\end{equation}

Our main goal is to estimate the parameter set which maximizing the \textit{likelihood} function or the natural logarithm of it (\textit{log-likelihood}) over $X$, formally

\begin{equation}
\hat{\theta}_{mle} = \arg\max_{\theta \in \Theta} \mathcal{L}(\theta \mid X) = \arg\max_{\theta \in \Theta} \ln \mathcal{L}(\theta \mid X)
\end{equation} 

where we think of $X$ as a constant. 

This optimization problem is the so-called \textit{Maximum Likelihood Estimation (MLE)}. If our density function is simple enough, we can calculate the parameters analytically by setting the derivative of the \textit{log-likelihood} to zero. Unfortunately, there are important and widely used models where we cannot solve the derivative directly and therefore we need more refined methods to estimate the parameters. One of them is the \textit{Expectation-Maximization} \citeother{dempster1977maximum}.

\subsubsection{Expectation-Maximization}

By the \textit{EM} algorithm we assume that either our set of known observations or our model parameter set has missing latent variables or values. The \textit{EM} method is an iterative algorithm with two steps. In each iteration, first we calculate the expected value of the latent variables (\textit{E-step}) using the current estimation of the parameters, while in the second step (\textit{M-step}) we calculate the parameters which maximize the estimated likelihood over the known observations. We usually think of the known observations (or the training set) $X=\{x_1,..,x_T\}$ as independent samples drawn from the same distribution, thus the joint probability is 

\begin{equation}
p(X=\{x_1,..,x_T\} \mid  \theta) = \Pi_{t=1}^T p(x_t \mid \theta).
\end{equation}

Now, let us assume that the missing set of random variables $Y$ exists thus we define the complete \textit{pdf} and therefore the complete \textit{likelihood} as

\begin{equation}
\mathcal{L}(\theta \mid X,Y) = p(X,Y \mid \theta) = p(Y \mid X,\theta) p(X \mid \theta).
\end{equation}

With the left side and the first part of the right side we assume a joint relationship between the missing, latent variables and the known observations. If we think of $Y$ as a random variable drawn from an underlying distribution, we can define the following supplementary function:

\begin{gather*}
Q(\theta , \theta^{(i-1)}) = \mathbf{E}_{Y \mid X, \theta^{(i-1)}}[\log p(X,Y \mid \theta)]\\
= \int_{y \in Y} p(y \mid X, \theta^{(i-1)}) \log p(X,Y \mid \theta) dy
\end{gather*}

the expected value of the complete \text{log-likelihood} over $Y$ drawn from a distribution $p(y \mid X,\theta^{(i-1)})$ parametrized by the previous (thus a constant) estimation of the parameters ($\theta^{(i-1)}$) and $X$, another constant. With $Q(\theta , \theta^{(i-1)})$ we have a more manageable function to calculate the next estimation of the parameters:

\begin{equation}
\theta^{(i)} = \arg\max_{\theta \in \Theta} Q(\theta,\theta^{(i-1)}).
\end{equation}

Now we start again with the estimation of the latent variable and repeat the E- and M-steps until we stop for some reason. It can be proven that this two-step procedure is guaranteed not to decrease the original likelihood and converge to an unfortunately local maximum. For a detailed explanation about the theoretical background and applications of \textit{Expectation-Maximization} see \citeother{dempster1977maximum,mclachlan2007algorithm}. 

In the next sections we will examine two, for the latter chapters very important generative models, first the \textit{Gaussian Mixture Model} \citeother{mclachlan2007algorithm}, then the \textit{Markov Random Field} \citeother{geman1986markov}.
 
\subsubsection{Gaussian Mixture Model}
\label{section:GMM}

Approximation with a single multivariate normal distribution results regularly not only poor approximation error over the sub-population but it can prefer observations not in the original sample population. We have multiple options to overcome this disadvantage. One of them is expanding to mixture distributions. If we are mixing only finite number of Gaussian distributions our model will be a \textit{Gaussian Mixture Model (GMM)}. Formally, let N be the number of Gaussian distributions, each in $\mathbb{R}^d$ and their positive mixing weights $\omega = \{\omega_1,\omega_2,..,\omega_N\}$ with $\sum_{i=1}^N \omega_i = 1$. The probability density function of our mixture distribution is

\begin{equation}
p(x \mid \Theta) = \sum_{i=1}^N \omega_i g_i(x) 
\end{equation}

where $\Theta=\{\omega_1,..,\omega_N,\mu_i,..,\mu_N,\Sigma_1,..,\Sigma_N\}$ are the parameters of the mixture and the $i$-th $d$-dimensional multivariate normal distribution is

\begin{equation}
g_i(x) = \frac{1}{\sqrt{(2\Pi)^d} \mid \Sigma_i \mid} \mathrm{exp} ^{-\frac{1}{2}(x-\mu_i)^T \Sigma_i^{-1} (x - \mu_i)}.
\end{equation}

Unfortunately, in practice the number of parameters of our mixture distribution could be really huge. If we assume a $d$-dimensional underlying vector space, our parameter set has three parts:

\begin{enumerate}
\item $\omega = \{\omega_1,..,\omega_N\}$ is an $N$-dimensional real vector
\item $\mu = \{\mu_1,..,\mu_N\}$  is a set of $d$-dimensional mean vectors
\item $\Sigma = \{\Sigma_1,..,\Sigma_N\}$ is a set of $N$ covariance matrices each with $d^2$ elements.
\end{enumerate}
 
Although we can reduce the latter item practically to $Nd$ with diagonal covariance matrices (isotropic Gaussian), overall the number of parameters to estimate is still high: $\mathbf{card}(\Theta):= \mid\Theta\mid = N(1+2d)$. Worth to mention, it is not rare to describe high dimensional feature spaces with large number of parameters. For example, one of the well known and simplest clustering algorithm, the \textit{k-Means} has a similarly large parameter set with $Nd$ parameters \citeother{TSK}.

Unfortunately, for GMM the analytical way, directly solving the derivative of the log-likelihood, is not suitable to determine the parameters of the model. On the other hand there is a method which works particularly well for Gaussian Mixtures, the \textit{EM} \citeother{dempster1977maximum,mclachlan2007algorithm}. 


First, we define an adjuvant proportion (the latent variable as in \textit{EM}), namely the \textit{membership probability} for a sample $x_t \in X$ and the $i$-th Gaussian as

\begin{equation}
\label{eq:gmm_gamma}
\gamma_i(x_t) = \frac{\omega_i g_i(x_t)}{\sum_{j=1}^N \omega_j g_j(x_t) }.
\end{equation} 

It can be interpreted as the probability that sample $x_t$ was generated by the $i$-th Gaussian distribution, due to the fact that $\sum_i^N \gamma_i(x_t) = 1$ for all $x$. During the \textit{E-step} we estimate the \textit{membership probabilities} for the observations using the actual parameters.

In the next step we will use these expected values to determine a better estimation of the parameters (the \textit{M-step}). The smoothness property of the Gaussian Mixtures (and for all the density functions) allow us to optimize over the natural logarithm of the likelihood instead of the likelihood:

\begin{equation}
\mathcal{L}(X)=\log{ p(X \mid \Theta)} = \log{ \Pi_{t=1}^T {p(x_t \mid \Theta)}} = \sum_{t=1}^T \log{p(x_t \mid \Theta)}. 
\end{equation}

This yields us to an interesting gradient:

\begin{equation}
\frac{\partial{\mathcal{L}(X)}}{\partial{\theta_i}} = \sum_{t=1}^T {\frac{1}{p(x_t \mid \Theta)} \frac{\partial{p(x_t \mid \Theta)}}{\partial{\theta_i}}}
\end{equation}

Now, let us start the calculation of the gradient with the weight parameter:

\begin{gather*}
\frac{\partial{\mathcal{L}(X)}}{\partial{\omega_i}} = \sum_{t=1}^T {\frac{1}{p(x_i \mid \Theta)} \frac{\partial{p(x_i \mid \Theta)}}{\partial{\omega_i}} } = \sum_{t=1}^T {\frac{1}{\sum_{j=1}^N \omega_j g_j(x_t) } \frac{\partial{\sum_{j=1}^N \omega_j g_j(x)}}{\partial{\omega_i}} } \\
= \sum_{t=1}^T {\frac{g_i(x)}{\sum_{j=1}^N {\omega_j g_j(x_t)} }}.
\end{gather*}

There is a straightforward connection between the membership probability and our gradient, as

\begin{equation}
\label{eq:gmm_omega_grad}
	\frac{\partial{\mathcal{L}(X)}}{\partial{\omega_i}} = \sum_{t=1}^T \frac{g_i(x_t)}{\sum_{j=1}^N \omega_j g_j(x_t) } = \sum_{t=1}^T \frac{\gamma_i(x_t)}{\omega_i}. 
\end{equation}

The rest of the gradient vector respect to the mean and variance vectors, under assumption of diagonal covariance matrices (isotropic Gaussian), can be calculated similarly, as 

\begin{equation}
\begin{gathered}
\frac{\partial{\mathcal{L}(X)}}{\partial{\mu_{id}}} = \sum_{t=1}^T \frac{\omega_i }{\sum_{j=1}^N \omega_j g_j(x_t) } \frac{\partial{g_i(x_t)}}{\partial{\mu_{id}}} 
= \sum_{t=1}^T \frac{\omega_i g_i(x_t)}{\sum_{j=1}^N \omega_j g_j(x_t) } \frac{(\mu_{id}-x_{td})}{\sigma_{id}^2} \\
 = \sum_{t=1}^T \gamma_i(x_t) \frac{(\mu_{id}-x_{td})}{\sigma_{id}^2}.
\end{gathered}\label{eq:gmm_mean_grad}
\end{equation}

and 

\begin{equation}
\begin{gathered}
\frac{\partial{\mathcal{L}(X)}}{\partial{\sigma_{id}}} = \sum_{t=1}^T \frac{\omega_i }{\sum_{j=1}^N \omega_j g_j(x_t) } \frac{\partial{g_i(x_t)}}{\partial{\sigma_{id}}} \\
 = \sum_{t=1}^T \gamma_i(x_t) (\frac{(x_{td}-\mu_{id})^2}{\sigma_{id}^3} - \frac{1}{\sigma_{id}}).
\end{gathered}\label{eq:gmm_sigma_grad}
\end{equation}

Next we sketch the exact procedure of the \textit{EM} algorithm. In the first iteration we set the parameters of the \textit{GMM} randomly. During the $k$-th iteration we estimate the \textit{membership probabilities} (\textit{E-step}) considering the parameters estimated during the last iteration:

\begin{equation}
\gamma_i^{(k)} (x_t) = \frac{\omega_i^{(k-1)} g_i^{(k-1)}(x_t)}{\sum_{j=1}^N \omega_j^{(k-1)} g_j^{(k-1)}(x_t) }.
\end{equation}

where $g_i^{(k-1)}$ is $\mathcal{N}_i (\mu_i^{(k-1)},\sigma_i^{(k-1)})$. Because we think of this probabilities as already estimated values, we can use them to analytically compute the parameters. If we set the expressions (eq. \ref{eq:gmm_mean_grad}) and (eq. \ref{eq:gmm_sigma_grad}) to zero, we get very intuitive formulas:

\begin{equation}
\mu_{id}^{(k)} = \frac{\sum_{t=1}^T \gamma_i^{(k)}(x_t) x_{td}}{\sum_{t=1}^T \gamma_i^{(k)}(x_t)}
\end{equation}

and 

\begin{equation}
\sigma_{id}^{(k)} = \sqrt{\frac{\sum_{t=1}^T \gamma_i^{(k)}(x_t) (x_{td} - \mu_{id}^{(k)})^2}{\sum_{t=1}^T \gamma_i^{(k)}(x_t)}}
\end{equation}

The mixture parameter is a bit more trickier, because setting (eq. \ref{eq:gmm_omega_grad}) to zero wont help us, for more details see \citeother{mclachlan2007algorithm}. Ultimately, the formula to update the mixture weights is just as illustrative as the above expressions:

\begin{equation}
\omega_{i}^{(k)} = \frac{\sum_{t=1}^T \gamma_i^{(k)}(x_t)}{T}
\end{equation}

or in other words, the mean of the \textit{membership probabilities} for the $i$-th Gaussian. 

The EM algorithm will alternate between the two steps and as we mentioned in the previous section there are theoretical guarantees of convergence, hence a direct implementation will not work or will be slow in particular cases. The main reason is that the denominator in the definition of the \textit{membership probability} (eq.~\ref{eq:gmm_gamma}) can easily underflow even in fp64 (64 bit precision, aka double) and especially in large dimensional spaces. One solution is to modify the expression. Let us reformulate the value $\omega_i g_i(x)$ as $\mathrm{e}^{m_i(x)}$ where $m_i(x)=\ln \omega_i-\ln \sqrt{(2\Pi)^d} \mid \Sigma_i \mid -\frac{1}{2}(x-\mu_i)^T \Sigma_i^{-1} (x - \mu_i)$. If we put it back to (eq. \ref{eq:gmm_gamma}) we get 

\begin{gather*}
\gamma_i(x) = \frac{\mathrm{e}^{m_i(x)}}{\sum_{j=1}^N \mathrm{e}^{m_j(x)}} \\
= \frac{\mathrm{e}^{m_i(x)}}{\mathrm{e}^{M(x)} \sum_{j=1}^N \mathrm{e}^{m_i(x)-M(x)}}
\end{gather*}

where $M(x) = \max_{j} m_j(x)$. Because one of the exponent is equal to this maximum, at least this element in the summation will be equal to $1$ and therefore the \textit{membership probability} for this Gaussian will be non zero for sample $x$. With this trick we may avoid having zero \textit{membership probabilities} in practice for all the samples. This recognition can also help us to decrease the number of calculations during the optimization. If one of the \textit{membership probabilities} of the $i$-th Gaussian for a sample $x$ is equal to $1$ (in our available precision) we could avoid including the particular sample during the maximization step for other Gaussians and decrease the obligatory calculations. In the latter chapters we will see that this approximation of the \textit{membership probability} is not even rare in practice. 

\subsubsection{Markov Random Fields}
\label{sect:mrf}

As we mentioned in the previous section the Gaussian Mixture is powerful method to model the prior distribution of a single observation. Nevertheless there we can easily think of structures over the samples (for example a website) or samples originated from a complicated structure of sub-samples, such as words or image patches. In such a case we can model the overall observation (a set of samples) as a set of random variables each drawn from a prior probability distribution. If our underlying prior model is a Gaussian Mixture we assume exchangeability for the inner samples of the sample \citeother{perronnin2007fisher}. This conditional independence gives us the advantage of variability in the layout of the sub-samples, although there are some structures where the composition is significant \citeauth{daroczy2013fisher}. 

Now let us capture the relation between the samples with a graphical model or Random Field: the vertices are the set of samples (random variables) and we connect samples if there is a known connection between them. There are several kinds of Random Fields, among them are the Gaussian and the Markov Random Field. One of the main characteristics of the Gaussian Random Field is the assumption of conditional independence between the random variables (rough interpretation is a graph without edges). In comparison, by the Markov Random Field we can also capture connections between samples with an undirected graph whilst following both local and global Markov property. 

Formally, let be $X$ an observation with $T$ corresponding observations: $X=\{x_1,..,x_T\}$. In this section we will focus on problems where we have a structural observation containing finite number of observations, for example an image with a set of keypoints, regions or pixels \citeother{geman1986markov,sziranyi2000image}. In this case, the Random Field has $T$ vertices and we connect two vertices with an edge if they are neighbours according to our knowledge (see Fig.~\ref{fig:mrf_examp}). The local Markov property means that an observation is conditionally independent of the non-neighbour observations: 

\begin{equation}
p(x_i | X=\{x_1,..x_{i-1},x_{i+1}..,x_T\}, \theta) = p(x_i | N_{x_i},\theta)
\end{equation}

\begin{figure}
\centerline{
\includegraphics[scale=.3]{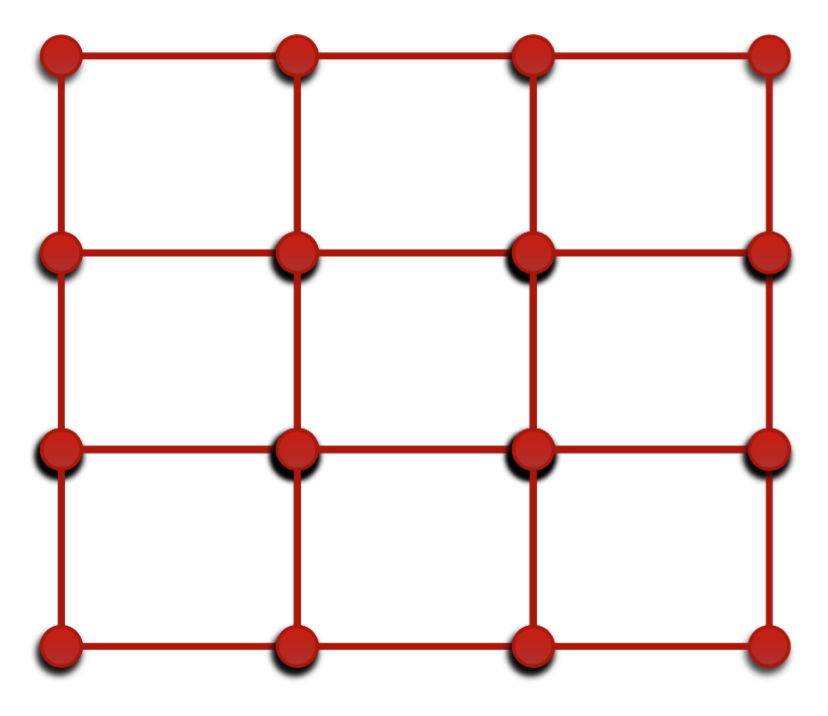}}
\caption{A simple 2d layout of an image.}
\label{fig:mrf_examp}
\end{figure}

where $N_{x_i}$ is the neighbourhood of $x_i$, the set of nodes adjacent to $x_i$. The global Markov property denotes that any two disjoint subsets $X_A, X_B \subset X$ are conditionally independent given a non-empty separate set $X_C$ so that any path between each node from $X_A$ to any node in $X_B$ will include at least one node from $X_C$ or in other words if we remove $X_C$ from the graph there will be no paths connecting $X_A$ and $X_B$ (see Fig~\ref{fig:mrf_ci}). The smallest set of nodes for a node, which is making the node conditionally independent from all other nodes in the graph, is called the \textit{Markov blanket} of the node. This set is equivalent with the neighbourhood of the node. The last property is the pairwise Markov property, namely if two separate nodes are not immediate neighbours then they are conditionally independent given the rest of the nodes in the graph \citeother{Hammersley1971markov}.

The Hammersly-Clifford theorem \citeother{Hammersley1971markov} states that the joint probability has a Gibbs distribution form,

\begin{align}
\label{eq:prob_gibbs}
    P(X \mid \theta) = \frac{\mathrm{e}^{U(X\mid \Theta)}}{Z(\theta)}
\end{align} 

where $U(X \mid \Theta)$ called as the energy function and $Z(\theta) = \int_{X \in \mathcal{X}} \mathrm{e}^{U(X \mid \theta)} \mathrm{d}X$ is the partition function (or normalization constant), the expected value of the energy function over our generative model. Worth to mention, if we define the energy function as the natural logarithm of a \textit{pdf}, $Z$ is trivially equal to $1$ and therefore we get back the original $pdf$ as expected. 

According to \citeother{Hammersley1971markov,besag1974spatial} if our \textit{MRF} can be factorized over the set of cliques ($C_X$) in the graph than our $pdf$ has a from of 

\begin{equation}
p(X \mid \theta) = \Pi_{c \in C_X} p(c \mid \theta) = \frac{1}{Z} \mathrm{e}^{\sum_{c \in C_X} U(c \mid \theta)}. 
\end{equation}

\begin{figure}
\centerline{
\includegraphics[scale=.3]{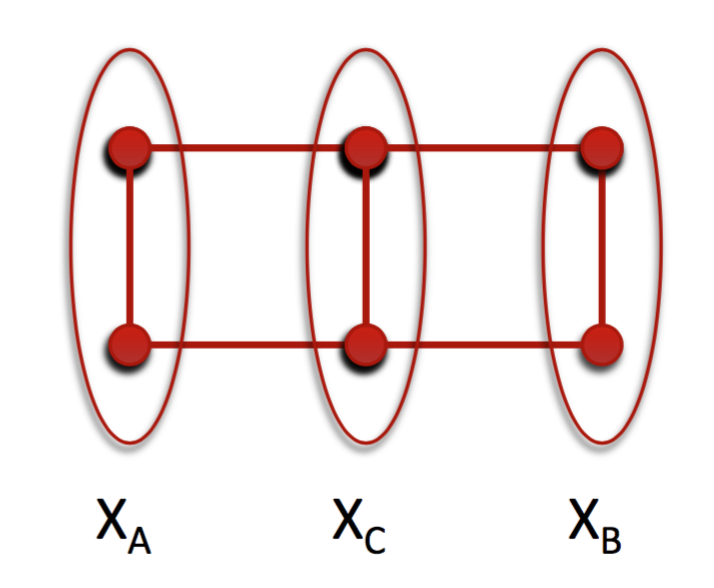}}
\caption{There are no path between sets $X_A$ and $X_B$ without at least one point from set $X_C$.}
\label{fig:mrf_ci}
\end{figure}

Compared to \textit{GMM} the difficulty of estimation of the parameters rather depends on the energy function and consequently on the normalization constant. Despite a wide variety of methods can be used to determine the parameters with inference (though the \textit{Maximum-a-Posteriori} inference is $NP$-hard \citeother{taskar2004learning}) or approximation with simulated annealing \citeother{geman1986markov}. There are some type of energy functions where the simple \textit{Maximum Likelihood estimation} is also an option. For more details about the Markov Random Fields and their theoretical background please check out \citeother{li2009markov}. 

In the latter chapters we will discuss some concrete graphs considering the main perspective (the classification) and focus on the necessity of determination of the parameters. Now, let us look at the discriminative models starting with a simple classifier, the logistic regression.

\subsection{Discriminative models}
\label{sect:discr}

Classification of instances is one of the main problems of machine learning, but the discriminative models also include regression problems. By both our goal is to assign a value to any sample we can observe like decide whether a tree is present at a photograph or not. The main difference between classification and regression is the properties of the target variable. As by the generative models we assume a known set of observations (or training set) $X=\{x_1,..,x_T\}$ in $\mathbb{R}^d$ now with an additional continuous variable for each of the observations, namely our target $y=\{y_1,..,y_t\}$. In a probabilistic sense our goal is to maximize the likelihood of the original target given the known observations:

\begin{equation}
\label{eq:disc_base}
p(Y \mid X, \theta) = \mathcal{L}(\theta \mid X,Y).
\end{equation}

If our target is a nominal variable (it is from a finite set) we call the problem as classification otherwise regression. It is very common that even if our original target variable is neither nominal or nominal but not binary we disassemble it into binary problems. The main reason is the large variety of methods which are mainly for binary problems and the \textit{VC-theorem}. Therefore in this chapter we will focus only on binary classification. 

\subsubsection{Logistic regression}

Let us start with a simple assumption about our distribution. In binary case first we pick one of the classes arbitrary. Then we seek for the distribution $p(x)$ for the chosen class (for example ``+" or ``1") and $1-p(x)$ for the other one (``-" or ``0"). Since the name of the classes has no meaning, we will refer the chosen class as ``+". 

At first we would like to define a linear, thus easily differentiable function of the given random variables:

\begin{equation}
f_{LR}(x) = x^T \omega + \omega_0 
\end{equation}

where $x , \omega \in \mathbb{R}^d$ and $\omega_0$ is a scalar. 

The linear regression (LR), a simple linear combination of the input variables, is very well known and studied as one of the basic regression models \citeother{cristianini2000introduction,TSK} but as approximation of the conditional distribution it is not suitable because unbounded. One of the common solutions is a modification of the original distribution with the logit transformation into an unbounded function, which we approximate with a linear combination:

\begin{equation}
\ln \frac{p(x)}{1-p(x)} \approx x^T \omega + \omega_0.
\end{equation}

Solving the equation for the original probability will result the \textit{sigmoid} function, formally for a sample $x$ 

\begin{equation}
\label{eq:lin_dec}
p(x \mid \omega ) = sigm(x \mid \omega) = \frac{1}{1+\mathrm{e}^{-(x^T\omega + \omega_0)}}.
\end{equation}

This function has a lot of good properties: it is differentiable, strict monotone increasing, symmetric to zero and has finite limits ( in $-\infty$ the limit is zero and in $+\infty$ the limit is $1$). By classification our goal is to minimize a predefined error function over the training set. In our case we want to maximize the probability of class ``+" for observations with class label ``+" and minimize for observations with class label ``-". Formally, if we think of the training set as an independent set of samples, we want to maximize 

\begin{equation}
\mathcal{L}(\omega \mid X) = p(X \mid \omega)=\Pi_{x \in X^{(+)}} p(x \mid \omega) \Pi_{x \in X^{(-)}} (1-p(x \mid \omega)) 
\end{equation}

where $X^{(+)}$ is the set of observations with class label ``+" (or ``+1") and similarly $X^{(-)}$ is the set of observations with class label ``-" (or ``0"). The derivation of the log-likelihood in case of $i>0$ leads us to 

\begin{gather*}
\label{eq:lr_end}
\frac{\partial \ln \mathcal{L}(\omega \mid X)}{\partial \omega_i} = \sum_{x_t \in X^{(+)}} \frac{\partial \ln p(x_t \mid \omega)}{\partial \omega_i} + \sum_{x_t \in X^{(-)}} \frac{\partial \ln (1- p(x_t \mid \omega))}{\partial \omega_i} \\
= \sum_{x_t \in X^{(+)}} (1 - p(x_t \mid \omega)) x_{ti}- \sum_{x_t \in X^{(-)}} p(x_t \mid \omega) x_{ti} \\
= \sum_{x_t \in \{X^{(-)},X^{(+)}\} } (y_t - p(x_t \mid \omega)) x_{ti}
\end{gather*}

where $y \in \{0,1\}$ is the class label respectively. The derivative respect to $\omega_0$ can be derived with an expansion of the sample space with $x_{t0}=1$ (an expansion to $d+1$ dimensional space) without altering the result. During the calculation we used the fact that the derivative of the sigmoid function is $p(x \mid \omega)(1-p(x \mid \omega))$. Similarly to the Gaussian Mixtures we cannot solve it analytically, but we can use gradient descent or Newton's method to find a local optimum \citeother{cristianini2000introduction}. 

As one of the basic discriminative models, the Logistic Regression has some interesting advantages. The end model is a hyperplane which separates the samples from each other. If we look into the sigmoid function, we can see that as we move away from the hyperplane the probability (the value of sigmoid) will be closer to $1$ or zero depending on the halfspace we are in and it is $0.5$ iff we are on the hyperplane (undecided). In short, the probability and therefore the gradient largely depends on the distance from the hyperplane and during optimization we prefer hyperplanes as far as possible from the training samples while correctly classify. Despite this, we greatly constrained ourselves with linearity. There are many possible ways for extensions, but before we approach the problem, we examine an important model, the \textit{Support Vector Machines} to find a bit different, but also good separating hyperplanes not necessary in the original feature space. 

\subsubsection{Maximal margin and kernel models}
\label{sect:svm}

We discussed previously that we want to push the hyperplane away from the training samples as possible while predict the proper class labels. In this Section we reformulate the problem by introducing the margin of a hyperplane ($\omega$, see Fig.~\ref{fig:max_margin}) \citeother{boser1992training} defined as 

\begin{equation}
\rho_{\omega}(X)= \min_{x \in X} \frac{|x^T \omega|}{\mid \mid \omega \mid \mid}.
\end{equation}

\begin{figure}
\centerline{
\includegraphics[scale=.3]{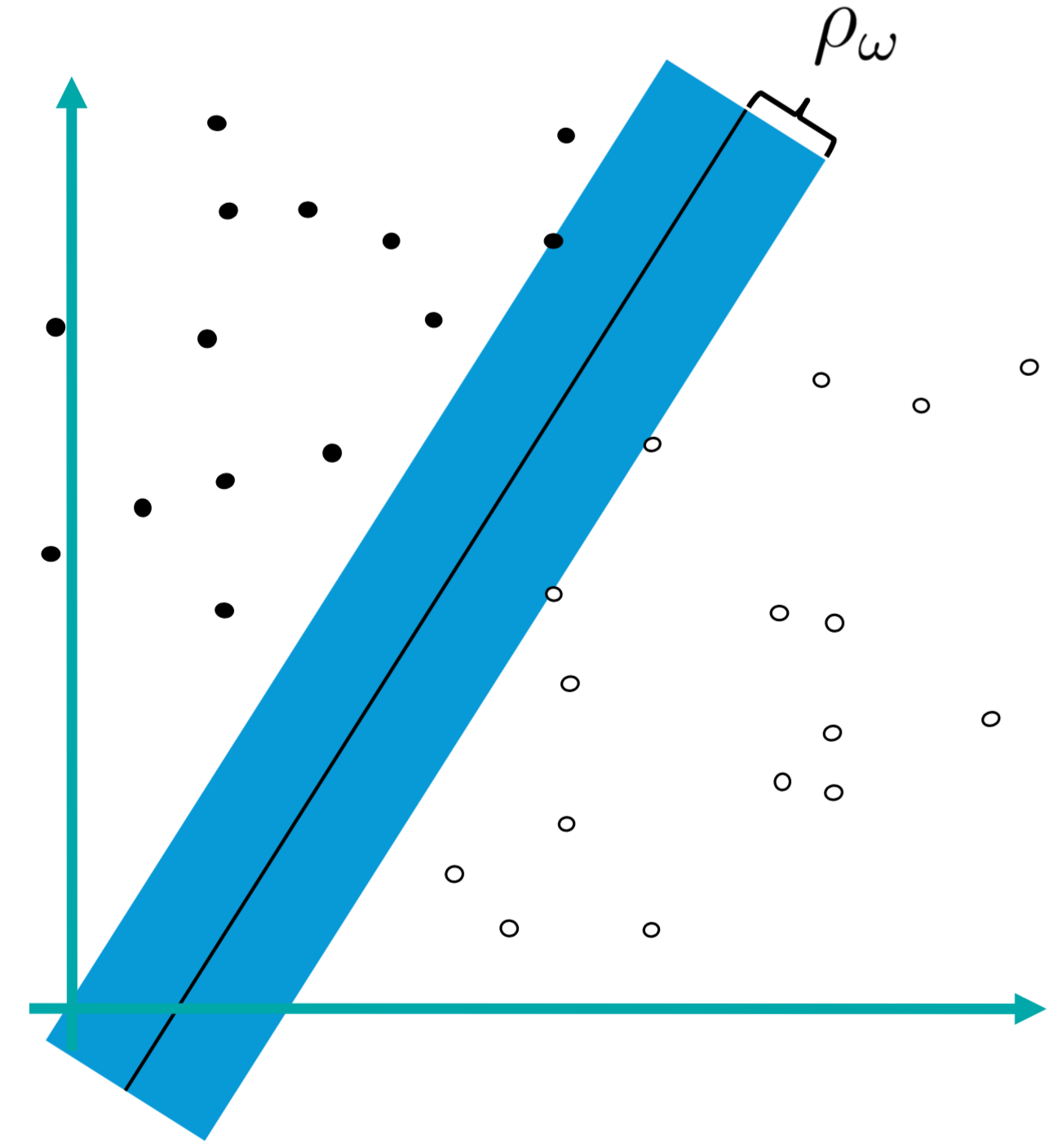}}
\caption{Margin of a hyperplane.}
\label{fig:max_margin}
\end{figure}

The maximum margin problem is to maximize the margin while solving the original labeling problem:

\begin{gather*}
\omega* = \arg \max_{\omega \in \Omega} \rho_\omega(X) \\
\text{subject to } y_t x_t^T \frac{\omega}{\mid \mid \omega \mid \mid} > 0 , \forall t
\end{gather*} 

where the class label is $y_t \in \{-1,+1\}$. Because of the monotonicity of the sigmoid function we can explain the maximal margin problem in a  probabilistic sense too with

\begin{equation}
\begin{gathered}
\omega* = \arg \max_{\omega \in \Omega}\min_{x \in X}\mid p(x \mid \omega) - 0.5 \mid \\
\text{subject to } y_t x_t^T \frac{\omega}{\mid \mid \omega \mid \mid} > 0 , \forall t 
\end{gathered}\label{eq:svm_orig_opt}
\end{equation}

i.e. maximizing the minimum uncertainty (difference from the undecided probability). 

By definition 

\begin{equation}
\label{eq:svm_rho}
y (x^T \frac{\omega}{\mid \mid \omega \mid \mid}) \geq \rho_{\omega}
\end{equation}

for all $(x,y)$ and therefore we can define a new hyperplane with $\omega' = \frac{\omega}{\mid \mid \omega \mid \mid \rho_{\omega}}$ for which $y (x^T \omega')\geq 1$ holds (for simplicity we will refer $\omega'$ as $\omega$). The original maximization problem is equivalent to minimization of the norm of the new normal vector with a new constrain, formally

\begin{equation}
\label{eq:svm_opt}
\begin{gathered}
\text{minimize}_{\omega} \frac{1}{2} \mid \mid \omega \mid \mid^2 \\
\text{ subject to } y_t (x_t^T \omega) \geq 1 , \forall t
\end{gathered}
\end{equation}

where we take the square of the norm and multiply it with a positive constant for a simpler derivative. 

This convex, quadratic optimisation problem cannot be solved directly because of the constraints. Fortunately, we can treat it as a Lagrangian problem \citeother{cristianini2000introduction} since both the constraint and the value function are continuously differentiable. Formally let be $\alpha_t \geq 0, \forall t$ the set of primal variables of the Lagrangian (multipliers) then the Lagrangian function is

\begin{equation}
L(\omega,\alpha) = \frac{1}{2} \mid\mid \omega \mid\mid ^2 - \sum_{t=1}^T \alpha_t (y_t (x_t^T \omega)-1) 
\end{equation}

and the derivative respect to $\omega$ will be zero at points where the original optimisation has usually an optimum (note, not all cases). 

After a simple derivation we get an interesting stationary point, 

\begin{equation}
\frac{\partial L(\omega,\alpha)}{\partial \omega_i} = \omega_i - \sum_{t=1}^T y_t \alpha_t x_{ti} = 0\\
\end{equation}

thus we can claim that the normal vector is a linear combination of the training samples, $\omega = \sum_{t=1}^T \alpha_t y_t x_t$. Worth to mention, if there is a orthogonal component of the normal vector to all the training samples, the scalar product will not change for any therefore this claim does not violate the above inequalities. If we put back the results, we obtain the primal form as

\begin{equation}
\begin{gathered}
L(\omega,\alpha) = L(\alpha) = \frac{1}{2} \sum_{i=1}^T \sum_{j=1}^T \alpha_i \alpha_j y_i y_j x_i^T x_j - \sum_{i=1}^T \alpha_i \alpha_j y_i y_j x_i^T x_j + \sum_{t=1}^T \alpha_t  \\
= \sum_{t=1}^T \alpha_t - \frac{1}{2} \sum_{i=1}^T \sum_{j=1}^T \alpha_i \alpha_j y_i y_j x_i^T x_j 
\end{gathered}
\label{eq:svm_dual}
\end{equation}

and the final optimisation (as a dual form) will be

\begin{gather*}
\text{maximize}_{\alpha} L(\alpha) = \sum_{t=1}^T \alpha_t - \frac{1}{2} \sum_{i=1}^T \sum_{j=1}^T \alpha_i \alpha_j y_i y_j x_i^T x_j \\
\textit{subject to } \alpha_i \geq 0, \forall i \\
\sum_{t=1}^T \alpha_t y_t = 0, \forall t
\end{gather*}

The second constraint is originating from the derivative of the Lagrangian respect to the bias ($\omega_0$) since $x_{i0} = 1, \forall i$. We know from the Karush-Kuhn-Tucker conditions (KKT \citeother{kuhn1951nonlinear,karush1939minima}) that the optimum solution for the above problem includes positive Lagrangian multipliers such that 

\begin{equation}
\alpha_i (y_i (x_i^T \omega) -1) = 0, \forall i.
\end{equation}

It follows interesting consequences. First, this condition for the multipliers means that if a training example is not on the hyperplane parallel to the optimal hyperplane with a distance of the margin then the example has to have zero as a multiplier. Cortes and Vapnik \citeother{Vapnik95} named the training points with non-zero multipliers as \textit{Support vectors (SV)}. Therefore there are unnecessary points since their coefficient in the linear combination is also zero,

\begin{equation}
\omega = \sum_{t=1}^T \alpha_t x_t = \sum_{x_i \in SV} \alpha_i x_i.
\end{equation}

So far we discussed methods to find ideal separating hyperplanes for linearly separable problems although in practice it is rarely the case. We can handle non-separable situations with two ideas. First with an additional variable called the \textit{slackness} variable introduced by Cortes and Vapnik \citeother{Vapnik95} and then with transformation of the features. Let us measure the penalty for a training example inside the margin with the distance from the margin then we can reformulate the optimization into a \textit{1-Norm Soft Margin problem} as 

\begin{equation}
\label{eq:softsvm_opt}
\begin{gathered}
\text{minimize } \frac{1}{2} \mid \mid \omega \mid \mid^2 + C \sum_{t=1}^T \xi_i \\
\text{subject to } y_i(x_i^T\omega) \geq 1 - \xi_i, \forall i \\
\xi_i \geq 0, \forall i
\end{gathered}
\end{equation}

where $C$ is a previously determined constant and the Lagrangian function is  

\begin{equation}
L(\omega,\alpha,\beta) = \frac{1}{2} \mid\mid \omega \mid\mid ^2  + C \sum_{t=1}^T \xi_i - \sum_{t=1}^T \alpha_t (y_t (x_t^T \omega)-1 + \xi_t) - \sum_{t=1}^T \beta_t \xi_t 
\end{equation}

with $\beta$ as the additional Lagrangian multiplier for the second constraint. As previously we set the gradients to zero  

\begin{gather*}
\frac{\partial L(\omega,\xi,\alpha,\beta)}{\partial \omega_i} = \omega_i - \sum_{t=1}^T \alpha_t y_t x_{ti} = 0 \\
\frac{\partial L(\omega,\xi,\alpha,\beta)}{\partial \xi_i} = C - \alpha_i -\beta_i = 0, \forall i
\end{gather*}

Interestingly, the gradient respect to $\omega$ does not include neither $\xi$ nor $\beta$ and identical to gradient in case of non-soft margin (eq. \ref{eq:svm_opt}). Since both $\alpha$ and $\beta$ are positive, the gradient respect to $\xi$ lead us to an interesting upper bound for $\alpha$, namely $0 \leq \alpha_i \leq C, \forall i$. Therefore the KKT conditions are also similar, but not the same as

\begin{gather*}
\alpha_i (y_i x_i^T \omega -1 + \xi_i)=0, \forall i \\
\xi_i (C - \alpha_i) =0, \forall i. 
\end{gather*}

The latter suggests that if a sample is inside the margin then the corresponding $\alpha$ is equal to $C$. At the end we will end up with the same maximization as before only with an additional constraint about the upper bound of the Lagrangian multipliers \citeother{Vapnik95} 

\begin{equation}
\begin{gathered}
\text{maximize } W(\alpha) = \sum_{t=1}^T \alpha_t - \frac{1}{2} \sum_{i=1}^T \sum_{j=1}^T \alpha_i \alpha_j y_i y_j x_i^T x_j \\
\text{subject to } 0 \leq \alpha_i \geq C , \forall i.
\end{gathered}
\end{equation}

Since the derivatives are very simple as

\begin{equation}
\frac{\partial W(\alpha)}{\partial \alpha_i} = 1 - y_i \sum_{t=1}^T \alpha_j y_j x_i^T x_j
\end{equation}

we can maximize with gradient ascend or taking advantage of the sparsity of $\alpha$ \citeother{cristianini2000introduction}. 

We discussed previously that the \textit{VC dimension} of the linear separator is $d+1$ which is very low in comparison to other kind of separators such as polynomial where we can always find a degree to surpass the size of a fixed sized training set. Notice, both the optimisation (eq. \ref{eq:svm_dual}) and the prediction (eq. \ref{eq:lin_dec}) can be reformulated with only inner products over the training samples. Cortes and Vapnik \citeother{Vapnik95} suggested to replace the original inner product with a \textit{kernel function} over a given feature mapping. In many cases the kernel can actually be viewed as an inner product:  where the feature vectors $\phi_x, \phi_y \in \mathbb{R}^k$ are obtained via a fixed, problem specific map $x\mapsto \phi_x$ which describes the examples $x$ in terms of a real vector of length $k$. The really interesting part if we have a closed formula to calculate the inner product (the \textit{kernel values}) without computing the transformation we can use very large dimensional mappings (such as the polynomial) or even infinite dimensional transformations in practice. 

More interesting that any positive semi-definite matrix may be used as a kernel function (for proof see \citeother{HopcroftKannan}). The simple algorithm for 1-Norm Soft Margin with a predetermined kernel function can be seen below. 

\vspace{1cm}
\begin{center}
\fbox{
\begin{minipage}{0.9\linewidth}
\noindent \textbf{\textit{Algorithm 1-Norm Soft Margin SVM}}\\
Given a training set $X=\{x_1,..,x_T\}$ with $x_i \in \mathbb{R}^d, \forall i$, a positive real valued constant C, a positive real valued learning rate $\eta$ and a kernel function $K(x,y)=\phi(x)^T \phi(y)$\\ \\
\qquad $\alpha \leftarrow 0$ \\
\qquad repeat \\
	\qquad \qquad for $i=1$ to $T$
	\qquad \qquad \begin {enumerate}
	\item $\alpha_i^{new} \leftarrow \alpha_i^{old} + \eta \frac{\partial W(\alpha)}{\partial \alpha_i}  = \alpha^{old} + \eta (1 - y_i \sum_{t=1}^T \alpha_t y_t K(x_t,x_i))$
	\item if $\alpha_i <0 $ then $\alpha_i \leftarrow 0$ \\
	      else \\
	     if $\alpha_i >C $ then $\alpha_i \leftarrow C$
	\end {enumerate}
	end for \\
until we reach a stopping criterion \\
return $\alpha$ \\
\end{minipage}
}
\end{center}
\vspace{1cm} 

In the next chapter we will discuss a special kernel function, the Similarity kernel, a special case of Fisher kernel, which we will use for various problems in the latter chapters. 

\newpage

\section{Similarity kernel}
\label{sect:simker}

Kernel methods~\citeother{shawe2004kernel} are popular in various fields of data mining and knowledge discovery such as classification, regression, clustering or dimensionality reduction. While kernel methods are well-founded from the theoretical point of view, as we discussed in the previous section, the selection of the appropriate kernel (e.g.\ polynomial, Radial Basis Function or application specific ones, for more see \citeother{cristianini2000introduction}) is essential in many real-world tasks. 

Learning optimal hyperparameters of these kernels may be computationally prohibitive in case of large datasets. Furthermore, even if the best hyperparameters have been found, the resulting kernel may not completely reflect the true structure of the data, which is likely to manifest in suboptimal results, regardless of the particular analysis task. 

The selection of feature set dependent distance or similarity metrics is crucial for learning. Although selecting and in some cases computing the potential metrics may constitute a challenging task, once metrics are defined, they can often be used to transform the original complex optimization problem to a less challenging one (see Section \ref{sect:svm}). Since SVM convergence mainly depends on the metric, certain results address kernel selection for convergence considerations \citeother{simplemkl} and some of the SVM solvers are taking advantage of knowing the exact kernel function reaching faster convergence times such as the dual coordinate descent method for large scale linear kernel based maximal margin \citeother{hsieh2008dual}. In this section, however, we focus on classification accuracy and seek for the kernel that best characterizes the data set, decoupled from the actual SVM optimization procedure.

An additional and interesting opportunity arise from the freedom of selecting similarity or distance metrics to define kernel functions. In a number of practical applications such as image or document classification, we have to learn over multiple representations, often with different kernel functions. Images are often enriched by text description or other non-visual metadata such as geo-location or date, yielding a multimodal classification task with visual, text, and geospatial modes. Another example is Web classification \citeother{castillo2006know}, where text and linkage can be considered as two independent modalities.

In order to address the kernel selection problem, we define a principled meta-kernel learning approach based on Fisher information theory. As we will see in the next section, the Fisher Information matrix is the foundation of a ``natural" kernel function over generative models \citeother{cencov1982}. The approach is computationally inexpensive and needs no wrapper methods for learning a kernel over multiple modalities. The section is organized as follows: first, in \ref{sec:related_simker} we discuss the related literature of multimodal learning and describe the factor graph of the similarity kernel in Section~\ref{sec:sim_graph}. Next, in \ref{sect:fisher} we review the theoretic background of the Fisher kernel, than we introduce a suitable Fisher kernel over our graph. 

\subsection{Related work and problem}
\label{sec:related_simker}

In many cases, one single kernel may perform suboptimally. In the last decade, this issue has primarily been addressed in the framework of multiple kernel learning (MKL \citeother{bach2004multiple,lanckriet2004learning,sonnenburg2006large,gonen2011multiple}). The method we describe is substantially different from MKL in several respects. First, in comparison to Bach et al. \citeother{simplemkl} we will assume that all of representations are conducive to the training procedure. Second, in order to devise a computationally efficient approach, we only calculate the distance between each instance and a small set of sample instances. Last, but not least, our approach runs only one SVM optimization procedure while most MKL approaches are wrapper approaches and therefore they execute large amount of SVM optimization.    

Selecting the appropriate kernel under multiple modalities can be seen as a special case of the MKL problems where the kernels are computed on different feature sets. Having multiple number of kernels due the representations via different modalities with previously selected kernel functions, we can modify the SVM dual form (eq. \ref{eq:svm_dual}) into a multiple kernel learning problem:
\begin{gather*}
    \text{maximize } L_{Dual}(\alpha,\beta) = \sum_{t=1}^T \alpha_t -\frac{1}{2}\sum_{i=1}^T\sum_{j=1}^T\alpha_i\alpha_jy_iy_j\sum_{n=1}^N\beta_nK_n(x_i,x_j) \\
\text{subject to } \sum_{t=1}^T y_t \alpha_t = 0, \forall t \\
\text{with } \alpha_t \geq 0  
\end{gather*} 
 where $N$ is the number of the basic kernels and $K_n (x_i , x_j )$ is the $n$th kernel function with $\beta_n$ as weight. 

In \citeother{simplemkl} the MKL problem is solved with an iterative, wrapper like, sparse algorithm where in each iteration they solve a standard SVM dual problem and update the weights of the basic kernels. Instead of optimizing multiple times over the training set with a combination of kernel functions, we will define a novel kernel function combining all the representations into a single feature space. The method is wrapper-free and is hence scalable for large data sets as well.

Late fusion approaches, see e.g.~\citeother{ye2012robust,liu2014selective}, combine the outputs of various kernel methods. Usually, they take an estimated certainty of each kernel method into account. In contrast to late fusion, our approach learns a kernel over various modalities instead of combining the outputs of different kernel methods.

Let be our starting point simply a set of modalities with proper metrics (distance functions). In other worlds, without any exact considerations about our underlying generative model, our goal is to determine a suitable probabilistic density function based on our set of modalities and a set of known observations ($S$), more formally 

\begin{align}
	p(X|S,\theta)
\end{align}

where $\theta$ is the set of parameters of our model. As our model approximate the probabilistic density function according a set of known observations, we will refer the set of observations as ``sample set". 

Our goal is to define a unified kernel function with the following properties:
\begin{enumerate}
\item A single kernel should include all modalities to avoid the computational complexity of the multiple kernel learning problem and in particular the need for wrapper methods.
\item The kernel should be based on an underlying probabilistic model that captures the connection and dependencies between the modalities or the multiple representations.
\item Data points should posses a generative model so that the Fisher Information matrix can be used to define a mathematically justified optimal kernel.
\end{enumerate}

\subsection{Random Field representation} 
\label{sec:sim_graph}

As the main idea of the similarity kernel method, we define a Random Field generative model by using pairwise similarities. In this model, a new instance is generated based on its distance from certain selected instances $S$ as distribution parameters. To select $S$, we have the options to select all the training set, or a subset in case it is too large, or even an arbitrary sample of labelled or unlabelled instances.

We will consider our instances $x$ as random variables forming a Markov Random Field (Section~\ref{sect:mrf}) described by an undirected graph. We define a generative model of $x$ based on its similarity or distance $dist(x,s)$ to elements of $S$. By the Hammersley--Clifford theorem \citeother{ripley1977markov}, the joint distribution of the generative model for $X$ is a Gibbs distribution.  

Our choice for the generative model was also driven by the invariance properties of Fisher kernels. We will show in Theorem~1 that for the Markov Random Field with the proposed energy functions, we can even spare the expensive parameter selection procedure for classification.

In the next subsections, first we derive this distribution via an appropriate energy function.  Then we define three new factor graphs suitable for defining kernels for classification and regression. Given a Markov Random Field defined by a graph, a wide variety of proper energy functions can be used to define a Gibbs distribution. The weak but necessary restrictions are that the energy function has to be positive real valued, additive over the maximal cliques of the graph, and more probable configurations (specific sets of parameters) have to have lower energy. 

\vspace{5mm}

\textbf{Pairwise similarity factor graph}

\vspace{5mm}

Our first and least complex factor graph is a bipartite graph connecting only the actual observations and a finite set of previously known observations (see fig.~\ref{fig:pairwise_sim}). For simplicity, first we will 
assume that only a single, unimodal distance is defined across the instances. In the bipartite factor graph, the maximal cliques are the pairs of the actual observation 
and $S$, therefore our energy function has the simple form

\begin{equation}
U(X \mid S, \theta=\{\alpha_i\})= \sum_{i=1}^{\mid S \mid} \alpha_i \mbox{dist}(x,s_i),
\label{eq:potential}
\end{equation}
where $\theta$ is the set of hyperparameters and $s_i\in S$ is the $i$th sample.

For $K$ modalities with different distance functions between the instances, the energy function has the form

\begin{equation}
\label{eq:potential_mm}
    U(x \mid S, \theta = \{\alpha_{ik}\}) = \sum_{i=1}^{\mid S \mid} \sum_{k=1}^{K} \alpha_{ik} \mbox{dist}_k(x, s_i),
\end{equation}

where $K$ is the number of different distance functions and $\theta = \{\alpha_{ik}\}$ is the set of hyperparameters.  For simplicity, from now on we omit $S$ and use $\theta$ for the hyperparameters.

\begin{figure}
\centerline{
\includegraphics[scale=.3]{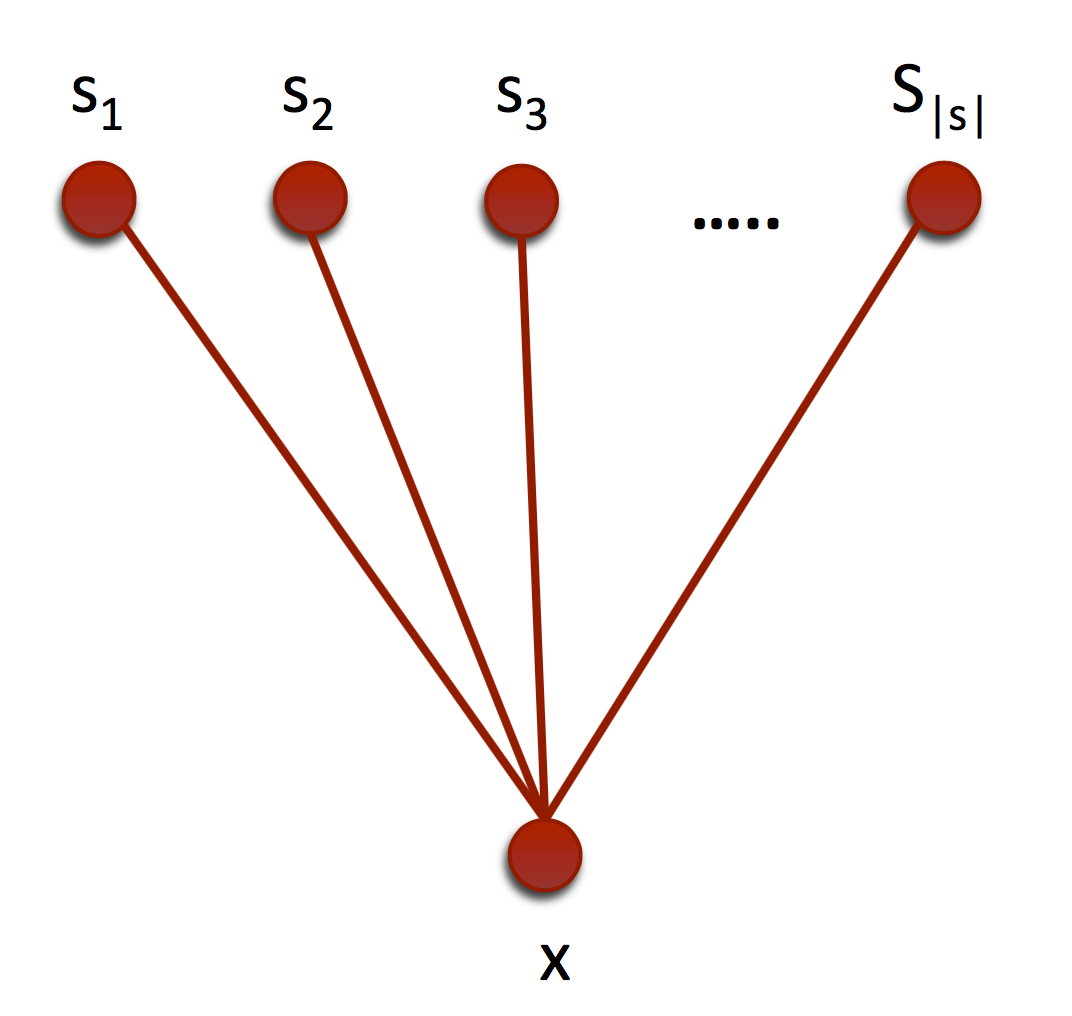}}
\caption{Pairwise similarity graph with two type of agents.}
\label{fig:pairwise_sim}
\end{figure}

\vspace{5mm}

\textbf{Class similarity factor graph}

\vspace{5mm}

Although the labels of the training set are of primary importance for classification, we do not use the labels in equations \eqref{eq:potential} and \eqref{eq:potential_mm}. In our next factor graph, we add class representative points, set $R$, uniformly sampled from the positive and negative training samples from each of the classes (see fig.~\ref{fig:class_sim}). These points are connected to the samples and to the actual observation $x$ but not to each other. If the class representatives and the samples are disjoint, the maximal and only clique size is three, composed of the actual observation, a class representative and a sample. To capture the joint energy, we can use the pseudo-likelihood heuristic of \citeother{besag1975statistical} who approximates the joint distribution additively from the individual ones, as follows:
\begin{equation}
\label{eq:potential_cla}
\begin{split}
U(X \mid \theta) = & \sum_{k=1}^{\mid R \mid} \sum_{i=1}^{| S |} \alpha_{ik} \big( \mbox{dist}(x,s_i) + \mbox{dist}(x,r_k) + \mbox{dist}(s_i,r_k)\big).
\end{split}
\end{equation}
At first glance, the additive approximation seems to oversimplify the potential to the pairwise potential (eq.~\ref{eq:potential}). However, in practice, the effect of the clique in the potential is apparently captured by the clique hyperparameter $\alpha_{ik}$.

\begin{figure}
\centerline{
\includegraphics[scale=.3]{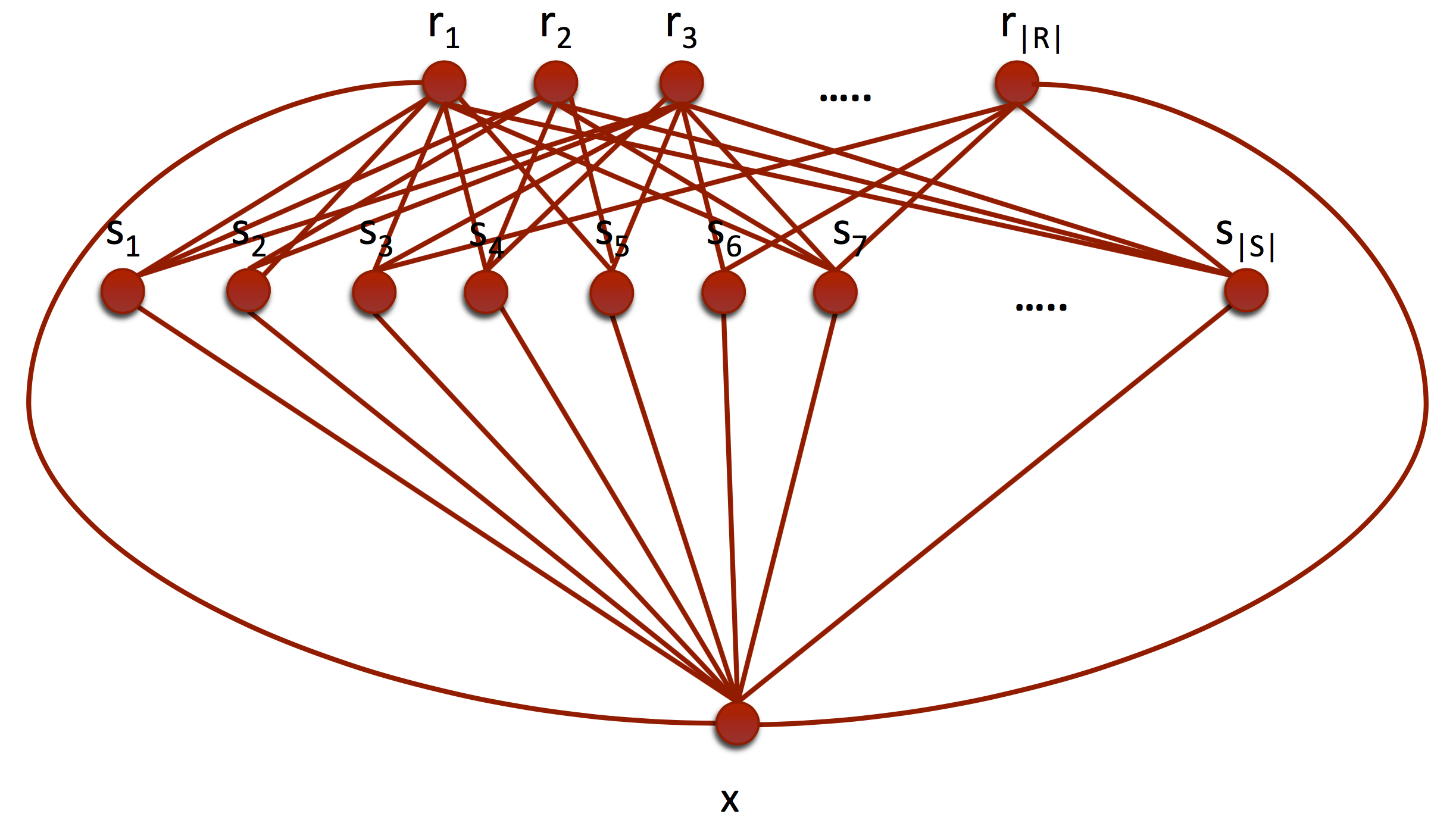}}
\caption{Class similarity graph}
\label{fig:class_sim}
\end{figure}

\vspace{5mm}

\textbf{Multi-agent similarity factor graph}
 
\vspace{5mm} 
 
So far we assumed that the samples are only dividable through modality, but in certain problems such as the recommender systems even the observations are multiple agents. To capture the known connections betweens the elements, we can define a bit different factor graph. Let be any point in the graph an agent (e.g. items and users, see fig.~\ref{fig:multi_sim}), than we can define an energy function as 

\begin{equation}
\label{eq:potential_multi}
\begin{split}
U({x_1,..,x_K} \mid \theta) = \sum_{k=1}^{K} \sum_{c_{i} \in C^k} \alpha_{c_i} (\sum_{j=1}^k \mbox{dist}(x_1,..,x_K,c_{ij}) + \sum_{j,l}^k\mbox{dist}(c_{ij},c_{il})) 
\end{split}
\end{equation}

where $K$ is the number of agent types and $C^k$ is the set of k-cliques between the different type of agents. 

\begin{figure}
\centerline{
\includegraphics[scale=.3]{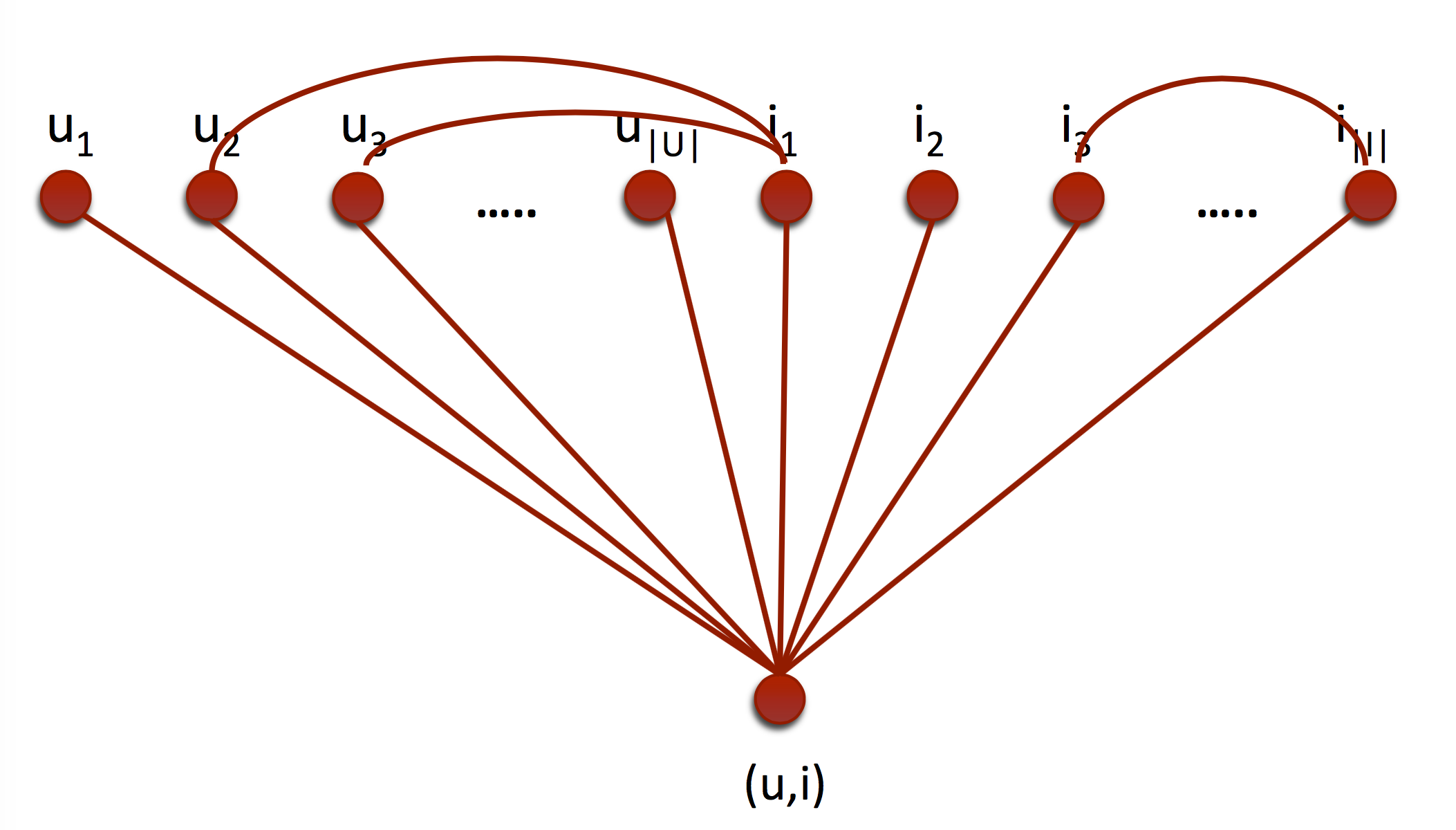}}
\caption{Multi-agent similarity graph with two type of agents.}
\label{fig:multi_sim}
\end{figure}

\subsubsection{Gibbs distribution}
Given the potential function over the maximal cliques, by the Hammersley--Clifford theorem (Section~\ref{sect:mrf}), the joint distribution of the generative model for $x$ is a Gibbs distribution
\begin{align}
    p(x \mid \theta) = {\mathrm{e}^{-U(x\mid \theta)}}/{Z(\theta)}
    \label{eq:gibbs}
\end{align} 
where 
\begin{equation}
Z(\theta) = \int_{x \in \mathcal{X}} \mathrm{e}^{-U(x \mid \theta)} \mathrm{d}x
\end{equation}
is the expected value of the energy function over our generative model, a normalization term called the partition function.  If the model parameters are previously determined, $Z(\theta)$ is a constant. Now, let us examine the Fisher Information matrix.

\subsection{Fisher kernel: natural kernel over generative models}
\label{sect:fisher}

\begin{figure}
\begin{centering}
\includegraphics[scale=0.4,trim=0 200 0 0]{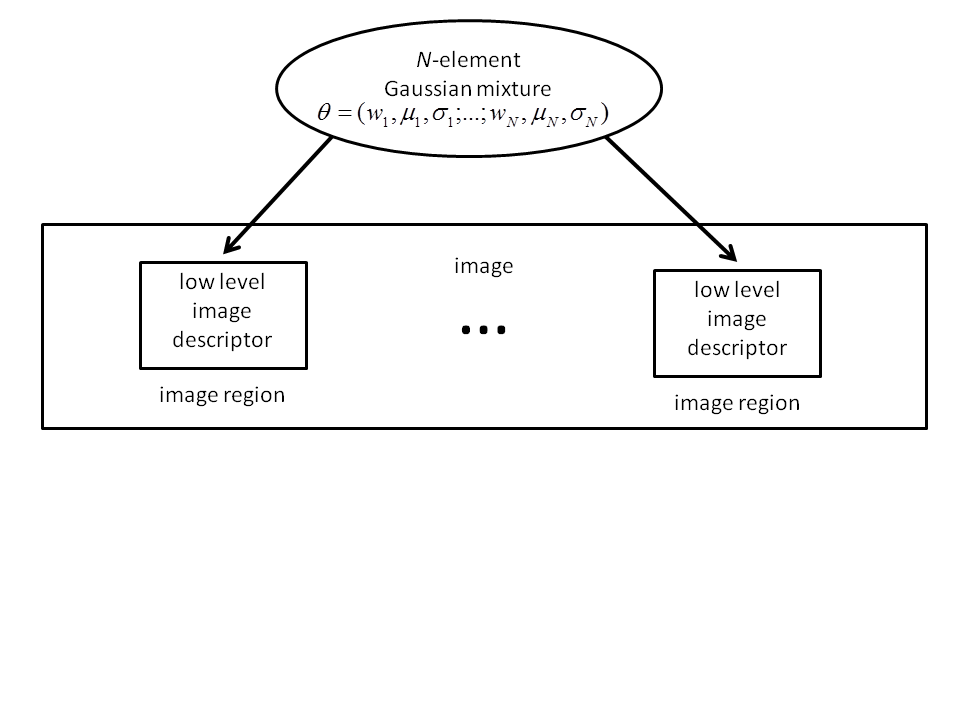}
\par\end{centering}
\caption{In the naive independence model, image regions are conditionally independent, exchangeable of each other according to the 
Gaussian mixture $p(X|\theta)$.}
\label{fig:generative-model}
\end{figure}

In this section, we review the theorems of \citeother{JH,amari1996neural,cencov1982,cencov2000statistical} and substitute our generative models to obtain the form of the natural kernel function, whose existence based on the Fisher information matrix $F$ follows from the theorems. We previously discussed (see Section~\ref{sect:prob}) that the generative probability models (such as Markov models) and discriminative approaches (such as support vector machines) are very important tools in the area of statistical classification of various types of data. Jaakkola and Haussler \citeother{JH} proposed a remarkable and highly successful approach to combine the two, somewhat complementary approaches. As we seen in the previous section, kernel methods for discriminative classification employ a real valued kernel function $K$ to measure the similarity of two examples $X,Y$ (they could be a set of samples as in images) in terms of the value $K(X,Y)$. By following \citeother{JH}, we may employ the Fisher information to obtain the kernel function {\em directly from a generative probability model}. We may consider a parametric class of probability models $P(X| \theta)$, where $\theta\in \Theta \subseteq \mathbb{R}^\ell$ for some positive integer $\ell$. 

For example in Fig.~\ref{fig:generative-model} the image content generative model $p(X|\theta)$ is given by \textit{GMM} (Section~\ref{section:GMM}) with $N$ isotropic Gaussians $N(\mu_i,\sigma_i)$ with weights $\omega_i$ for $i=1$,\ldots,$N$. 

Provided that the dependence on $\theta$ is sufficiently smooth, the collection of models with parameters from $\Theta$ can then be viewed as a (statistical) manifold $M_\Theta$. $M_\Theta$ can be turned into a Riemannian manifold \citeother{Jo} or in other words into a smooth real manifold, where for each point $p(X|\theta)\in M_{\Theta}$ there is an inner product defined on the tangent space of $p(X|\theta)$. This inner product varies smoothly with $p$. One can define the length of a tangent vector via this inner product on the tangent space. This makes possible to define the length of a curve $\gamma(t)$ on $M$ by integrating the length of the tangent vector $\dot{\gamma}(t)$. The distance between two points $Q$ and $Q'$ is just the length of the shortest curve on $M$ from $Q$ to $Q'$. The notion of the inner product $K$ in turn allows to define a metric on $M$. The significance of Fisher metric is highlighted by a fundamental result of N. N. \u{C}encov \citeother{cencov1982} stating that it exhibits an invariance property under some maps which are quite natural in the context of probability. These maps are congruent embeddings by Markov morphisms. Moreover it is essentially the unique Riemannian metric with this property. This invariance property is discussed by Campbell \citeother{Campbell1985,campbell1986extended}, Amari \citeother{amari1996neural} and it is extended by Petz and Sud\'ar to a quantum setting \citeother{Petz1999}. Thus, one can view the use of Fisher kernel as an attempt to introduce a natural comparison of the examples on the basis of the generative model (see Section 4 in \citeother{JH}). 

In other words, this means that we obtain a metric that maintains the original distances and hence defines a ``natural'' metric of the data instances of the generative model.

Next we formally compute the metric over the manifold. Precisely, we can get the Riemann manifold by giving a scalar product at the tangent space of each point $P(X| \theta)\in M_\Theta $ via a positive semidefinite matrix $F(\theta)$, which varies smoothly with the base point $\theta$. Such positive semidefinite matrices are provided by the Fisher information matrix $$ F(\theta):= {\bf E }(\nabla _\theta\log P(X|\theta)\nabla _\theta \log P(X|\theta)^T),$$ where the gradient vector $\nabla_\theta \log P(X|\theta)$ is $$\nabla_\theta \log P(X|\theta)=\left(
\frac{\partial}{\partial \theta_1} \log P(X|\theta), \ldots,..
\frac{\partial}{\partial \theta_l} \log P(X|\theta)\right),$$
and the expectation is taken over $P(X|\theta)$. In particular, if $P(X|\theta)$ is a probability density function, then the $ij$-th entry of $F(\theta)$  is 

\begin{equation}
f_{ij}=\int _X P(X|\theta)
\left(\frac{\partial}{\partial \theta_i} \log P(X|\theta)\right)
\left(\frac{\partial}{\partial \theta_j} \log P(X|\theta)\right)dX.
\end{equation}

The kernel can actually be viewed as an inner product

\begin{equation}
K(X,Y)=\phi^T_X\phi _Y,
\end{equation}

where the feature vectors $\phi_X, \phi_Y \in \mathbb{R}^k$ are obtained via a fixed, problem specific map $X\mapsto \phi_X$ which describes the examples $X$ in terms of a real vector of length $k$.

The vector $G_X=\nabla_\theta \log P(X|\theta)$ is called the {\em Fisher score} of the example $X$. Now the mapping $X\mapsto \phi_X $ of examples to feature vectors can be $X\mapsto F^{-\frac 12}G_X$ (we suppressed here the dependence on $\theta$), the {\em Fisher vector}. Thus, to capture the generative process, the gradient space of the model space $M_\Theta$ is used to derive a meaningful feature vector. The corresponding kernel function $$K(X,Y):= G^T_XF^{-1}G_Y$$ is called the {\em Fisher kernel}.

An intuitive interpretation is that $G_X$ gives the direction where the parameter vector $\theta$ should be changed to fit best the data $X$ \citeother{perronnin2007fisher}.

Before we deeply examine the Fisher metric over particular distributions, we prove a theorem for the similarity kernel on a crucial reparametrization invariance property that typically holds for Fisher kernels \citeother{janke2004information}. By the theorem, we do not require an expensive parameter selection procedure for the similarity kernel with energy function in Section \ref{sec:sim_graph}. 

\newtheorem{theorem}{Theorem}
\begin{theorem}
For all $\theta = \phi(\mu)$ for a continuously differentiable function $\phi$, $K_\theta$ is identical.
\end{theorem}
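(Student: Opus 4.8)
The plan is to show that the Fisher kernel $K_\theta(X,Y) = G_X^T F(\theta)^{-1} G_Y$ is left unchanged by the substitution $\theta = \phi(\mu)$, so that $K_\mu = K_\theta$. The only tools needed are the chain rule and the way the Fisher score $G_X = \nabla_\theta \log P(X\mid\theta)$ and the Fisher information matrix $F(\theta)$ co-transform under a change of coordinates. First I would introduce the Jacobian $J$ of $\phi$, with entries $J_{ij} = \partial \theta_i / \partial \mu_j$, and record that a genuine reparametrization forces $J$ to be (square and) invertible at every $\mu$. Differentiating $\log P(X\mid \phi(\mu))$ with the chain rule then gives the score in the new coordinates as $G_X^{\mu} = J^T G_X^{\theta}$, where $G_X^\theta$ denotes the score in the original coordinates.

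Next I would substitute this into the definition of the Fisher information. Since $J$ depends only on $\mu$ and not on $X$, it pulls out of the expectation over $P(X\mid\theta)$, yielding
\[
F(\mu) = \mathbf{E}\big[J^T G_X^\theta (G_X^\theta)^T J\big] = J^T\, \mathbf{E}\big[G_X^\theta (G_X^\theta)^T\big]\, J = J^T F(\theta) J.
\]
Plugging both transformed quantities into the kernel and invoking invertibility of $J$ gives
\[
K_\mu(X,Y) = (G_X^\mu)^T F(\mu)^{-1} G_Y^\mu = (G_X^\theta)^T J \big(J^T F(\theta) J\big)^{-1} J^T G_Y^\theta.
\]
Because $J$ is nonsingular, $\big(J^T F(\theta) J\big)^{-1} = J^{-1} F(\theta)^{-1} (J^T)^{-1}$, so the products $J J^{-1}$ and $(J^T)^{-1} J^T$ both collapse to the identity, leaving $K_\mu(X,Y) = (G_X^\theta)^T F(\theta)^{-1} G_Y^\theta = K_\theta(X,Y)$, which is exactly the asserted identity.

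The main obstacle is precisely the invertibility of $J$, and this is the point where the hypothesis has to be read carefully: the cancellation $J\big(J^T F J\big)^{-1} J^T = F^{-1}$ is false for an arbitrary $C^1$ map and needs $J$ to be square and nonsingular, i.e.\ $\phi$ must be a diffeomorphism rather than merely continuously differentiable. I would therefore state the result for an invertible reparametrization, which is the intended reading since only then do $\theta$ and $\mu$ index the same statistical manifold $M_\Theta$; alternatively, for a non-square $J$ of full column rank one can interpret $J\big(J^T F J\big)^{-1} J^T$ as the $F$-orthogonal projection onto the range of $J$ and observe that the needed identity holds on that subspace, which still carries the scores. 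Once smoothness and this invertibility are in place, the remaining steps — the chain-rule computation of $G_X^\mu$ and the factoring of $J$ out of the expectation — are routine, and they deliver the invariance that justifies dispensing with hyperparameter tuning for the similarity kernel.
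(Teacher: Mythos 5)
Your proof follows essentially the same route as the paper's: chain-rule transformation of the Fisher score, the induced transformation $F(\mu) = J^T F(\theta)\, J$ of the Fisher information, and cancellation of the Jacobian factors inside the kernel $K_\mu(X,Y) = (G_X^\theta)^T J \big(J^T F(\theta) J\big)^{-1} J^T G_Y^\theta = K_\theta(X,Y)$. Your version is in fact more careful than the paper's, which treats the Jacobian as if it were a scalar factor (writing $F_{\theta(\mu)}\big(\frac{\partial\theta}{\partial\mu}\big)^2$ and placing Jacobians on the wrong side of vectors) and never states the invertibility of $J$, which your cancellation step correctly identifies as the hypothesis actually needed beyond mere continuous differentiability.
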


\begin{proof}
The Fisher score is
\begin{equation}
G_X(\mu) = G_X(\theta(\mu)) \bigg(\frac {\partial\theta}{\partial \mu}\bigg) 
\end{equation}

and therefore 

\begin{gather*}
K_{\mu}(X,Y) = G_X(\mu)^T F_{\mu}^{-1} G_X(\mu) \\ 
= G_X(\theta(\mu))^T \bigg(\frac {\partial\theta}{\partial \mu}\bigg) \Bigg(F_{\theta(\mu)} \bigg(\frac {\partial\theta}{\partial \mu}\bigg)^2 \Bigg)^{-1} G_Y(\theta(\mu))\bigg(\frac {\partial\theta}{\partial \mu}\bigg) \\
= G_X(\theta(\mu))^T F_{\theta(\mu)}^{-1}  G_Y(\theta(\mu)) = K_{\theta}(X,Y).
\end{gather*}
\end{proof} 

As a consequence, if our optimisation procedure yields only changes trough continuously differentiable reparametrization of an already found parametrization we can stop since it will never alter our kernel value. We will see in a latter chapter that for several distributions the whole optimisation is an unnecessary step due the nice properties of the Fisher score. 

\subsubsection{Fisher distance: a univariate Gaussian example}

The question arises why we use the Fisher metric on $\Theta$ instead of e.g. the Euclidean distance inherited from the ambient space $\mathbb{R}^l$? As a first step in discussing this issue, we follow \citeother{costa2014fisher} to consider the family of univariate Gaussian probability density functions $$ f(x,\mu, \sigma)= \frac{1}{\sqrt{2\pi}\sigma}\exp\left(\frac{-(x-\mu)^2}{2\sigma^2}\right),$$ parameterized by the points of the upper half-plane $H$ of points $(\mu,\sigma)\in \mathbb{R}^2$ with $\sigma>0$. Fix values $0<\sigma_1<\sigma_2$ and $\mu_1<\mu_2$. The Euclidean distance of $A=(\mu_1,\sigma_1)$ and $B=(\mu_2,\sigma_1)$ is  $\mu_2-\mu_1$, the same as the distance of 
$C=(\mu_1,\sigma_2)$ and $D=(\mu_2,\sigma_2)$. At the same time, an inspection of the graphs of the density functions shows\footnote{Let $f_A,f_B,f_C,f_D$ be the density functions corresponding to $A,B,C,D$ and
let $I$ be a small interval close to $\mu_2$. Then $\int_I|f_C-f_D|dx$ will be smaller than $\int_I|f_A-f_B|dx$.} that the dissimilarity of the distributions attached to $C$ and $D$ is smaller than the dissimilarity of the distributions with parameters $A$ and $B$. This suggests that a distance reflecting the dissimilarity of the distributions is not the Euclidean one. It turns out that the Fisher distance reflects dissimilarity much better in this case. In fact, the Fisher distance $d_F(P,Q)$ of two points $P=(\mu_1,\sigma_1)$ and $Q=(\mu_2,\sigma_2)$ is related nicely to the hyperbolic distance $d_H(P,Q)$ measured in the Poincar\'e half-plane model of hyperbolic geometry (formula (4) in \citeother{costa2014fisher}): $$ d_F(P,Q)=\sqrt{2}d_H\left(\left(\frac{\mu_1}{\sqrt{2}},\sigma_1\right), \left( \frac{\mu_2}{\sqrt{2}},\sigma_2\right) \right). $$ 
 
\subsubsection{The Fisher metric over general distributions}

The Fisher metric over the Riemannian space $$\Delta=\{(p_1,\ldots ,p_n);~p_i\geq 0, ~\sum p_i=1\} \subseteq \mathbb{R}^n$$ of finite probability distributions $(p_1,p_2,\ldots, p_n)$ has a beautiful connection to the metric of the sphere $S\subseteq \mathbb{R}^n$ of points $(x_1,\ldots ,x_n)$ with $\sum_ix_i^2=4$. This goes back to Sir Ronald Fisher and is discussed in \citeother{Campbell1985,gromov2012search} and \citeother{Petz1999}. A point $(p_1,\ldots
,p_n)$ of the probability simplex $\Delta$ corresponds to a unique point of the positive ``quadrant'' of $S^+$ of $S$ via $4p_i=x_i^2$, $i=1,2,\ldots, n$. This is actually an {\em isometry} if one considers the spherical 
metric on $S^+$. In fact, let $x(t)$ be a curve on $S^+$. Then the squared length of the tangent vector to $x(t)$ is 

$$\| \dot{x}(t)\|^2= \sum_{i=1}^n(\dot{x_i}(t))^2=
\sum_{i=1}^n((2\sqrt{p_i(t)})')^2= $$
$$=\sum_{i=1}^n\left( \frac{\dot{p}_i(t)}{\sqrt{p_i(t)}}\right)^2=
\sum_{i=1}^np_i(t)((\log p_i(t))')^2,
$$ 

which is the squared length of $\dot{p}(t)$ in the Fisher metric on $\Delta$. The Fisher distance $d_F(P,Q)$ between probability distributions $P=(p_1,\ldots, p_n)$ and $Q=(q_1,\ldots, q_n)$ can then be calculated along a great circle of $S$. It will be $$ d_F(P,Q)=2 \arccos \left( \sum _{i=n}^n\sqrt{p_iq_i}\right).$$

\subsubsection{An example: Fisher over Gaussian Mixtures}
\label{sect:fisher_gmm}

For classification tasks Perronnin and Dance \citeother{perronnin2007fisher} proposed the Fisher metric over the Gaussian mixture image content generative model as a content based distance between two images.
Let $X=\{x_1,..,x_T\}$ be a set of samples extracted from a particular image $I_X$.  In the naive independence model (see Section~\ref{section:GMM}), the probability density function of $X$ is equal to 

\begin{equation}
\label{eq:indep}
p(X|\theta) = \Pi_{t=1}^T p(x_t | \theta).
\end{equation}

We obtain that the Fisher score of $X$ is a sum over the Fisher scores of the samples of $X$   $$U_X = \nabla_{\theta} \log p(X|\theta) = \nabla_{\theta} \sum_{t=1}^T \log p(x_t | \theta).$$ The GMM assumption means (for more details see Section~\ref{section:GMM}) that $$ P(x_t|\theta)= \sum _{i=1}^N \omega_i g_i(x_t|\theta), $$ where $(\omega_1,\ldots ,\omega_N)$ is a finite probability distribution and $g_i$ is the density of ${\mathcal N}_i$, a $d$ dimensional Gaussian distribution with mean vector $\mu_i\in\mathbb{R}^d$ and diagonal covariance matrix with diagonal $\sigma_i\in \mathbb{R}^d$. 

In Section~(\ref{section:GMM}) we already discussed the derivative for the loglikelihood of the \textit{GMM}. Note, Perronnin and Dance in \citeother{perronnin2007fisher} refer the \textit{membership probability} (eq. \ref{eq:gmm_gamma}) as \textit{occupancy probability}. 

Despite the compact form of the derivatives, the computation of the Fisher information remains a challenging problem. To overcome this difficulty, Perronnin and Dance further simplified the naive independence model of Fig.~\ref{fig:generative-model} as follows. In the model illustrated in Fig.~\ref{fig:generative-model2}, they assume that the sample $x_t$ for image region $t\in \{1,\ldots,T\}$ is generated by first selecting one Gaussian ${\mathcal N_j}$ from the mixture according to the distribution $(w_1, \ldots ,w_N)$ and then considering $x_t$ as a sample from ${\mathcal N}_j$. In other words, they assume that the distribution of the \textit{membership probability} is sharply peaked \citeother{perronnin2007fisher}, resulting in only one Gaussian per sample with non-zero ($\approx 1$) \textit{membership probability}.  They also assume that $T$, the number of regions generated for an image, is constant. Worth to mention, that the assumptions on sharp peaks and a constant $T$ are not entirely valid in some cases and we will discuss it in a latter section during the experiments. 

Nevertheless the final representation of image $I_X$ is

\begin{equation}
\label{eq:fisher-gmm0} G_X = F^{-\frac{1}{2}} U_X.
\end{equation}

For this computation in practice a diagonal approximation of $F$ is used as suggested in \citeother{JH,perronnin2007fisher}. The diagonal terms of this approximation (for details see \citeother{perronnin2007fisher}) are 
 
\begin{eqnarray}
\label{eq:fisher-gmm1}
f_{w_i} &\approx& T (\frac{1}{w_i}+\frac{1}{w_1}); \\
\label{eq:fisher-gmm2}
f_{\mu_i^d} &\approx& \frac{Tw_i}{(\sigma_i^d)^2}; \\
\label{eq:fisher-gmm3}
f_{\sigma_i^d} &\approx& \frac{2Tw_i}{(\sigma_i^d)^2}.
\end{eqnarray}

\begin{figure}
\begin{centering}
\includegraphics[scale=0.4,trim=0 100 0 0]{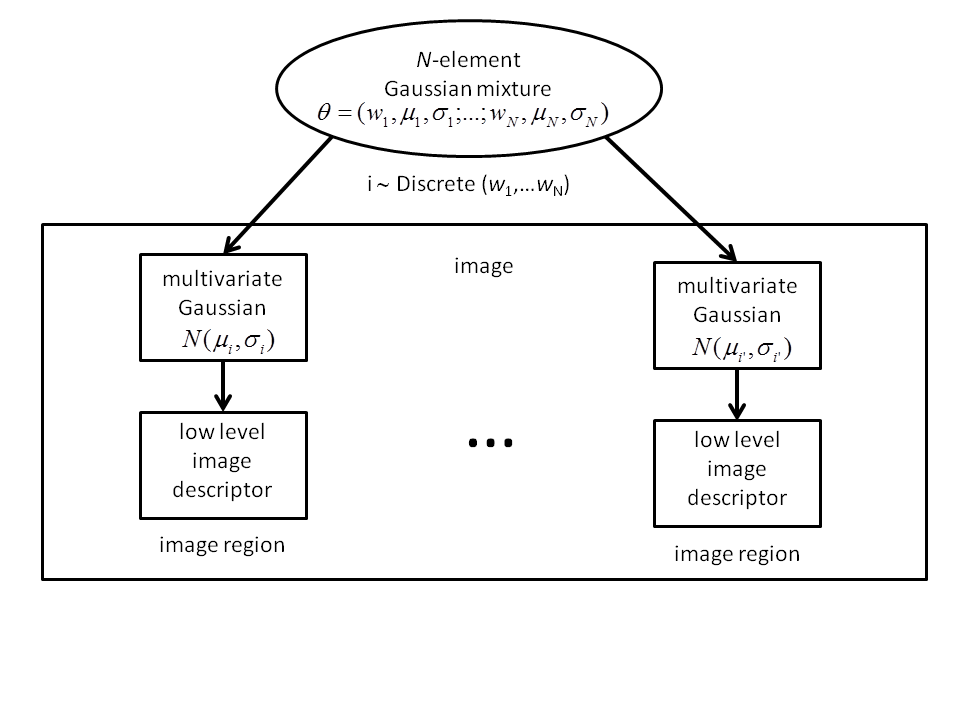}
\par\end{centering}
\caption{In this variant of the naive independence model, image regions are generated by first selecting one component of the mixture from a discrete distribution and then the low level descriptors are given by the selected multivariate Gaussian $\mathcal{N}(\mu_i, \sigma_i)$.}
\label{fig:generative-model2}
\end{figure}

For images $I_X$, and $I_Y$ the Fisher kernel $K(I_X,I_Y)$ is the following bilinear kernel over the Fisher vectors $G_X$ and $G_Y$: 

\begin{equation}
\label{eq:fisher-kernel1}
    K(I_X,I_Y) = U_X^T F^{-1} U_Y =  U_X^T F^{-1/2} F^{-1/2} U_Y = G_X^T G_Y.
\end{equation}

The dimension of the Fisher vector is $2Nd+N$ (equal to the number of parameters of the model), where $d$ is the dimension of the samples. Since this value depends on $N$, the number of Gaussians in the mixture, one has to find a good balance between the accuracy of the mixture model and the computational cost. The Vapnik-Chervonenkis theorem (Section~\ref{sect:learning}) is also suggest less complex Gaussian Mixtures since the Fisher kernel is a linear kernel. Interestingly, the Gaussian Mixtures used in \citeother{perronnin2010improving,Zissermann2011} result significantly high dimensional Fisher scores ($>100k$) learning over a small training set (Pascal VOC with $5k$ training images). This experiments (and the experiments with the similarity kernel) suggest us that the Fisher kernel has good generalisation properties despite the high dimensional underlying space. As our similarity graphs are not Gaussian Mixtures, next, we calculate the Fisher score over the graphs introduced in Section \ref{sec:sim_graph}. 

\subsubsection{Practical approximation of the Fisher Kernel over Gibbs distribution}
\label{sec:simker}

Without reasoning about the lattice (and therefore about the energy function), let us calculate the Fisher score based on our general generative model derived from (eq. \ref{eq:prob_gibbs}), 

\begin{equation}
\begin{split}
    G_X^i & =\nabla_{\theta_i} \log p(X|\theta)\\
          & = {\textstyle - \frac{\partial{(U(X \mid \theta)}}{\partial{\theta_i}} + \frac{1}{Z(\theta)}\int_{X \in \mathcal{X}} \mathrm{e}^{-U(X \mid \theta)} \frac{\partial{(U(X \mid \theta)}}{\partial{\theta_i}} \mathrm{d}X}.
\end{split}
\end{equation}

As we set our model $\theta$ fixed, $Z(\theta)$ is a constant and our formula can be simplified as 

\begin{equation}
\begin{split}
    \nabla_{\theta_i} \log p(X|\theta) &= \int_{X \in \mathcal{X}} \frac{\mathrm{e}^{-U(X \mid \theta)}}{Z(\theta)} \frac{\partial{(U(X \mid \theta)}}{\partial{\theta_i}} \mathrm{d}X - \frac{\partial{(U(X \mid \theta)}}{\partial{\theta_i}} \\
     &= \mathbf{E}_{\theta}[\frac{\partial{(U(X \mid \theta)}}{\partial{\theta_i}}] - \frac{\partial{(U(X \mid \theta)}}{\partial{\theta_i}}.
\label{eq:gix}
\end{split}
\end{equation}

since 

\begin{equation}
\begin{split}
\mathbf{E}_{\theta}[\frac{\partial{(U(X \mid \theta)}}{\partial{\theta_i}}] 
 &= \int_{X \in \mathcal{X}} p(X \mid \theta) \frac{\partial{(U(X \mid \theta)}}{\partial{\theta_i}} \mathrm{d}X \\
 &= \int_{X \in \mathcal{X}} \frac{\mathrm{e}^{-U(X \mid \theta)}}{Z(\theta)} \frac{\partial{(U(X \mid \theta)}}{\partial{\theta_i}} \mathrm{d}X \\
 &= \frac{1}{Z(\theta)}\int_{X \in \mathcal{X}} \mathrm{e}^{-U(X \mid \theta)} \frac{\partial{(U(X \mid \theta)}}{\partial{\theta_i}} \mathrm{d}X.
\end{split}
\end{equation}

The first part of the formula can be calculated from the observation $X$ while the expected value (the mean of the gradient of the potential function) is hard to compute.  Worth to mention, if there exists a probability density function $f(X \mid \theta)$ such that 
\begin{equation}
    U(X \mid \theta) = - \log f(X \mid \theta)
\end{equation}
then the expected term of eq. \eqref{eq:gix} is zero trivially. 

The computational complexity of the Fisher information matrix is $\mathcal{O}(N |\theta|^2)$ where $N$ is the size of the training set. The linearization of the Fisher kernel through Cholesky decomposition is also an expensive procedure depending only on the size of the parameter set.

To reduce the complexity to $\mathcal{O}(N |\theta|)$ we can approximate the Fisher information matrix again with the diagonal.

Focusing on the diagonal of the Fisher information matrix, we get

\begin{equation}
\begin{split}
    f_{i,i} &= \mathbf{E_{\theta}}[\nabla_{\theta_i} \log p(X|\theta)^T \nabla_{\theta_i} \log p(X|\theta)] \\
   		    &= \mathbf{E_{\theta}}[({\bf E_{\theta}}[\frac{\partial{U(X \mid \theta)}}{\partial{\theta_i}}] - \frac{\partial{(U(X \mid \theta)}}{\partial{\theta_i}})^2]  \\
    			&= \int_{X \in \mathcal{X}} p(X \mid \theta) (\mathbf{E_{\theta}}[\frac{\partial{U(X \mid \theta)}}{\partial{\theta_i}}] - \frac{\partial{U(X \mid \theta)}}{\partial{\theta_i}} )^2\mathrm{d}X.
\end{split}
\end{equation}
 
 For the energy functions of equations \eqref{eq:potential} and \eqref{eq:potential_mm}, the diagonal of the Fisher kernel is the standard deviation of the distances from the samples. We give the Fisher vector of $X$ for \eqref{eq:potential}: 
\begin{equation}
\hspace{-.3cm}
    \mathcal{G}_X^i = F_{ii}^{-\frac 12} G_X^i =\frac {{\bf E_{\theta}}[\mbox{dist}(x,s_i)] - \mbox{dist}(x,s_i)} {{\bf E_{\theta}^{\frac 12}}[({\bf E_{\theta}}[\mbox{dist}(x,s_i)] - \mbox{dist}(x,s_i))^2]}.
\label{eq:kernel}
\end{equation}

The above formula can be directly computed from the distance matrix of the sample $S$ and the training and testing instances $X$. We note that here we make another heuristic approximation: instead of computing the expected values in \eqref{eq:kernel} e.g.\ by simulation, we substitute the mean and variance of the distances from the training data. For the equations \eqref{eq:potential_cla} and \eqref{eq:potential_multi} the derivation is similar and therefore the kernel values does not depend on the parameters of the random graph.

Because of Theorem~1, the equation is independent of the hyperparameters $\alpha$, hence it is less sensitive to the heuristic approximation. Note that the earlier results of \citeother{JH,perronnin2007fisher} use the same heuristic, however their models are not known satisfy Theorem~1: for example they need to learn the Gaussian mixture model parameters, and their method is, at least theoretically, more sensitive to the hyperparameters and the heuristic approximation as well.

The dimensionality of the  Fisher vector (the normalized Fisher score) is equal to the size of the parameter set of our joint distribution. In our case it depends only on the size of the sample sets $S$ and $R$ and the number of modalities ($K$), $dim_{Fisher} = K\cdot |S|$ for eq. \eqref{eq:potential} and $dim_{Fisher}=K\cdot |S|\cdot |R|$ for eq. \eqref{eq:potential_cla}. In case of the Multi-agent graph (eq. \eqref{eq:potential_cla}) the dimension depends on the edges between the agent sets, particularly for $K$ agent sets the dimension is $\sum_{k=1}^K \#\{\text{Maximal cliques with k agents}\}$. 

By the pairwise similarity graph, if we use the whole training set as sample the dimension of the underlying euclidean space is equal to the size of the training set almost reaching the separability limit (Section~\ref{sect:prob}). This limit can be reached with a significantly smaller sample set in the class similarity kernel.          

\subsection{Summary and my contribution}

From a generative model based on instance similarities, we derived a similarity kernel applicable for classification and regression. The method is capable of defining a single unified kernel even in the case of rich data types. The final kernel does not depend on the parameters of the random graph and therefore we do not need to determine the relative importance of the basic modalities. In the next sections we will show experiments over various datasets such as images (see Section~\ref{sect:spat_fish},\ref{sect:vcdt}), web classification (Section~\ref{sect:text}) and time-series typed problems (Section~\ref{sect:time_series}). As a summary and the main contribution:

\begin{itemize}
\item[] From a generative model based on instance similarities, we derived various kernels applicable for classification and regression. The method is capable of defining a single unified kernel even in the case of rich data types.
\end{itemize}

\noindent The theoretical background of the similarity kernel and some of the experiments were presented in various publications \citeauth{daroczy2013fisher,daroczy2015machine,daroczy2015text}. My contribution were mainly the idea and the definition of the similarity graphs (Section~\ref{sec:sim_graph}) and the derivation of the practical approximation of the Fisher Information (Section~\ref{sec:simker}). For the particular problems I will mention at the end of the experiment sections which basic distance metrics and experiments were done by me.

\newpage

\section{Multimodal image classification and retrieval}
\label{sect:mm_img_class}

Efficient representation of images is still a widely researched and open problem. The selection of the ideal, better performing feature extraction method depends greatly on the aim of the application where we want to utilize it. While the challenge seems different for content based information retrieval (CBIR) and visual concept detection, they are closely related. By image retrieval the main objective is to rank images in a corpus by their relevance to a set of query images. Traditional text based information retrieval is a very well studied area with robust methods. The most common  solution is to map the images into so called sets of ``visual" words and treat them as documents \citeother{csurka2004visual,wang2004regions,prasad2004region,carson2002blobworld,quin2004similarity}. Interestingly, normalized term frequency values are very applicable features for classification of images and textual documents as well. One of the main questions is the mapping or translation of the visual content. In a way, direct mapping or detection of textual concepts would be an ideal solution, but let us consider the differences between the visual and textual concepts. Since there is no unambiguous translation between them, so we may consider a different kind of finite dictionary for the image concepts as for natural languages \citeother{csurka2004visual}. They considered to assign a ``visual word" from a finite codebook to each of the patches extracted from the image, describing the image with the histogram of the occurrence of the ``visual words". This representation of the images results a sparse and finite description of the images in comparison to the matching based similarity measure, which is also a common method in content based image retrieval \citeother{wang2004regions,prasad2004region,carson2002blobworld,quin2004similarity}. Although they described local keypoint based ``visual words", the method is applicable for any type of segmentation of the image. One of the key parts of the method are the detection and description of local the patches and the codebook generation. 

In Section~\ref{sec:seg_img} we examine an image retrieval system based on segmented query images \citeauth{daroczy2009sztaki_lncs,benczur2008multimodal,deselaers2008overview,daroczy2009sztaki_clef}. focusing on a hierarchical graph-cut based segmentation algorithm and feature extraction. Afterwords, in Section~\ref{sect:spat_fish} we introduce a generative model to capture the structural layout of the images. Lastly, in Section~\ref{sect:vcdt} we discuss several models for multimodal image classification \citeauth{daroczy2011sztaki,daroczy2012sztaki} and introduce a method for classifying image segments based on Fisher vectors and biclustering.    

\subsection{Ad-hoc photographic retrieval: a segmentation based CBIR over the IAPR TC-12 dataset}
\label{sec:seg_img}

The ImageCLEF Photo Retrieval \citeother{arni:clef08:lncs:photo} challenges targeted towards image processing and visual and textual feature generation over the IAPR TC-12 benchmark collection \citeother{grubinger06iapr,arni:clef08:lncs:photo} with 20,000 still natural images with textual meta information and querys with three sample images and a textual descriptions. The collection was used in three consecutive challenges at the ImageCLEF~2007, ImageCLEF~2008 and ImageCLEF~2009 campaigns. Our main goal at the ImageCLEF Ad-hoc photographic retrieval task was an analysis of the strength of various elements of segmentation. This section is based mainly on our solution to the ImageCLEF~2008 Ad-hoc Photographic Retrieval task \citeauth{daroczy2009sztaki_lncs} with additional remarks \citeauth{benczur2008multimodal,deselaers2008overview,daroczy2009sztaki_clef}.

The main components of our model are the segmentation based visual retrieval ranking and the textual search engine. The segmentation procedure consists of a novel combination of the Felzenszwalb--Huttenlocher graph cut method \citeother{FelzandHutt} with smoothing over the scale-space \citeother{witkin1984scale}. All image segments are mapped into a roughly 400-dimensional space with features describing the color, shape and texture of the segment (see Table~\ref{tab:feature}). Since the number of query images were limited at the challenge, the relative importance of the features considering a distance function were considered hard to determine yet we made an excessive analysis of the feature weights as well as gave a method to learn these weights based solely on the sample images of the photo retrieval topics. We used the Hungarian Academy of Sciences search engine \citeother{husearch-www2003} as our textual information retrieval system that is based on Okapi BM25 \citeother{robertson1976relevance} and the original automatic query expansion formula of \citeother{xu1996qeu}. 

\begin{table}
\centerline{\begin{tabular}{|r|p{8cm}|}\hline
dimensions&description\\\hline
3    & Mean HSV (or RGB)\\\hline
60   & RGB histogram, 20 bins each\\\hline
30   & Hue histogram\\\hline
15   & Saturation histogram\\\hline
15   & Value histogram\\\hline
210  & Zig-Zag Fourier amplitude (105) and absolute phase (105) low frequency components\\\hline
1    & Size\\\hline
1    & Aspect ratio\\\hline
64  & Shape: density in 8x8 regions\\\hline
\end{tabular}}
\caption{Description and number of visual features used to characterize a single image segment.}
\label{tab:feature}
\end{table}

\subsubsection{Hierarchical graph-cut image segmentation}
\label{sect:segmentation}

Image segmentation is a widely researched and open problem. There are both supervised and unsupervised algorithms based on Markov Random Fields \citeother{geman1986markov,kato2006markov}, Gaussian Mixtures \citeother{belongie1998color} or spectral clustering \citeother{shimalik}. Since the original task permitted external knowledge and the majority of the queries was not based on object type concepts we choose an unsupervised but efficient algorithm as a basic segmentation algorithm. Felzenszwalb and Huttenlocher \citeother{FelzandHutt} defined an undirected graph over \begin{math} G = (V,E) \end{math}  where \begin{math} \forall v_i \in V \end{math}  corresponds to a pixel in the image, and the edges in \begin{math} E \end{math} connect certain pairs of neighbouring pixels. This graph-based representation of images reduces the original proposition into a graph cutting challenge. They made a very efficient and linear algorithm that yields a result near to the optimal normalized cut which is one of the NP-full graph problems \citeother{FelzandHutt,shimalik}. 

Our segmentation procedure is based on the scale space \citeother{witkin1984scale} that enables a gradual refinement of the segments starting out from a coarse segmentation on the top level of the pyramid. Given a coarser segmentation on a higher level, we first try to replace each segment pixel by pixel with the four lower level pixels if their similarity based on the their color is within a threshold. If the four pixels of the finer resolution are dissimilar, we remove those pixels from the segment. The remaining segments are kept together as starting segments for the lower level procedure while the removed pixels can join existing segments or form new ones.

On the lower levels of the pyramid the images are segmented by a modified Felzenszwalb--Huttenlocher graph cut method \citeother{FelzandHutt}. On lower levels, we simply continue to grow the segments obtained on the higher level. Our main improvement over the original method is the use of Canny edge detection \citeother{Canny:1986} and HSV values to weight the connection between neighbouring pixels. The original method only uses distances in the RGB space as weight that we add to the edge detection weight. We chose the Canny despite the computational complexity of the method. Our choice was driven by the fine details of edge structure. Additionally, we experimented with dynamic thresholds. 

We also require a similar number of segments in the images that are large enough to be meaningful for retrieval or classification purposes. The original Felzenszwalb--Huttenlocher method builds a minimum spanning forest where the addition of a new pixel to the component is constrained by the weight of the connection with the next pixel and the size of the existing component. We test two post-processing rules that reject the smallest segments. The pixels of rejected segments are then redistributed by the same minimum spanning forest method but now without any further restriction on the growth of the existing large segments. The two different rules are as follows:

\begin{itemize}
  \item Segments of size below a threshold are rejected.
  \item All segments are rejected except for the prescribed number of largest ones.
\end{itemize}

The segmentation algorithm on a single scale is based on dynamic thresholds over the edge weights. Let be $S_{p}$ the segment of pixel $p$, $\tau(S_{p})$ a function over the inner edge weights of $S_{p}$ and $B(S_{p},S_{q})$ a similarity function between $S_{p}$ and $S_{q}$ based on their border edges. The simplest function is the minimal weight of the border edges. 

\begin{algorithm}
\caption[]{Algorithm Segmentation on scale s $\left(I_{\mbox{src}},\tau_{1},\tau_{2},\tau_{3}\right)$.}

\label{alg:segmentation} \begin{algorithmic} 
\FORALL{pixels $p$}
\STATE{define segment $S_{p}=\{p\}$} \STATE{$\tau(S_{p})\leftarrow\tau_{1}$}
\ENDFOR \STATE{} \STATE\COMMENT{Joining sturdily coherent pixels}
\FORALL {neighboring pixel pairs $(p,q)$ in the order of edge weight}
\IF {$S_{p}\not=S_{q}$ and $\min\{\tau(S_{p}),\tau(S_{q})\}>B(S_{p},S_{q})$}
\STATE{$S_{p}\leftarrow S_{p}\cup S_{q}$} \STATE{$\tau(S_{p})\leftarrow\frac{{\displaystyle \tau(S_{p})*|S_{p}|+\tau(S_{q})*|S_{q}|}}{{\displaystyle |S_{p}|+|S_{q}|}}+B(S_{p},S_{q})$}
\ENDIF \ENDFOR \STATE{} \STATE\COMMENT{Segment enlargement}
\WHILE {we reach the prescribed number of segments} \FORALL{neighboring
pixel pairs $(p,q)$ in the order of edge weight} \IF{$S_{p}\not=S_{q}$
and $min(|S_{p}|,|S_{q}|)<\tau_{2}$ and $B(S_{p},S_{q})<\tau_{3}$}
\STATE{$S_{p}\leftarrow S_{p}\cup S_{q}$} \ENDIF \ENDFOR \STATE{$\tau_{2}\leftarrow\tau_{2}*1.2$
and $\tau_{3}=\tau_{3}*1.3$} \ENDWHILE \end{algorithmic} 
\end{algorithm}

During the experiments we set $\tau_1$ to 10 (the minimal edge weight), $\tau_2$ to 100 (the minimal segment size) and $\tau_3$ to 20. The dynamic thresholds increase the possibility for smaller segments to join neighbouring segments. 

After segmentation we map each segment into a feature space characterizing its color, shape and texture with description and dimensionality shown in Table~\ref{tab:feature}. Given a pair of a sample and a target image, for each sample segment we compute the distance of the closest segment in the target image. The final (asymmetric) distance arises by simply averaging over all sample image segments, formally

\begin{equation}
\mbox{dist}_{asym.} (Q,X) = \frac{1}{\|Q\|}\sum_{i=1}^{\|Q\|} \min_{x_j \in X} \mbox{dist}(q_i,x_j)
\end{equation}

where $Q$ is the set of query images and $X$ is an image in the corpus. 

\subsubsection{Learning feature weights for image similarity search}
\label{sect:params}

The system ranks the images in the corpus based on the target image segments with the sample image segments. Unlike image classification where classifiers may be capable of learning the relative importance of the features, when considering distances in the feature space, we cannot distinguish between directions relevant or irrelevant with respect to image retrieval.

When we apply feature weight optimization to our particular task, we have to face three serious problems. First, training data consists solely of the three sample images of the topics. Second, relevance to certain topics are based on aspects other than image similarity such as the location of the scene. Third, the three sample images of the same topic are sometimes not even similar to one another.

Our method for training the image processing weights is based on a test for topic separation. We select those topics manually where the three sample images are similar to one another.  For the ImageCLEF~2008 Photo challenge we selected 20 topics: 01, 02, 04, 07, 14, 15, 17, 22, 24, 27, 33, 36, 41, 43, 45, 51, 53, 55, 58, 60 (see \citeother{arni:clef08:lncs:photo}).

The training data consists of image pairs with an identical number of pairs from the same topic and from different topics. Since our distance is asymmetric, we have six pairs for each topic that results in 120 positive pairs. The negative pairs are formed by selecting two random pairs from a different topic for each of the 60 sample images.

We optimize weights for AUC (Section~\ref{sect:eval}) of the two-class classification. Since the task at hand is computationally very inexpensive, we simply performed a brute force parameter search.

Given the post-campaign evaluation data, we could perform another manual parameter search to find the best performing weights in terms of the MAP of the retrieval system. As shown in Section~\ref{sect:cbir} we could reach very close to the best settings we found manually, a result that is in fact overfitted due to the use of all evaluation data. 

\subsubsection{Experiments}
\label{sect:cbir}

As a common evaluation metric for retrieval the quality of the systems are measured in Mean Average Precision and Precision at the top of the ranked list ( see Section~\ref{sect:eval}). 

We combine the scores of our text retrieval system (with or without query expansion) with the following visual relevance score. For a target image to be ranked we take each segment of a given topic sample image and find the closest segment in the target image.  We average distances over all these segments. Finally among the three sample images we use the smallest value that corresponds to the closest, most similar one. 

When combining the lower quality visual scores with the text retrieval scores, we use a method that basically optimizes for early precision but reaches very good improvement in MAP as well. Due to the lower quality of the visual scores, lower ranked images carry little information and act as noise when combining with text retrieval. Hence we replace all except the highest scores by the same largest value among them, i.e.\ after some position $i$, for all $j>i$ we let score$_j = {}$score$_i$. During our experiment we choose $i$ to be the first value where score$_i = {}$score$_{i+1}$. 

\begin{table}
\centerline{\mbox{\begin{tabular}[b]{|l|c|c|c|c|c|c|}\hline
&MAP    &P5    &P20   \\ \hline
$\ell_1$ uniform &0.0835 &0.3795 &0.2026 \\ 
$\ell_2$ uniform &0.0615 &0.3231 &0.1372 \\ 
$\ell_1$ TS &0.0970 &0.4564 &0.2103 \\ 
$\ell_2$ TS &0.0800 &0.4051 &0.1808 \\ 
$\ell_1$ best &0.0985 &0.4821 &0.2192 \\ 
$\ell_2$ best &0.0813 &0.4256 &0.1833 \\ \hline
txt &0.2956 &0.4462 &0.3769 \\ 
txt+qe & 0.2999 & 0.4821 & 0.3731\\ \hline
txt+$\ell_1$ uniform &0.3279 &0.5846 &0.4500 \\ 
txt+$\ell_2$ uniform &0.3130 &0.5538 &0.4154 \\ 
txt+TS $\ell_1$ &0.3344 &0.6103 &0.4487 \\
txt+TS $\ell_2$ &0.3206 &0.5795 &0.4295 \\
txt+best $\ell_1$ &0.3416 &0.6359 &0.4603 \\
txt+best $\ell_2$ &0.3200 &0.5538 &0.4321 \\
txt+qe+TS $\ell_1$ &0.3363 &0.6103 &0.4436 \\
txt+qe+best $\ell_1$ &0.3444 &0.6359 &0.4615 \\ \hline 
\end{tabular}}\quad
\raisebox{4.5cm}{\mbox{\begin{tabular}{|r|p{4cm}|}\hline
$\ell_1$ & $\ell_1$ distance between segments\\\hline
$\ell_2$ & $\ell_2$ distance between segments\\\hline
TS      & visual feature weights based on topic separation (Section~\ref{sect:params})\\\hline
uniform  & uniform weights\\\hline
best     & weights hand picked based on the evaluation data\\\hline
txt      & text based information retrieval\\\hline
qe       & query expansion\\ \hline
\end{tabular}}}
}
\caption{ImageCLEF~2008 Ad Hoc Photograhic Retrieval performance of different methods (left) with explanation on the right.}
\label{table:resultsphoto_2}
\end{table}

Our results are summarized in Table~\ref{table:resultsphoto_2} for a choice of 100 segments per image with the best segmentation method that uses a 7-level scale pyramid and Canny edge detection. We experimented with $\ell_1$ and $\ell_2$ distances between the segments, the previous performed better in all cases. As we expected, better CBIR scores translated into better combined scores. Our weight selection method based on topic separation (Section~\ref{sect:params}) finds weights that perform nearly as well as the overfitted best weight setting that we were only able to compute given all relevance assessment data and by far outperforms the uniform weight case. 

\begin{figure}
\includegraphics[scale=.5]{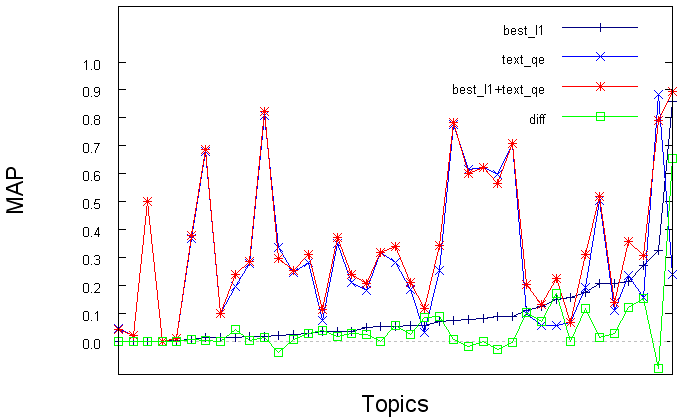}
\caption{Performance of different methods by topic. The diff line denotes the improvement of the CBIR over text retrieval with query expansion.}\label{fig:topic}
\end{figure}

\begin{table}
\centerline{\begin{tabular}[b]{|l|c|c|c|c|c|c|}\hline
&MAP    &P5    &P20   \\ \hline
RGB &0.0278 &0.1877 &0.0795 \\ \hline
RGB + Canny &0.0289 &0.1877 &0.0795 \\ \hline
RGB + pyramid  &0.0286 &0.1846 &0.0846 \\ \hline
RGB + Canny + pyramid &0.0314 &0.2103 &0.1000 \\ \hline
RGB+HSV &0.0538 &0.3077 &0.1308 \\ \hline
RGB+HSV + Canny &0.0549 &0.3077 &0.1308 \\ \hline
RGB+HSV + pyramid &0.0521 &0.2974 &0.1308 \\ \hline
RGB+HSV + Canny + pyramid  &0.0572 &0.3026 &0.1410 \\ \hline
\end{tabular}
}
\caption{Performance of the various segmentation methods}
\label{table:segmentres}
\end{table}

In Table~\ref{table:segmentres} we compare some variations of the segmentation method and the extracted features. In general the HSV color space is better than RGB but RGB yields additional improvement in combination. The use of both the scale pyramid and the Canny edge weight in the Felzenszwalb-Huttenlocher segmentation algorithm results significantly higher performance. As we can see in Fig.~\ref{fig:features}, even simple features (mean HSV values, segment size ratio and aspect ratio) are feasible due the relatively large number of segments per image. Out of the rest of the features the DFT gives the largest additional improvement while refined color histograms and shape add very little increase in MAP.

Figure~\ref{fig:topic} shows the performance of the best methods on the different topics. As it can be seen, the visual result improves text result in most of the topics with the exception of four topics (31, 60, 17 and 15) only. Interestingly, for four topics (23, 59, 50 and 53) the MAP improvement is higher than the visual MAP itself. 


\begin{figure}
\centering
\includegraphics[scale=.6]{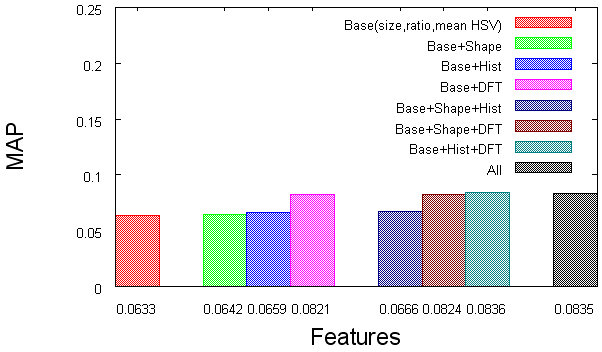}
\caption{Performance of different feature combinations.}\label{fig:features}
\end{figure}

\subsubsection{Summary}

In comparison to other participants at the challenge, our best text only submission ranked third out of 21 teams while our best automatic visual run would be the third best out of 12 teams (see \citeother{arni:clef08:lncs:photo} and \url{http://www.imageclef.org/2008/results-photo}). The best results \citeother{ah2008xrce} were given by the team XRCE (Xerox Research Center Europe). Their solution based on Fisher vectors over Gaussian Mixtures (see Section~\ref{sect:fisher_gmm}) of local Histogram of Oriented Gradients \citeother{hog} and RGB statistics, a complementary model to our segmentation based representation. In \citeauth{daroczy2010interest} we showed that even with a simple linear combination the two method complement each other. The segment matching model use relatively large number of segments per image based on our findings in \citeauth{deselaers2008overview}, where we showed that an automatic, finer re-segmentation of hand-made sample images significantly increase the quality of image matching. 

As a summary and the main statement of this section: 

\begin{itemize}
\item[] We described a modified, multi-scale Felzenszwalb-Huttenlocher graph-cut segmentation. The suggested segment matching based ranking increased the retrieval performance.
\end{itemize}

\noindent This section based mainly on our approaches to the ImageCLEF~2007-2009 campaigns \citeauth{daroczy2009sztaki_lncs,benczur2008multimodal,deselaers2008overview,daroczy2009sztaki_clef} where my contribution included the models and development of the visual retrieval system, particularly the segmentation, the feature extraction methods and the segment matching. 

\subsection{Fisher kernel over 2d lattices}
\label{sect:spat_fish}

In this section we describe a generative model for image classification based on Markov Random Fields over the local patches. As we discussed in Section~\ref{sect:fisher_gmm}, the Gaussian Mixtures perform well as an underlying generative approach for images, but exchangeability could be an issue if the layout matters. If we rearrange the samples (patches of a particular image) in an arbitrary way, then the Fisher vector of the resulting image will be the same as before, while the new image may be radically different. To overcome this we may model the layout as a Markov Random Field. Perronnin et al. \citeother{perronnin2007fisher} suggested to model an image as Gaussian Mixture over a set of detected keypoints (see Scale-Invariant Feature Transform \citeother{lowe1999object}) without considering their spatial relationship. It was extended later with a dense sampling instead of detected corner points and with multiple descriptors (e.g. Histogram of Oriented Gradients \citeother{hog} or color moments) over the neighbourhood of the sample points \citeother{perronnin2010improving,Zissermann2011} but still without describing the fine structure of the layout. The most common method to include the layout, in a shallow and rigid way, is the Spatial Pyramid Matching (SPM \citeother{LazebnikSpatial06}), which can be easily adopted to any kind of Bag-of-Features model (BoF \citeother{csurka2004visual}) even for the model proposed in this section. Another interesting extension of the common BoF is the Ordered Bag-of-Features \citeother{cao2010spatial}, a generalization of the SPM. In comparison to this methods, where the layout is considered only over a previously determined high-level structure, we would like to introduce a generative model over the samples to capture their spatial structure and compute Fisher kernel. The most similar result to ours is the visual phrases \citeother{zhang2009descriptive} where they consider the co-occurrence of visual words for image retrieval using k-means to generate a hard visual codebook. 

\subsubsection{The underlying generative model}

One option to include the layout into the generative model is to define a Markov Random Field (see Section \ref{sect:mrf}) over the samples (in our case local patches and not pixels). If we restrict the possible connections to nearest neighbours, the maximal clique size will be small, four. As an example in Fig.~\ref{fig:image_lattices} we can see several possible spatial layouts over samples on a 2d lattice (e.g. images). If we expand the model with more refined structures based on scale pyramids (Section~\ref{sect:segmentation}), depending on the pyramid the maximal clique size can increase to five. 

Let us define the energy function (Section~\ref{sect:mrf}) of an unknown lattice over a finite set of samples $X=\{x_1,..,x_T\}$ in $\mathbb{R}^d$ as

\begin{equation}
U(X=\{x_1,..,x_T\} \mid \alpha) = \sum_{c_i \in C_X} f(c_i \mid \alpha)
\end{equation}

where $C_X$ is the set of maximal cliques and $f(c \mid \alpha)$ is a positive, real function. We will call $\alpha$ as \textit{clique parameters}. Following the BoF type image model we can assume an underlying model for individual samples based on either a simple k-means or a Gaussian Mixture, formally 

\begin{equation}
f(c=\{x_1,..,x_t\} \mid \alpha) = \sum_{k_1=1,..,k_t=1}^K \alpha_{k_1,..,k_t} g(k_1,..,k_t \mid x_1,..,x_t)
\end{equation}

where $K$ is the number of clusters and $g$ is positive, real function measuring the probability of cluster assignments for the samples in the actual clique.   

\begin{figure}
\centerline{
\includegraphics[scale=.3]{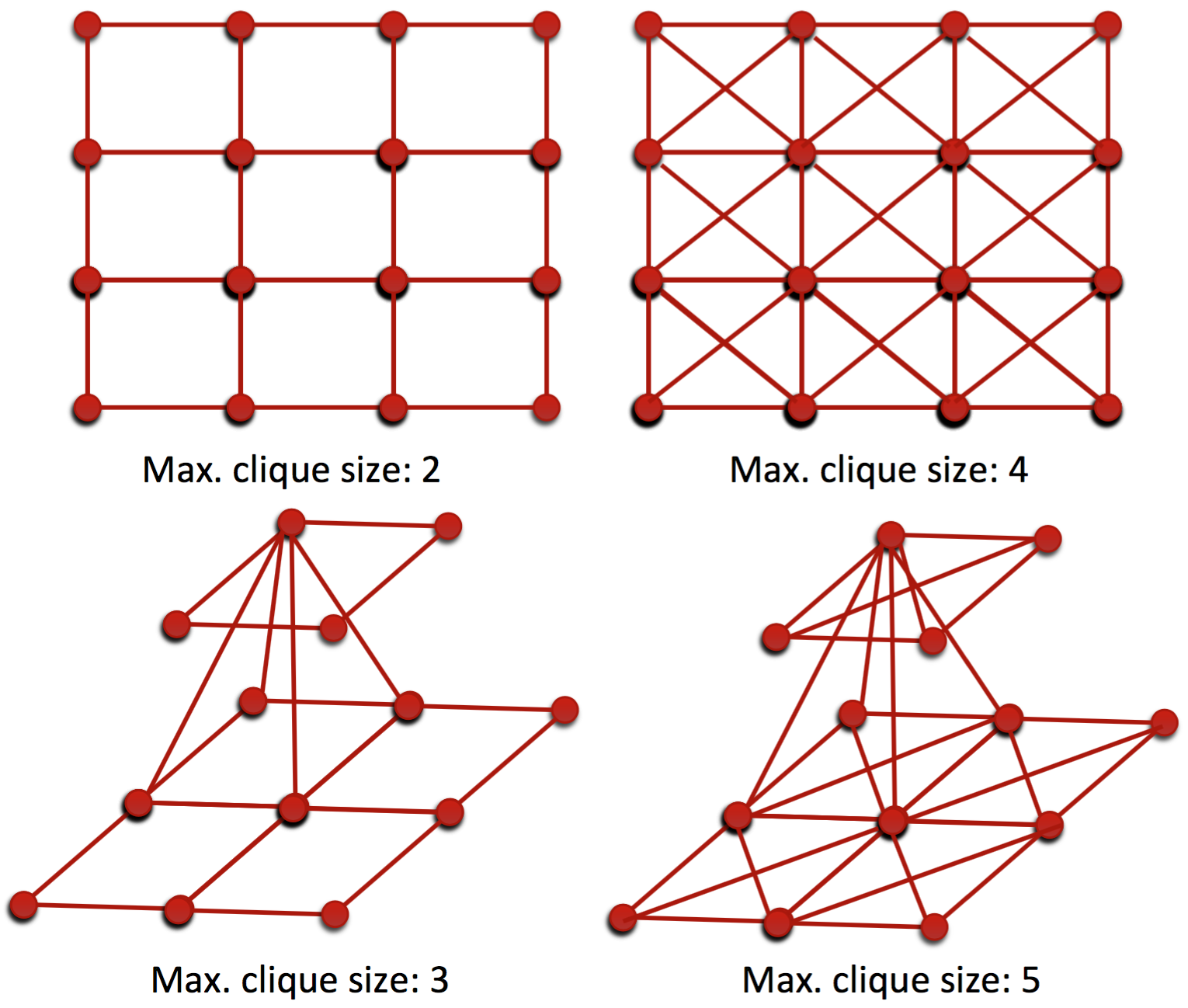}}
\caption{Maximal clique size of image layouts.}
\label{fig:image_lattices}
\end{figure}

Since the Gaussian Mixtures are proved to be one of the best performing generative models over images, we may approximate $f(c \mid \alpha)$ by assuming conditional independence between the cliques and the individual cluster assignments:

\begin{equation}
\label{eq:base_spat_pot}
f(c=\{x_1,..,x_t\} \mid \alpha) = \sum_{k_1=1,..,k_t=1}^K \alpha_{k_1,..,k_t} \Pi \gamma_{k_i}(x_i) 
\end{equation}

where $\gamma_{k_i}(x_i)$ is the \textit{membership probability} (see Section~\ref{section:GMM}). The assumption of the Gaussian Mixture as prior probability suggest us to expand the energy function with an additional term, 

\begin{equation}
\label{eq:spat_energy}
U(X\mid \alpha, \theta) = \sum_{c=\{x_1,..,x_t\} \in C_X} \sum_{k_1=1,..,k_t=1}^K \alpha_{k_1,..,k_t} \Pi \gamma_{k_i}(x_i)  + \sum_{t=1}^{T} \log \sum_{k_t=1}^K \omega_{k_t} g_{k_t}(x_t \mid \theta).
\end{equation} 

Before calculating the partial derivatives for the Fisher kernel, let us consider the connection between the lattice and the size of the parameter set. The second part of the energy function is a Gaussian Mixture thus $|\theta|=2Kd+K$ (see Section~\ref{section:GMM}) while the first part depends on the size of the cliques and the number of the Gaussians. With constant sized cliques ($c_{size}$) and shared parameters over the cliques the dimension of $\alpha$ is $K^{c_{size}}$ suggesting a careful consideration about the lattice and the number of Gaussians in the mixture. 

\vspace{5mm}
\textbf{Derivation of the Fisher Information}
\vspace{5mm}

We will derive that with a simply assumption (sharply peaked \textit{membership probabilities} by GMM, as in Section~\ref{sect:fisher_gmm}) the Fisher score by Gaussian Mixture is independent of the clique parameters in case of (eq. \ref{eq:spat_energy}).

Formally, let be a finite 2-dimensional lattice over samples in $\mathbb{R}^d$ and a Markov Random Field with energy function (eq. \ref{eq:spat_energy}). The partial derivative according to any parameter of the GMM, $\theta_i$ is

\begin{equation}
\frac{\partial \log p(X \mid \alpha,\theta)}{\partial \theta_i} = \mathbf{E_{\alpha,\theta}}\bigg[\frac{\partial U(X \mid \alpha,\theta)}{\partial \theta_i}\bigg] - \frac{\partial U(X \mid \alpha,\theta)}{\partial \theta_i}.
\end{equation}

Let use denote the Gaussian Mixture model as $p(X \mid \theta)$. Since $\mathbf{E_{\theta}}\bigg[\frac{\partial \log p(X \mid \theta)}{\partial \theta_i}\bigg] = 0$ we only need to prove that $\frac{\partial U(X \mid \alpha,\theta)}{\partial \theta_i}=\frac{\partial \log p(X \mid \theta)}{\partial \theta_i}$ or equivalently 
\begin{equation}
\frac{\partial \sum_{c=\{x_1,..,x_t\} \in C_X} \sum_{1 \leq k_i \leq K, \forall i \in \{1,..,t\}} \alpha_{k_1,..,k_t} \Pi \gamma_{k_i}(x_i)}{\partial \theta_i}=0.
\end{equation}

Because of the summation, let us calculate the derivatives for a single element of a clique:

\begin{equation}
\frac{\partial \alpha_{k_1,..,k_t} \Pi \gamma_{k_j}(x_j)}{\partial \theta_i} = \alpha_{k_1,..,k_t} \sum_j \frac{\gamma_{k_j}(x_j)} {\partial \theta_i} \Pi_{l \neq j} \gamma_{k_l}(x_l).
\end{equation}

Due the peakness property of the \textit{membership probability} the above equation is either zero (at least one of the probability values is zero) or equal to $ \alpha_{k_1,..,k_t} \sum_j \frac{\gamma_{k_j}(x_j)} {\partial \theta_i}$. Furthermore, by definition the derivatives according to the weight parameters of the GMM are 

\begin{equation}
\frac{\partial \gamma_{j}(x)}{\partial \omega_k} = \frac{\partial \frac{\omega_j g_j(x)}{\sum_l \omega_l g_l(x)}}{\partial \omega_k}=
\left\{
	\begin{array}{ll}
	    \frac{\gamma_k(x)- \gamma_k(x)^2}{\omega_k} & \mbox{if } j = k \\
		\frac{\gamma_k(x) \gamma_j(x)}{\omega_k} & \mbox{if } j \neq k 
	\end{array}
\right.
\end{equation}

and similarly to the mean and the variance of the Gaussians: 

\begin{equation}
\frac{\partial \frac{\omega_j g_j(x)}{\sum_l \omega_l g_l(x)}}{\partial \theta_{kd}}=
\left\{
	\begin{array}{ll}
		(\gamma_k(x) - \gamma_k(x)^2)\frac{\partial g_k(x)}{\partial \theta_kd} & \mbox{if } j = k \\
		\gamma_k(x) \gamma_j(x)\frac{\partial g_k(x)}{\partial \theta_kd} & \mbox{if } j \neq k 
	\end{array}
\right.
\end{equation}

Since both $\gamma_k(x) -\gamma_k(x)^2$ and $\gamma_k(x) \gamma_j(x)$ are zero if we assume peak \textit{membership probabilities}, we are done.  

The partial gradients according to the clique parameters (eq. \ref{eq:base_spat_pot}) is similar to the gradients in Section~\ref{sect:fisher_gmm} and do not depend on the values of the parameter set. Therefore we can derive the Fisher score as a straightforward formula:

\begin{equation}
\frac{\partial U(X \mid \alpha,\theta)}{\partial \alpha_{k_1,..,k_t}} = \sum_{c=\{x_1,..,x_t\} \in C_X} \sum_{k_1=1,..,k_t=1}^K \Pi \gamma_{k_i}(x_i)
\end{equation}

If we assume again peak \textit{membership probabilities}, the Fisher score is 

\begin{equation}
\frac{\partial U(X \mid \alpha,\theta)}{\partial \alpha_{k_1,..,k_t}} \approx \frac{\#\{ c | c=\{x_1,..,x_t\} \in C_X, \gamma_{k_i}(x_i)=1 \text{ }\forall x_i \in c\}}{|C_X|}.
\end{equation}

The final Fisher vector has two parts. The first part is the gradient according to the parameters of the Gaussian Mixture and a second part based on the clique parameters. In the next section we discuss some experiments based on the Gaussian-only model and the spatial model. 

\subsubsection{Experiments over the Pascal VOC dataset}
\label{sect:spat_res}

We carried out our experiments by using the Pascal VOC 2007 data set \citeother{everingham2010pascal}, one of the most popular benchmark for image categorization. The Pascal VOC 2007 task uses 5011 training images and a test set with 4952 images, each image annotated manually into predefined object classes such as cat, bus, person or airplane. Our choice of dataset gave us an opportunity to compare our experiments to the winner methods (without detection) of later challenges including the SuperVector coding (SV, \citeother{zhou2010image}) and Locality-constrained Linear Coding (LLC, \citeother{wang2010locality}). To justify our experiments, we compare them to the Improved Fisher Kernel (IFK) results in \citeother{IFK2010} and \citeother{Zissermann2011}. We do not include models based on deep convolutional networks \citeother{krizhevsky2012imagenet,he2015delving}, where the spatial layout are concerned naturally. The main reasons are the scalability of the high-dimensional BoF models and the necessity of the large training set to learn a deep network.

We extracted multiple feature vectors per images to describe the visual content. We employed two different fine sampling procedures, the very dense sampling (Exp.~4,5,6 in Table~\ref{table:photoannres_fish}) resulting in approximately 300,000 while the other (Exp.~1,2,3) about 72,000 (step size is equal to 3, similarly to \citeother{Zissermann2011}) keypoints (regions) per image. To describe the keypoints, we calculated grayscale HOG (Histogram of Oriented Gradients) with different sub-block sizes (4x4, 8x8, 12x12, 16x16 for Exp.~4,5,6 and 4x4, 6x6, 8x8, 10x10 for Exp.~1,2,3 as suggested in \citeother{Zissermann2011}) on a five layer scale pyramid. We reduced the original dimension (144) of the samples (low-level descriptors) to 96 by Principal Component Analysis (PCA). Additionally, we experimented with a color HOG variant where we concatenated RGB moments \citeother{xrce2010} with HOG and compressed into a 160 dimensional local descriptor by PCA (ColHOG, Exp.~6). The Gaussian Mixture Model (GMM) was trained on a sample set of 3 million descriptors with 512 and 64 Gaussians. Due the dimensionality of the spatial model (see \ref{sect:spat_fish}) we omit the connections between the layers and set to a simple Random Field with a maximal clique size of two (see lattice a) in \ref{fig:image_lattices}). Our overall procedure is shown in Fig.\ \ref{fig:overview}.

\newcommand{\ver}[1]{\begin{sideways}#1\end{sideways}}

\begin{table}
\begin{tabular}[b]{ l c c c c c c}\hline
& \ver{Fine sampling} & \ver{Descriptor} & \ver{Codebook size} & \ver{Spatial Pooling} & \ver{Dimension} & \ver{MAP} \\ \hline
LLC \citeother{Zissermann2011} 	& yes 	& SIFT	& 25k	& yes	& 200k	& .573 \\
SV \citeother{Zissermann2011} 	& yes 	& SIFT	& 1024	& yes	& 1048k	& .582 \\
IFK \citeother{IFK2010} 			& no	 	& SIFT	& 256	& no		& 41k	& .553 \\
IFK \citeother{IFK2010} 			& no	 	& SIFT	& 256	& yes	& 327k	& .583 \\
IFK \citeother{Zissermann2011} 	& yes 	& SIFT	& 256	& yes	& 327k	& .617 \\
IFK GMM Exp. 1				& yes 	& HOG	& 63		& no		& 12k	& .512 \\
IFK GMM Exp. 2  				& yes 	& HOG	& 507	& no		& 97k	& .558 \\
IFK GMM Exp. 3  				& yes 	& HOG	& 507	& no		& 97k	& .579 \\
IFK GMM Exp. 4  				& very 	& HOG	& 507	& no		& 97k	& .588 \\
IFK GMM Exp. 5  				& very 	& HOG	& 507	& yes	& 97k	& .625 \\
IFK GMM Exp. 6  				& very 	& ColHOG	& 512	& yes	& 655k	& .641 \\ \hline
\end{tabular}
\caption{Average MAP on Pascal VOC 2007}
\label{table:photoannres_fish}
\end{table}

\begin{figure}
\begin{centering}
\includegraphics[scale=0.6]{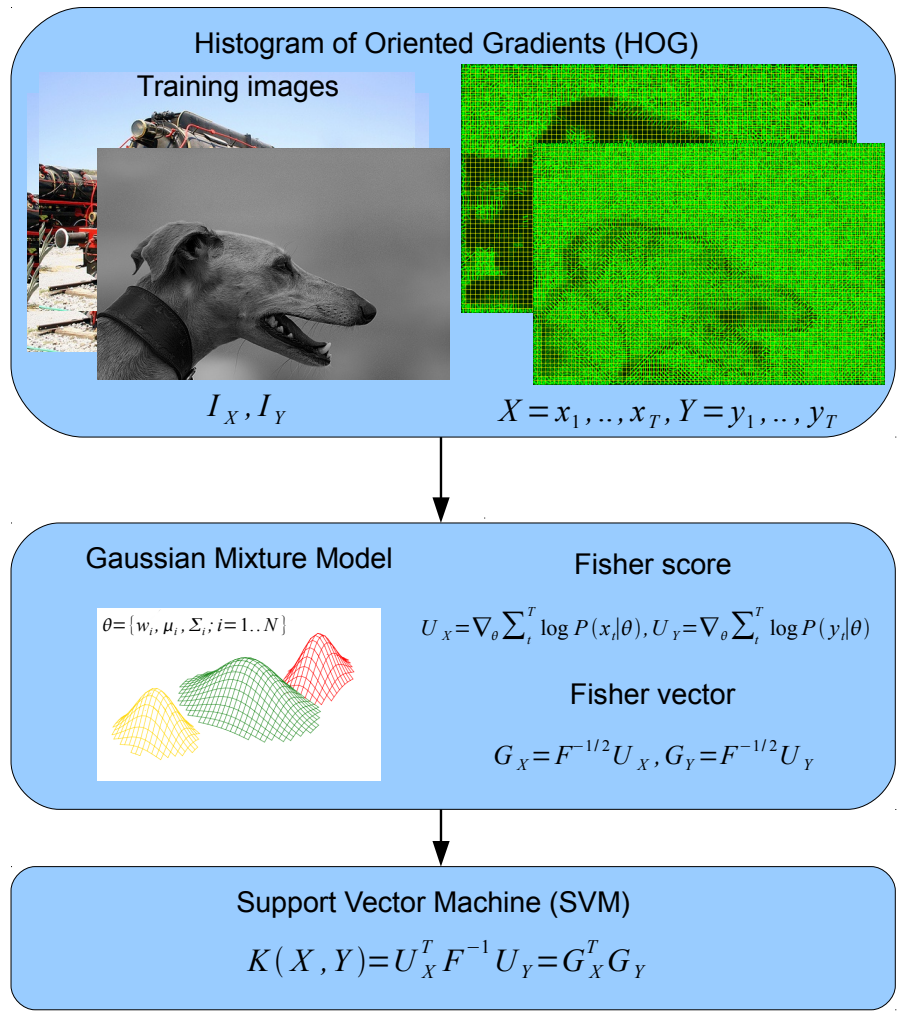}
\par\end{centering}
\caption{\label{fig:overview}Our classification procedure}
\end{figure}

\begin{table}
\centerline{\begin{tabular}[b]{ l c c c c c }\hline
          & \ver{air} \ver{plane} & \ver{bicycle} & \ver{bird} & \ver{boat} & \ver{bottle} \\ \hline 
Exp.4 fine SP& .801 & .665 & .509 & .738 & .279 \\
IFK no fine SP \citeother{IFK2010}  & .757 & .648 & .528 & .706 & .300 \\
IFK fine SP \citeother{Zissermann2011}  & .789 & .674 & .519 & .709 & .307 \\ \hline
          & \ver{bus} & \ver{car} & \ver{cat} & \ver{chair} & \ver{cow} \\ \hline
Exp.4 fine no SP & .646 & .811 & .608 & .520 & .390 \\
IFK non fine SP \citeother{IFK2010} & .641 & .775 & .555 & 556 & .418 \\
IFK fine SP \citeother{Zissermann2011} & .721 & .799 & .613 & .559 & .496 \\ \hline
	& \ver{dining} \ver{table} & \ver{dog} & \ver{horse} & \ver{motor} \ver{bike} & \ver{person} \\ \hline 
Exp.4 fine no SP & .511 & .453 & .780 & .643 & .843 \\
IFK non fine SP \citeother{IFK2010}  & .563 & .417 & .763 & .644 & .827 \\ 
IFK fine SP \citeother{Zissermann2011} & .584 & .447 & .788 & .708 & .849 \\ \hline 
	& \ver{potted} \ver{plant} & \ver{sheep} & \ver{sofa} & \ver{train} & \ver{tv/} \ver{monitor} \\ \hline
Exp.4 fine no SP & .293 & .446 & .499 & .779 & .529\\
IFK non fine SP \citeother{IFK2010}  & .283 & .397 & .566 & .797 & .515\\
IFK fine SP \citeother{Zissermann2011} & .317 & .510 & .564 & .802 & .574 \\
\end{tabular}
}
\caption{MAP on Pascal VOC 2007 data set}
\label{table:categories}
\end{table}

\begin{table}
\centerline{
\begin{tabular}[b]{ l c c c c }\hline
 & normalization & \#Gaussians & MAP & Gain\\ \hline
IFK GMM Exp.1 & $\alpha=0.5$ & 63 & 0.512 & \\
IFK GMM Exp.2 & $\alpha=0.5$ & 507 & 0.558 & \\
IFK GMM Exp.3 & $\alpha=0.125$ & 507 & 0.582 & \\
IFK GMM Exp.1 + spatial & $\alpha=0.5$ & 63 & 0.538 & +5.0\% \\
IFK GMM Exp.2 + spatial & $\alpha=0.5$ & 507 & 0.580 & +3.9\% \\
IFK GMM Exp.3 + spatial & $\alpha=0.125$ & 507 & 0.590 & +1.3\% \\ \hline
\end{tabular}}
\caption{MAP results of the Spatial Fisher kernel over Pascal VOC 2007 dataset}
\label{table:gibbs}
\end{table}

We used the resulting kernels after applying the normalizations suggested in \citeother{IFK2010} with different exponents ($\alpha=\{0.125,0.5\}$) for training linear SVM models by the LibSVM package \citeother{CC01a} for each of the 20 Pascal VOC 2007 concepts independently. 

We trained a GMM over a very dense sample by using our highly efficient GPU based algorithm. Our source code along with previously trained GMM models for different patch descriptors and codes for Fisher vector calculation is available free for research use at \url{https://dms.sztaki.hu/en/project/gaussian-mixture-modeling-gmm-and-fisher-vector-toolkit}.

\subsubsection{Evaluation}

Although spatial pooling is a widely used and effective extension to naive bag-of-words models \citeother{LazebnikSpatial06,IFK2010,Zissermann2011}, we applied a reduced spatial pooling only to the very fine sampling models (the dimension of the Fisher vector is sampling independent). Our consideration is based on the fact that the standard spatial pooling methods (split the images into 1x1, 3x1, 2x2 regions) contribute a huge increase in the dimension of the representation per image (8 times in \citeother{IFK2010,Zissermann2011}). Despite the 3.3 times lower dimension of Exp.~4 the results are comparable to IFK fine SP with Spatial Pooling \citeother{Zissermann2011} in five categories (within 5 percent range) and are better in four categories (airplane, boat, car and dog, Table~\ref{table:categories}). In our experiments the densely sampled joint Color HOG descriptor with reduced spatial pooling performed best. 
 
For the spatial model we omitted the very dense sampling due the closeness of the samples. We extracted the samples on five scales in the spatial pyramid. The spatial model outperformed the baseline methods by $1.3-5\%$ (Table~\ref{table:gibbs}). 

\subsubsection{Summary}

In this section we described a spatial bag-of-words model based on local rigid descriptors. Additionally, we showed that very dense sampling over a scale pyramid and the Color HOG descriptor may increase the performance of the traditional GMM based Fisher vector. We reviewed the Fisher kernel method for images and described the very fine sampling in \citeauth{daroczy2013fisher} while the efficient GPU implementation was introduced in \citeauth{daroczy2012sztaki}. 

As a summary and the main statement of this section:

\begin{itemize}
\item[] The Fisher scores according to the Gaussian Mixture parameters are independent of the clique parameters if the \textit{membership probabilities} are sharply peaked and the proposed energy function over the lattice is multiplicative. Therefore we can derive an approximated Fisher kernel.
\end{itemize}

\noindent My contribution included the implementation and evaluation of the methods and the theoretical part of the spatial model. The spatial model is an unpublished joint work with Levente Kocsis, Istv\'an Petr\'as and Andr\'as Bencz\'ur. 

\subsection{Visual concept detection over the Yahoo! MIR Flickr dataset}
\label{sect:vcdt}

Images are rarely being present alone, usually we can extract some content related textual or other non-visual information such as geo-location or date from their context. Besides non visual meta features we can think of any visual representation as an individual modality. Altogether we can easily define a set of very diverse distance functions over images. 

In this section we describe our approach to the ImageCLEF 2012 Photo Annotation task \citeauth{daroczy2012sztaki} and additionally we experiment with a segmentation based model. The main challenge is to select proper image processing and feature extraction methods for given classification and pre-processing framework. Our image descriptors included spatial pooling based Fisher vectors \citeother{perronnin2007fisher,LazebnikSpatial06} calculated on point descriptors \citeother{hog,mikolajczyk2005comparison,harris1988combined} such as Histogram of Oriented Gradients and Color moments \citeother{hog,xrce2010}. We adopted several different methods to measure the similarity of images based on their Flickr tags. Beside Jensen-Shannon divergence, we used a modified version of Dhillon's biclustering algorithm \citeother{dhillon2003itc} to explore deeper connections between the images and the Flickr tags. The annotation method for segments based on an improved version of the hierarchical graph-cut segmentation (Section~\ref{sect:segmentation}).

The section is organized as follows. First, we discuss the problem of image and segment labelling. Next, we describe our visual feature extraction method and the combination of multiple modalities via biclustering before the experiments. 
 
\subsubsection{Related results}

Image segment labelling \citeother{jeon2003automatic,he2004multiscale,shotton2006textonboost,duygulu2006object,li2010optimol} typically relies on small data sets such as the Corel image database where regions and contours are labeled.  For example Duygulu et al.\ solve a task very similar to ours by considering image labelling as a machine translation task, however they use a small text vocabulary of 80 words. ImageNet, the largest image ontology \citeother{deng2009imagenet} consists of 1000 synsets with SIFT features at present.

Object detection methods are capable of learning the bounding box or the shape of the object \citeother{vedaldi2009multiple}. These methods learn specific object models with specific training sets. In these results, models are trained for a predefined list of a few dozen of objects only. Our goal is to label by a much richer vocabulary (in our case Flickr tags) such that object specific methods are infeasible both on the human annotation and on the machine learning side.

Closer to our task is the so-called (single) ambiguous setting when images are annotated by objects from a predefined set, however the location of the objects is not given. Multiple Instance Learning \citeother{galleguillos2008weakly} is a framework for learning from data with ambiguous labels and can be used for example to localize the objects. Flickr tags (or any implicit Web annotation), however, follow no clear notion of objects, as observed by \citeother{schroff2011harvesting} who use Web annotation to harvest images of a few predefined classes. Compared to both object localization and image harvesting, our task can be considered  double ambiguous.  

Another method for exploiting cross-media relations is blind feedback for retrieval \citeother{zhou2003relevance} that is also used for automatic labelling \citeother{jeon2003automatic}. This latter result however considers a fixed set of annotation terms.  In addition, the scalability of blind feedback for batch processing large open vocabularies remains unclear.

Unlike other labelling approaches such as \citeother{jeon2003automatic,galleguillos2008weakly}, we overcome the computational bottleneck by running modelling over GPUs to build a generative model for annotation.  We consider the use of dense BoF models crucial (see Section~\ref{sect:spat_res}). 

\subsubsection{Visual feature extraction}

As we discussed previously in Section~\ref{sect:spat_fish}, among a large number of BoF models (super vector \citeother{supvec}, kernel codebook \citeother{van2008kernel}, locality-constrained \citeother{wang2010locality} to name a few), GMM based Fisher kernel \citeother{perronnin2010improving} appears best by the evaluation work of \citeother{Zissermann2011}, hence we choose the same method. Regarding the spatial image descriptors used, the spatial pyramid matching (SPM) kernel have been highly successful \citeother{yang2009linear,lin2011large}. Our patch sampling strategy included a dense grid and a Harris-Laplace point detection. Similarly to the previous section we calculated HOG and RBG color statistics for each patch. We also calculated a separate Fisher vector on the Harris-Laplacian detected corner descriptors. As by our GMM implementation we were able to compute all the \textit{membership probabilities} (see Section~\ref{section:GMM}) for each descriptor without significant loss of time, which resulted a strongly dense Fisher vector even in fp32 due the density of the sampling.

Our starting point for the segmentation based annotation is the Fisher kernel over spatial pooling that we replaced by also using image segments via very dense sampling, of importance for the image indexing application. Since our main objective is to classify images and their regions using a bag of features model, we do not need to perfectly separate objects and the background. Instead, by experimentation we determine the optimal number of segments (around ten) that improves the overall system the best. Our segmentation method is a modified version of the hierarchical graph-cut based algorithm (Section~\ref{sect:segmentation}). The main difference is that the condition of the join method depends also on the average weights of the detected edges inside the regions and the average RGB statistics of the regions. 

\subsubsection{Biclustering algorithm}
\label{sect:bic}

Since adopting Jensen-Shannon divergence on probability distributions using Flickr tags is an excellent image similarity measure \citeauth{daroczy2011sztaki} our goal was to expand it with determining deeper interrelations between the tags and the documents. 

Our assumption is that biclusters indicate connection between the features and the text such as blue color and ``pool'', white color and ``snow'', black and white histogram and ``black and white'' that can be used to select relevant segments of the sample image.  Hence we compute an interrelated segment and word clustering together with a weight for each pair of a segment and a word cluster.

We apply biclustering an expanded version of Dhillon's information theoretic co-clustering algorithm. Dhillon's biclustering method  \citeother{dhillon2003itc} is a bidirectional clustering algorithm that is capable of clustering along multiple aspects at the same time by switching between clustering along two axis. Biclustering explores a deeper connection between instances and attributes than the usual one-directional clustering methods. The basic idea is to consider the data as a joint distribution and maximize the mutual information of row and column clusters.

Formally, let $X$ and $Y$ be discrete random variables that take values in the sets of instances and attributes respectively. Let $p(X, Y)$ denote the joint probability distribution of $X$ and $Y$. Let the $k$ clusters of $X$ be $\{ \hat{x}_{1}$, $\hat{x}_{2}$, \ldots ,$\hat{x}_{k}\}$, and let the $\ell$ clusters of $Y$ be $\{\hat{y}_{1}$, $\hat{y}_{2}$, \ldots, $\hat{y}_{\ell}\}$. We are interested in finding maps $C_{X}$ and $C_{Y}$, 

\begin{equation}
C_{X}\colon x \mapsto \{ \hat{x}_{1},
\hat{x}_{2}, \ldots , \hat{x}_{k} \}, \  C_{Y}\colon y \mapsto \{ \hat{y}_{1}, \hat{y}_{2}, \ldots ,
\hat{y}_{\ell} \}.
\end{equation}

For brevity we write $\hat{X} = C_{X}(X)$ and $\hat{Y} = C_{Y}(Y)$ where $\hat{X}$ and $\hat{Y}$ are random variables that are a deterministic function of $X$ and $Y$, respectively. The algorithm of \citeother{dhillon2003itc} iterates between computing row and column clusters.

In comparison to Dhillon's algorithm we measure document similarity with a combination of visual and textual similarity values. We chose to adopt Jensen-Shannon divergence instead of Kullback-Leibler used in the original article. Our choice was inspired by our experiences with other datasets where Jensen-Shannon divergence resulted a significantly better clustering quality instead of Kullback-Leibler \citeauth{siklosi2012content}. In order to refine the clustering with non-textual information we added a similarity measure based on the visual features.
    
Translation is not literally, complex visual objects such as person, vehicle, building or landscape are characterized by single terms: girl, bicycle, hotel, hill. The corresponding visual feature translation will be a fuzzy set of several likely regions, color or texture.  On the other hand simple visual features such as large blue regions may correspond to water (lake, sea, swimming pool), sky and grass, woods, forest or hill.

Our method is a co-learning procedure of features and words that extends the soft clustering process for defining the visual features.  Two popular clustering methods are k-means \citeother{TSK} and Gaussian Mixtures (Section~\ref{section:GMM}). The direct combination with tag text would be Gaussian Mixture co-learning, however this method is computationally unfeasible since GMM itself incurs a very high computational cost.

Instead we take a two-layer procedure. First we build a complete Fisher vector over the image descriptors. Given the Fisher vector, we may compute the distance between two images, segments, or a segment and an image. Next we construct a matrix with rows corresponding to images (or segments) and part of the columns to the same segments and another part to the terms appearing in the tag of the image.  

We slightly modify the procedure for computing the new cluster index of image $x$ (or segment) by using the content. We normalize both the Jensen-Shannon divergence over the word incidence matrix and the similarity values into $[0,1]$ and take a weighted combination:

\begin{equation}
C_{X}^{(t+1)}(x) = \mbox{argmin}_{\mbox{\hspace{-.5cm}$\ \atop\hat{x}$}}
      \hspace{.2cm} \left\{ JS\left(
      \frac{p(x,Y)}{p(x)}
     \middle|\middle|
       p(Y) \cdot
            \frac
                  {p (\hat{x},\hat{y})}
                  {p (\hat{x}) \cdot p (\hat{y})}
\right)
+
w \cdot D (x, \hat{x})\right\}
\end{equation}

where $D(x,\hat{x})$ is the average distance of $x$ from the cluster elements under the Fisher vector. We resolve ties arbitrarily.

Since Dhillon's \citeother{dhillon2003itc} method is  based on information theoretic distances, the raw tf values give best performance for biclustering. Normalized versions such as tf.idf or the BM25 weighting scheme performs significantly worse and is omitted for further consideration. 
    
\subsubsection{Uniform representation}
\label{sec:uniform}

Efficient combination of different feature sets based on a wide range of visual modalities is one of the main problems of image classification. This problem becomes more complex if we have additional non-visual features such as Flickr tags. Our starting point was a widely used technique: learning SVM models on textual and visual Bag-of-Words models \citeother{vandeSandeTPAMI2010,csurka2004visual,Nowak10}. The selection of the ideal kernel depends on both of the original feature space and the class variable. Therefore the selection procedure is computationally expensive. 

We used a dense uniform representation of the basic representations considering to avoid the MKL problem combining modality adaptive similarity based feature transforms, a model closely related to the similarity kernel (Section~ \ref{sec:simker}). Adopting distance based feature transform for classification using the training set is a well-known technique. Sch\"olkopf \citeother{scholkopf} showed that a class of kernels can be represented as norm-based distances in Hilbert spaces and Ah-Pine et al. \citeother{ah2008xrce} applied L1-norm based feature transformation measuring the distance from the Fisher vectors of the training set for image classification with excellent results. 

Let us consider a reference set of documents $S=\{s_1,..,s_{|S|}\}$ and their corresponding representations $s_i=\{s_{i_{1}},..,s_{i_{|R|}}\}$. We define the final \textit{uniform representation} of a document $X$ over the set of representations $R$ of a reference set $S$ as 

\begin{align}
 L_R(X,S)=[\sum_{r=1}^R\beta_r sim_r(X_r,S_{1_r}),..,\sum_{r=1}^R\beta_r sim_r(X_r,S_{{|S|}_{r}})]
\end{align}

where $\sum \beta_r = 1$ with $\forall \beta_i \geq 0$, $sim_r$ denotes the selected similarity measure on basic representation $r$. The define the normalized representation as

\begin{align}
 L'_R(X,S)=\frac{1}{\mathbf{E}_T[\mathrm{e}^{-\sum_{r=1}^R \beta_r sim_r(T,S)}]}[\mathrm{e}^{-\sum_{r=1}^R\beta_r sim_r(X_r,S_{1_r})},..,\mathrm{e}^{-\sum_{r=1}^R\beta_r sim_r(X_r,S_{{|S|}_{r}})}]
\end{align}

where the expected value is taken over the training set, The dimensionality of this uniform representation is the cardinality of the reference set. Note, if we constrain the model to a single modality, the uniform representation with negative weights is the same as the energy function in the pairwise similarity although the normalization in the similarity kernel is not present.   

\subsubsection{Reference set selection and weight determination}

Considering the properties of the SVM the proper selection of the reference set could decrease significantly the demanding computational time of solving the standard dual problem. In addition, we had the ability to combine textual and visual content before the classification without increasing the dimension of the learning problem against the standard MKL methods. Although the obvious reference set is the training set, but it is not necessary. The transformed feature space captures the relation between the document and a set of documents in various aspects. Our initial assumption was, if we choose a set of documents various enough to be as samples for a concept, this set of documents should be informative enough to use them as reference documents. In other words, we are seeking for the minimal set of documents without affecting significantly the quality of the learning procedure. 

To determine the reference set we defined a ranking for the images according to their annotations. The rarer in the training set a concept is, the higher the score of its specimens will be. We cut the list where the selected documents contained at least a specified quantity of positive samples for all categories. We set the minimal amount of positive samples to $p*N$ where N is the number of training images. If a category did not have the minimal amount of positive instances all the samples were included. The resulted subset of training images using $p=0.01$ ($1$\%) contained  only $6260$ images out of the original $15k$ training images. Since the dimension of the combined representation equals with the number of images in the reference set this selection reduced the dimension by more than $50$\%. 

To identify the weight vector $\beta$ of the basic representations per class we sampled the training set. We used totally $5k$ images for training and $5k$ images for validation. We trained binary SVM classifiers separately for each representation and used a grid search method to find the optimal linear combination per class. 

\subsubsection{The Yahoo! MIR Flickr dataset}

In our experiments we used the Yahoo! MIR Flickr dataset containing $15k$ images as the training set and $10k$ images as a test set  \citeother{bartFlickr12}. The dataset was used for various challenges such as ImageCLEF 2012 Photo Annotation task \citeother{bartFlickr12} and in recent articles \citeother{liu2014selective,binder2013enhanced,bart2013special}. The aim is to detect the presence of $94$ categories (a wide variety of concepts not limited to objects, e.g. daylight, indoor, underwater or citylife) in terms of their visual and textual features. First, we discuss our experiments at the challenge \citeauth{daroczy2012sztaki} then we expand with new results. 

\subsubsection{Experiments and results over the ImageCLEF 2012 Photo Annotation challenge}

All of our submissions for the ImageCLEF 2012 Photo Annotation challenge used both visual and textual features. The main differences were the number of training images used for classification and the size of the reference set. All the runs included the following basic representations: HOG based Fisher vectors (extracted on full image, splitted into 3x1 and Harris-Laplacian detected points), Color moment based Fisher vectors (extracted on full image, splitted into 3x1 and Harris-Laplacian) and Jensen-Shannon divergence using Flickr tags as probability distributions (Table \ref{table:photoannres}). By biclustering we computed 2000 document (image) and 1000 terms clusters. As in \citeauth{siklosi2012content} with web pages we described images by distances from image clusters determined by biclustering. 

\begin{table}
\centerline{
\begin{tabular}[b]{ l c c c c }\hline
 & Macroblock sizes & \#Gaussians & Pooling & MAP \\ \hline
HOG & 16x16,48x48 & 512 & full & 0.2433 \\
HOG & 16x16,48x48 & 512 & HL & 0.2170 \\
HOG & 16x16,48x48 & 512 & 3x1 & 0.2399 \\
HOG & 16x16,48x48 & 512 & all & 0.2517 \\
Color & 16x16,48x48 & 256 & full & 0.2106 \\
Color & 16x16,48x48 & 256 & HL & 0.2092 \\
Color & 16x16,48x48 & 256 & 3x1 & 0.2131 \\
Color & 16x16,48x48 & 256 & all & 0.2233 \\
Color+HOG & 16x16,48x48 & 256 \& 512 & all & 0.2771 \\ 
\end{tabular}
}
\caption{Experimenting on visual descriptors, both the training set and the validation set contained 5k images}
\label{table:visdesc}
\end{table}

In order to determine the parameters of the combined representation we experimented on the basic features using a subset of the training set. It can be seen in Table \ref{table:visdesc} that color moment and HOG descriptors are complement each other. Although the number of corner detected keypoints was considerably less than at both the full and the 3x1 poolings, we measured small performance differences between them. For Flickr tags we tested three methods (Table \ref{table:txt}). As noise reduction we selected the top 25,000 Flickr tags as vocabulary. The refined biclustering using visual similarity and Jensen-Shannon divergence outperformed the Jensen-Shannon divergence and the purely tag based biclustering. We experimented with the parameter $p$ for proper reference set selection. The best uniform representation included all visual similarity values and Jensen-Shannon divergence. It can be seen in Table \ref{table:ranking} that the performance loss was negligible even using less than half of the training set as reference set. If we left only the 11.9\% ($p=0.01$) to construct the reference set the performance dropped significantly.

\begin{table}
\centerline{
\begin{tabular}[b]{ l c c c c c }\hline
 & Train. set & Ref. set & Perc. & MAP & Loss\\ \hline
jch5k & 5000 & 5000 & 100.0\% & 0.3485 & \\
p=0.10 & 5000 & 4280 & 85.6\% & 0.3485 & 0.0000 \\
p=0.05 & 5000 & 3630 & 72.6\% & 0.3473 & 0.0012 \\
p=0.03 & 5000 & 2414 & 48.3\% & 0.3448 & 0.0037 \\
p=0.01 & 5000 & 595 & 11.9\% & 0.3082 & 0.0403 \\
\end{tabular}
}
\caption{Reference set selection}
\label{table:ranking}
\end{table}

\begin{table}
\centerline{
\begin{tabular}[b]{ l c c }\hline
 & Method & MAP \\ \hline
Uniform & JS div. & 0.2554 \\
Bicluster & JS div. & 0.2185 \\
Bicluster & JS + Vis & 0.3133 \\
\end{tabular}
}
\caption{Biclustering of Flickr tags and images}
\label{table:txt}
\end{table}

\begin{table}
\centerline{
\begin{tabular}[b]{ l c c}\hline
          & Dimension & sparsity\\ \hline
HOG Fisher vector & 98304 & dense \\
Color Fisher vector & 49152 & dense \\
Flickr tag tf & 25000 & very sparse \\
Uniform & 15k/6.2k & dense \\
Biclustering & 2000 & dense \\
\end{tabular}
}
\caption{Dimension of the basic representations}
\label{table:dim}
\end{table}

During the challenge we submitted only multimodal results. In $jch10ksep$ we used the ranked reference set with $6260$ images and adopted an annotation category based weighting scheme for the combination ($19$ different weight vectors). We trained binary SVM classifiers per class using a reduced training set containing only $10k$ images. 

\begin{table}
\centerline{
\begin{tabular}[b]{ l c c c c c c }\hline
          & TrainSVM & RefSet & Weightn. & MiAP & GMiAP & F-ex \\ \hline
jchfr15k & 15k & 15k & fix & 0.4258 & 0.3676 & 0.5731 \\
jch10ksep & 10k & 6.2k & adapt. & 0.4003 & 0.3445 & 0.5535 \\
jchb10ksep & 10k & 6.2k & adapt. & 0.3972 & 0.3386 & 0.5533 \\ \hline
\end{tabular}
}
\caption{MiAP, GMiAP and F-ex results of basic runs}
\label{table:photoannres}
\end{table}

Additionally to $jch10ksep$, in $jchb10ksep$ we added a refined biclustering representation with $2k$ clusters to the common representations. Notice that by biclustering the dimension of the representation was significantly the lowest of all (Table \ref{table:dim}).

Our best performing method at challenge ($jchfr15k$ in Table \ref{table:photoannres}) used the total training set as reference set and the binary SVM models were trained on the whole training set ($15k$) per class. The adopted weight vector $\beta$ were the same for each class. In comparison to other teams our best run achieved the second highest MAP, MiAP, GMiAP (interpolated versions of MAP) and F-measure scores among $18$ participants  \citeother{bartFlickr12}.

\subsubsection{Additional experiments and segment annotation}

The model we used at the challenge to describe the images visually handled the HOG and color based descriptors independently till the learning. To fit our segmentation based biclustering model, we modify the image feature extraction part. We compute a single, ColHOG based Fisher vector per segment (as in Section~\ref{sect:spat_fish}) with $512$ Gaussians. To describe the segments properly, we increase the density of the sampling grid by upscaling the images to avoid Fisher scores based on too small amount of local descriptors. For the segmentation based bicluster we increased the number of document (image) clusters from $2000$ to $5000$.

\begin{table}
\centerline{
\begin{tabular}[b]{ l c c c c}\hline
Method  & Modality & Fusion & MiAP & Gain\\ \hline
Dense global ColHOG, 512 Gaussians (CH) & V & & 0.3530 &\\
Uniform Jensen-Shannon divergence (JS) & T & & 0.2957 &\\
Biclustering & M & bicluster & 0.3770 &\\ 
Biclustering segmentation (Bic) & M & bicluster & 0.3770 & \\ 
JS + CH (baseline) & M & late & 0.4137 & +0.0\% \\
jchfr15k (JSCH) \citeauth{daroczy2012sztaki} & M & early & 0.4257 & +2.9\% \\ 
JSCH + Bic & M & late & 0.4330 & +4.66\%\\
JSCH + JS + CH & M & late & 0.4467 & +7.97\% \\ 
JSCH + Bic + JS + CH & M & late & 0.4498 & +8.72\% \\ \hline
LIRIS@ImageCLEF2012 winner \citeother{liu2014selective} & M & & 0.4367 \\
\end{tabular}
}
\caption{Experiments on the MIR Flickr dataset where T - text only, V - visual only and M means the run is multimodal.}
\label{table:photoannres_new}
\end{table}

Our main experiment measures the quality of the distance vectors obtained by biclustering. The performance of our three baseline models is seen in the first three rows of Table~\ref{table:photoannres_new}. The first method (CH) uses very dense sampling and computes the Fisher kernel over the combined HOG and color descriptors obtained from the full image, as described in Section~\ref{sect:spat_fish}. Note with the reduced spatial pooling the performance increases to $0.3674$ in MiAP, but even without spatial pooling the Fisher based on joint Color HOG outperforms the best visual run at challenge (0.3481 in MiAP \citeother{bartFlickr12}). The second kernel (JS) is simply unified vector based only on the Jensen-Shannon divergence of the Flickr tags. Finally the third method is our ImageCLEF~2012 submission  \citeauth{daroczy2012sztaki}. Next we show two multimodal results where the modalities are combined by biclustering only. The difference in the two methods is that the first one considers the entire image only while the second one takes each segment as a row. In spite of the promising results on the Pascal VOC 2007 dataset, the segmentation did not improve the classification. However with the same classification quality we obtain segment labels by the method. Sample segment labels for Pascal VOC 2007 are shown in Fig.~\ref{fig:topsegvoc} and for MIR Flickr in Fig.~\ref{fig:topsegclef}.

\begin{figure}
\centerline{
\includegraphics[scale=.5]{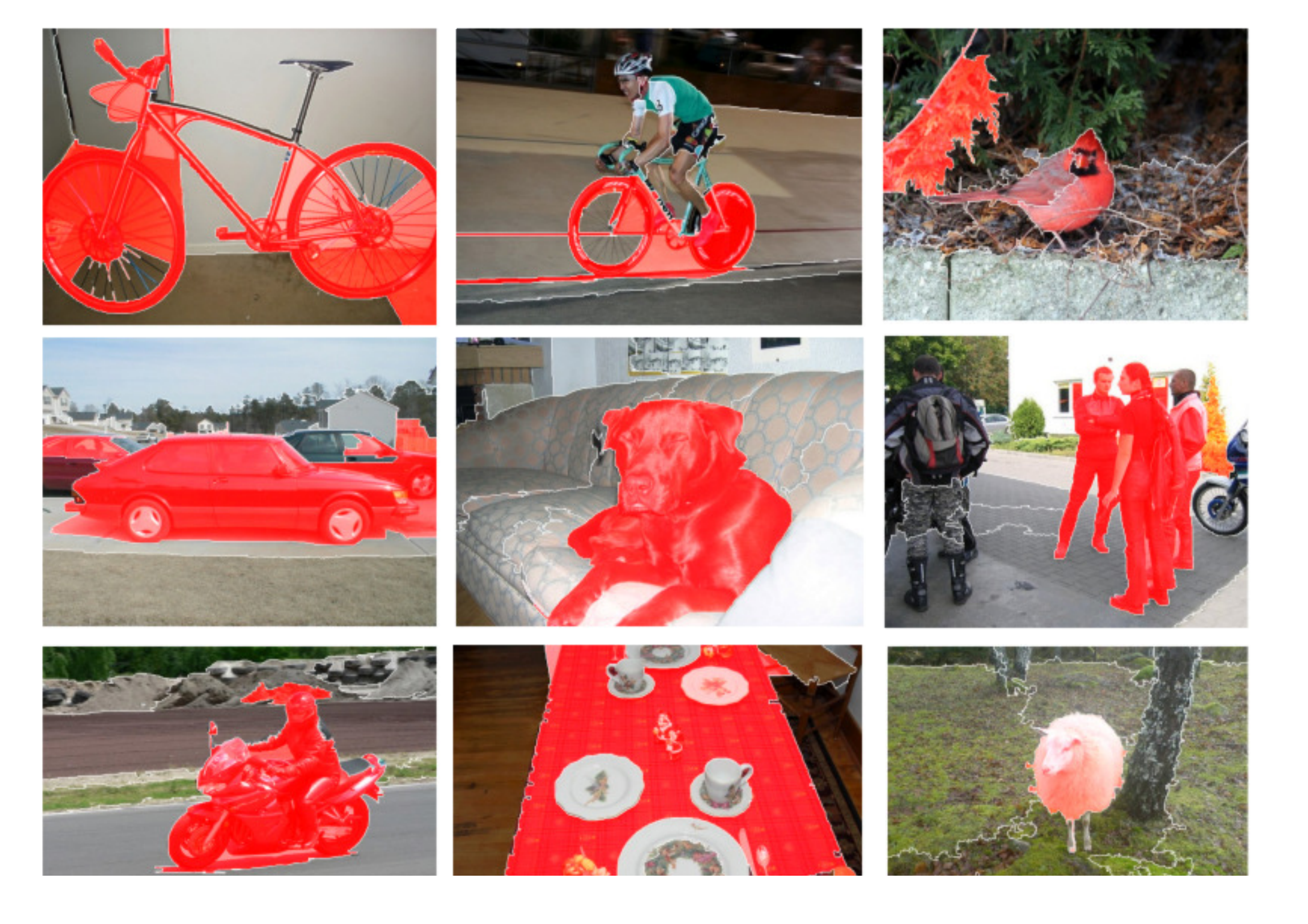}
}
\caption{Examples of relevant segments from the highest ranked test segments in the Pascal VOC 2007 dataset.
Categories from top left: First row: 1-bicycle, 1-bicycle, 2-bird. Second row: 6-car, 11-dog, 14-person. Third row: 13-motorbike, 10-diningtable, 16-sheep.}
\label{fig:topsegvoc}
\end{figure}

\begin{figure}
\centerline{
\includegraphics[scale=.5]{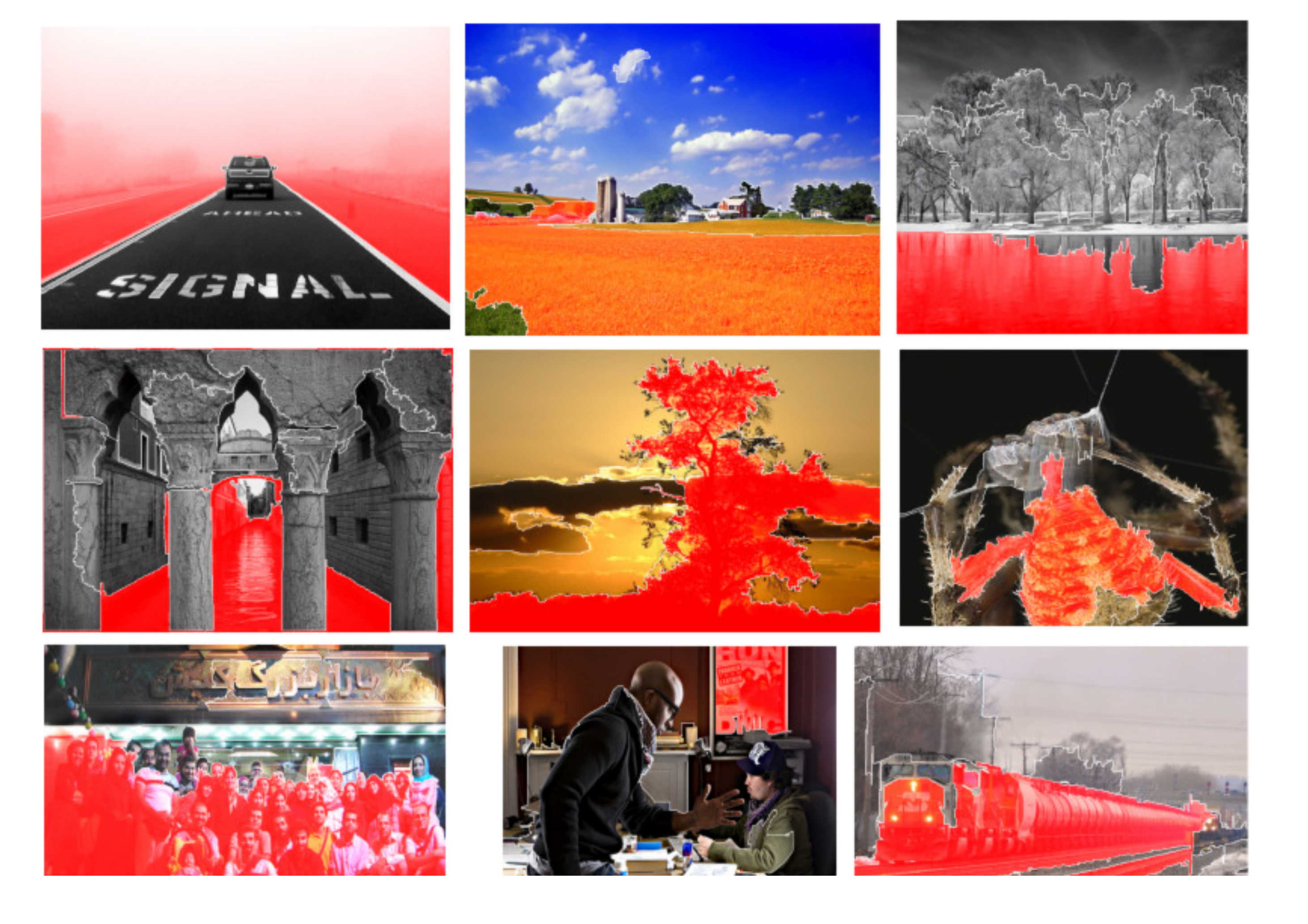}
}
\caption{Examples of relevant segments from the highest ranked test segments in the MIR Flickr dataset. Categories from top left: First row: 11-weather fog/mist, 24-scape rural, 29-water lake. Second row: 30-water riverstream, 30-flora tree, 41-fauna spider. Third row: 50-quantity biggroup, 66-style picture in picture, 91-transport rail }
\label{fig:topsegclef}
\end{figure}

Our best submission at the challenge combined lately with biclustering (Bic) performs similar to the winner method, the Selective Weighted Late Fusion (LIRIS), despite the low dimension of both representations ($15k$ for the uniform vector and $5k$ for the Bic). We also experimented with several combinations of the runs using late fusion. As expected, the basic modalities complement each other. Despite both the uniform representation and biclustering use visual and textual content, they can be improved by the basic runs. We achieved the best results with fusing the predictions of the multimodal methods and the single modalities. In comparison to recent results, our method outperforms the Selective Weighted Late Fusion \citeother{liu2014selective} by $2.99\%$, the best result published to our knowledge over the MIR Flickr dataset. 

\subsubsection{Summary}

Our approach for ImageCLEF 2012 Photo Annotation task employed various representations of the images based on different visual and textual modalities. We extracted several Fisher vectors using a grayscale and a color patch descriptor. We adopted a biclustering method to cluster the images and their Flickr tags. We combined the different descriptors and representations before the classification. This combination procedure included a transformation, a feature aggregation and a selection step. 
As a summary and the main statement of this section:

\begin{itemize}
\item[] We proposed a dense uniform and a biclustering representation of the basic representations considering modality adaptive similarity based feature transforms based on a sample set. The model is feasible to combine different descriptors and representations before the classification. 
\end{itemize}

\noindent We also described a method to determine the connection between the visual content of the images and their Flickr tags. We gave a solution to the double ambiguous labeling task:

\begin{itemize}
\item[] We proposed a multimodal biclustering method to exploit cross-media relations. The method results a low dimensional representation of images and segments.
\end{itemize}

\noindent The method without segmentation was published in \citeauth{daroczy2011sztaki,daroczy2012sztaki}. My contribution was the idea and development of the visual feature extraction and the multimodal fusion. The biclustering method was developed by D\'avid Sikl\'osi.  

\newpage

\section{Web document classification based on text, link and content features}
\label{sect:text}

Identifying the quality aspects of Web documents turned out to be a more challenging problem than the more traditional topic or genre classification. The first results on automatic Web quality classification focus on Web spam  \citeother{spam-challenge}. Additionally, there are various aspects and problems related to the quality of the web documents. Mining opinion from the Web and assessing its quality and credibility became also a well-studied area \citeother{dave2003mining}. Known results typically mine Web data on the micro level, analyzing individual comments and reviews.  Recently, several attempts were made to manually label and automatically assess the credibility of Web content \citeother{olteanu2013web,papaioannou2012decentralized}. Microsoft created, among others, a reference data set \citeother{schwarz2011augmenting}. Classifying various aspects of quality on the Web host level were, to our best knowledge, first introduced as part of the ECML/PKDD Discovery Challenge 2010 tasks \citeauth{siklosi2012content}.

Classification for quality aspects of Web pages or hosts turned out to be very hard. For example, the ECML/PKDD Discovery Challenge~2010 participants stayed with AUC values near 0.5 for classifying trust, bias and neutrality. 

In this chapter we address opinion mining through the C3 dataset \footnote{\url{http://ugc.webquality.org/datasets/}} and Web spam detection over the ClueWeb corpus \citeother{cormack2011efficient}. First, we review the literature and discuss the application of the similarity kernel (Section~\ref{sec:sim_graph}) for the particular problems and compare our model with various baselines.

\subsection{Related Results}

Existing results for Web credibility fall in four categories: Bag of Words; language statistical, syntactic, semantic features; numeric indicators of quality such as social media activity; and assessor-page based collaborative filtering.

It has already been known from the early results on text classification that ``obtaining classification labels is expensive'' \citeother{nigam2000text}. 

Web users usually lack evidence about author expertise, trustworthiness and credibility \citeother{spam-challenge}.  The first results on automatic Web quality classification focus on Web spam. In the area of the so-called Adversarial Information Retrieval  workshop series ran for five years \citeother{fetterly2008fifth} and evaluation campaigns, the Web Spam Challenges \citeother{castillo08wsc} were organized. Over different Web spam and quality corpora \citeother{erdelyi11webquality}, the bag-of-words classifiers based on the top few 10,000 terms performed best and significantly improved the traditional Web spam features \citeother{spam-challenge}. The ECML/PKDD Discovery Challenge 2010 extended the scope by introducing labels for genre and in particular for three quality aspects. In our work \citeauth{siklosi2012content}, we improved over the best results of the participants by using new text classification methods. Our method with biclustering and various MKL methods reach 0.634 in AUC for neutrality, bias and trust, while the best method at the challenge performed 0.561 on average for quality classes. With the suggested normalization in Section~\ref{sec:uniform} over the cluster distances we measured 0.661 in AUC. In \citeauth{garzo2013cross} we extended the MKL model for cross-lingual spam detection without translating the pages. As our main conclusion, Web spam can be classified purely based on the terms used.

Recent results on Web credibility assessment \citeother{olteanu2013web} use content quality and appearance features combined with social and general popularity and linkage.  After feature selection, they use 10 features of content and 12 of popularity by standard machine learning methods of the scikit-learn toolkit.

If sufficiently many evaluators assess the same Web page, one may consider evaluator and page-based collaborative filtering \citeother{papaioannou2012decentralized} for credibility assessment. In this setting, we face a dyadic prediction task where rich metadata is associated with both the evaluator and especially with the page. The Netflix Prize competition \citeother{netflix-prize} put recommender algorithms through a systematic evaluation on standard data \citeother{bell2007lnp}. The final best results blended a very large number of methods whose reproduction is out of the scope of our experiment. Therefore among the basic recommender methods, we use matrix factorization \citeother{koren2009matrix,takacs2008investigation}. In our experiments we also use the factorization machine \citeother{rendle2011fast} as a very general toolkit for expressing relations within side information. Note, the RecSys Challenge 2014 run a similar dyadic prediction task where Gradient Boosted Trees \citeother{zheng2008general} performed very well \citeauth{palovics2014recsys}. 

\subsection{Similarity kernel over Web documents}

As we discussed in Section~(\ref{sec:sim_graph}), with the similarity kernel we can move from terms as features to content similarity as features. On one hand, content similarity is more general and it can be defined by using the attributes other than term frequencies as well. Similarity based description is also scalable since we may select the size of sample set as large as it remains computationally feasible.

Our goal is to define Web pages in a general way according to any modality we can assign to them. Similarity may be based on the distribution of terms, content features, distances over the hyperlink structure or distances from clusters as we defined in \citeauth{siklosi2012content}.

Our most important feature set is the bag of words representation of the text over the Web host. Let there be $H$ hosts consisting of an average $\overline{\ell}$ terms. Given a term $t$ of frequency $f$ over a given host that contains $\ell$ terms and $h$ documents include the term in the corpus, we used the BM25 \citeother{Robertson94somesimple} term weighting scheme, where the weight of $t$ in the host becomes
 
\begin{equation}
\log\frac{H-h+0.5}{h+0.5} \cdot \frac{f (k+1)}{f + k (1 - b + b \cdot\frac{\ell}{\overline{\ell}})}.
\end{equation}

where $k$ and $b$ are free parameters. Low $k$ means very quick saturation of the term frequency function while large $b$ downweights content from very large Web hosts.

Besides BM25, we experimented with two additional term frequency normalization schemes:

\begin{itemize}
  \item Term frequency (tf): simply $f$, for all terms in the documents of $H$.
  \item Term frequency times inverse document frequency (tf.idf):
\begin{equation}
\log\frac{H-h+0.5}{h+0.5} \cdot f.
\end{equation}
\end{itemize}

\subsection{Quality assessment prediction over the C3 dataset}

The C3 data set consists of 22325 Web page evaluations in five dimensions (credibility, presentation, knowledge, intentions, completeness) of 5704 pages given by 2499 people. Ratings are similar to the dataset built by Microsoft for assessing Web credibility \citeother{schwarz2011augmenting}, on a scale of four values 0-4, with 5 indicating no rating. The distribution of the scores for the five evaluation dimensions can be seen in Fig.~\ref{fig:scores}. Since multiple values may be assigned to the same aspect of a page, we simply average the human evaluations per page.  We may also consider binary classification problems by assigning 1 for above 2.5 and 0 for below 2.5.

\begin{figure}
\centering
\includegraphics[width=8.5cm]{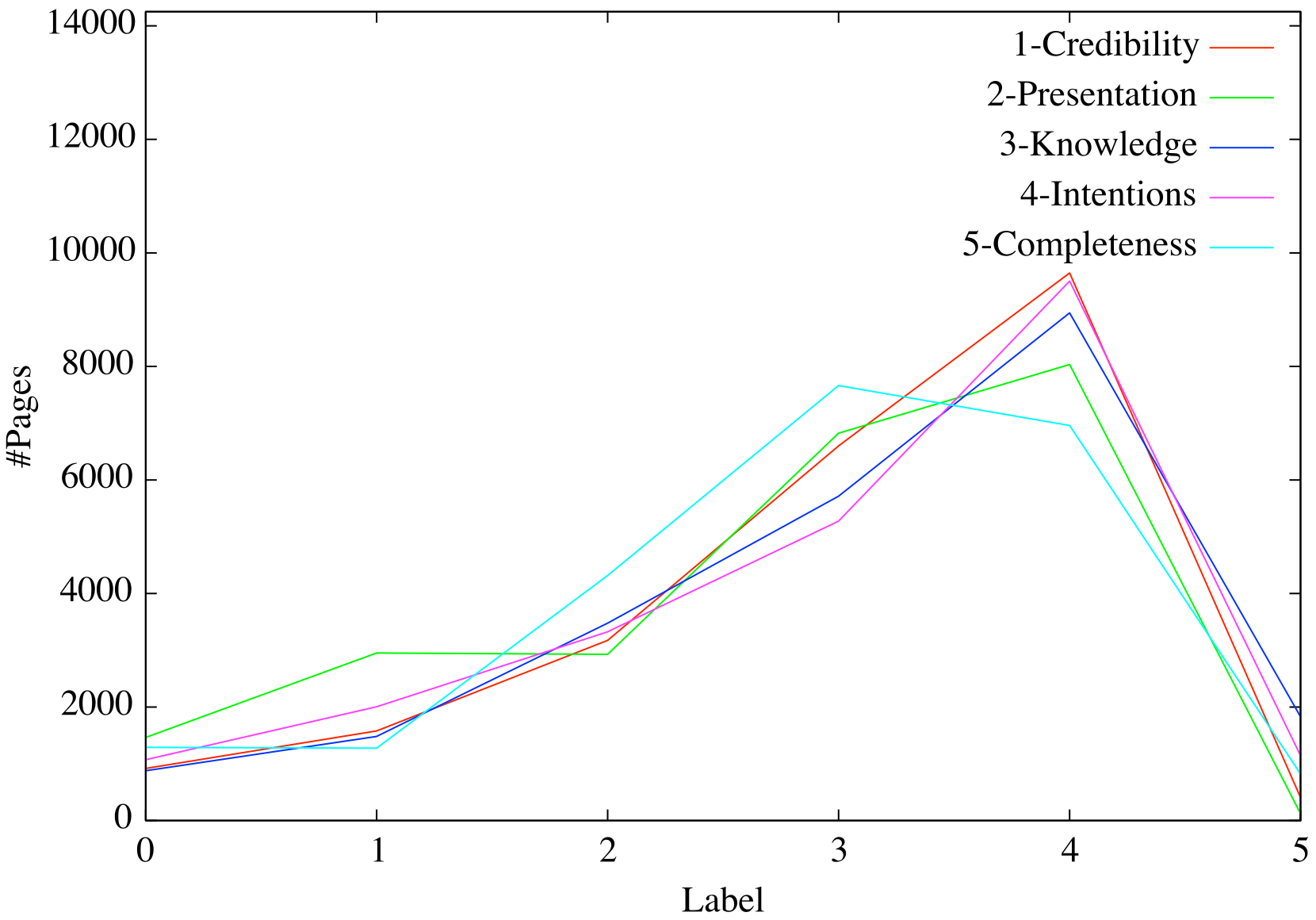}
\caption{The distribution of the scores for the five evaluation dimensions.}
\label{fig:scores}
\end{figure}

Since earlier results \citeother{papaioannou2012decentralized} suggest the use of collaborative filtering along the page and evaluator dimensions, we measure the distribution of the number of evaluations given by the same evaluator and for the same site in Fig.~\ref{fig:cf}.

\begin{figure}
\centering
    \subfigure{{\includegraphics[width=6.1cm]{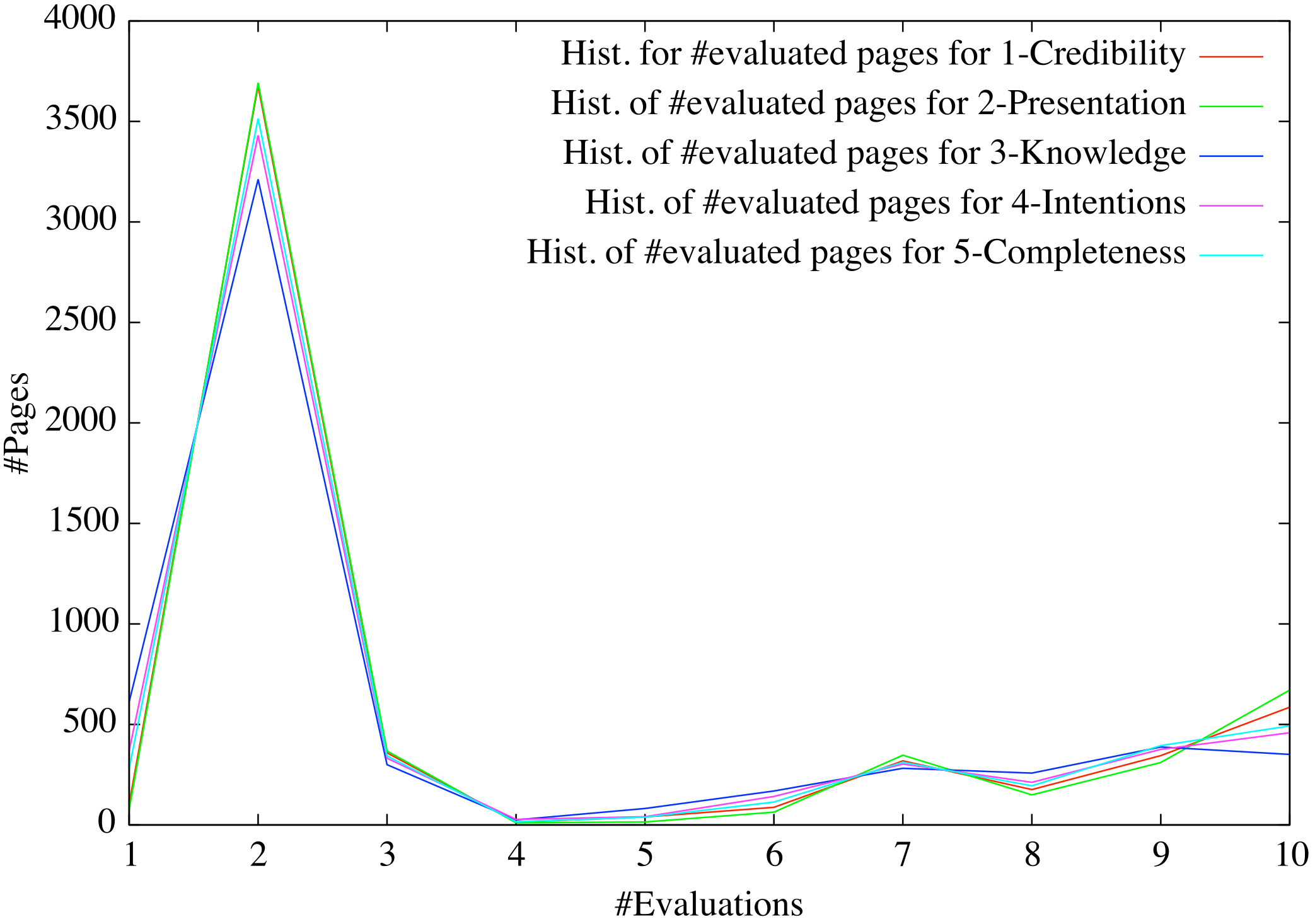} }}
    \qquad
    \subfigure{{\includegraphics[width=6.cm]{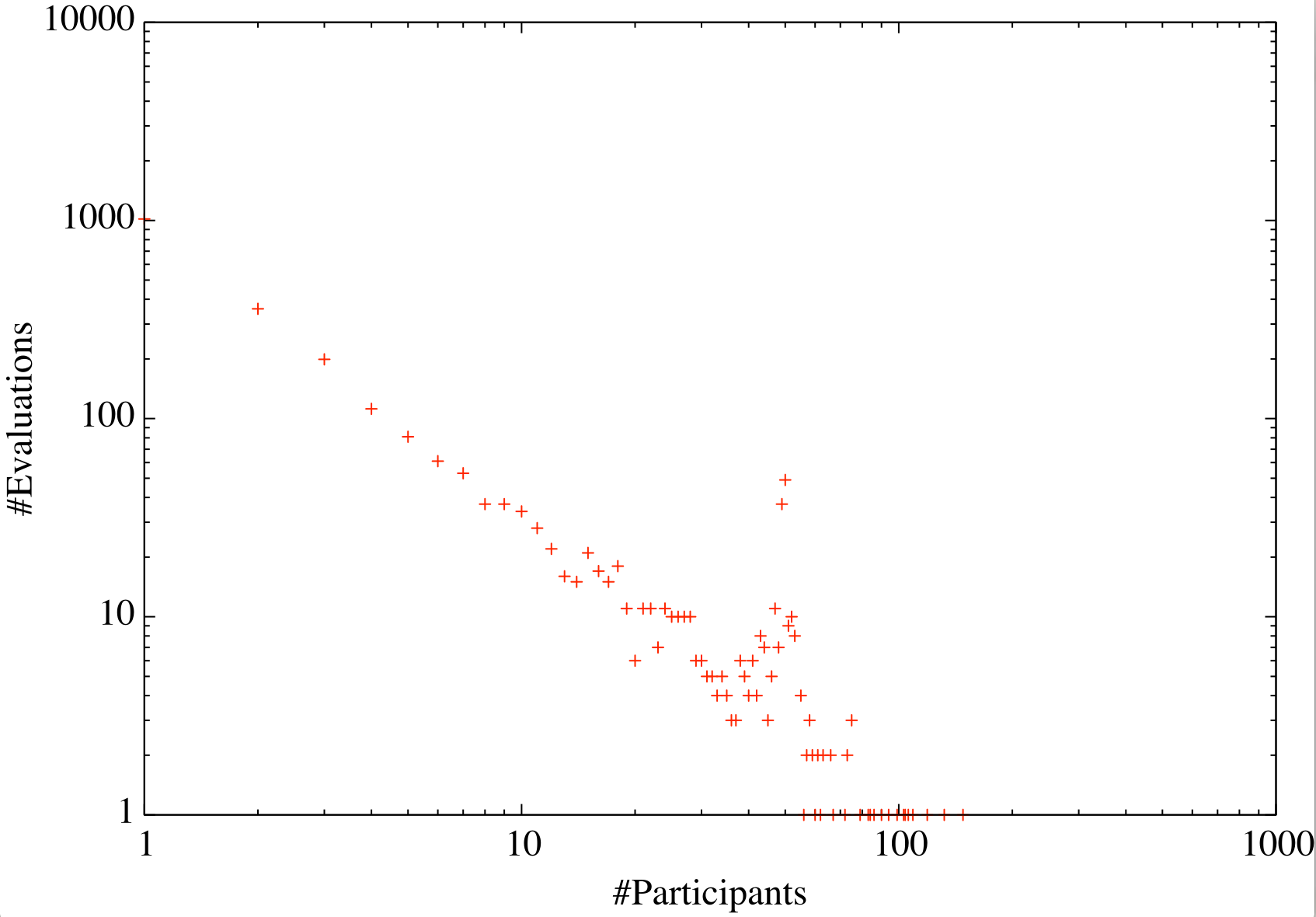} }}
\caption{The distribution of the number of evaluations given by the same site (\textbf{left}) and for the same evaluator (\textbf{right}).}
\label{fig:cf}
\end{figure}

Distribution of the variance of the ratings is shown by heatmap of all pairs of ratings given for the same page and same dimension by pairs of different evaluators in Fig.~\ref{fig:heatmap}.

\begin{figure}
\centering
\includegraphics[width=8.5cm]{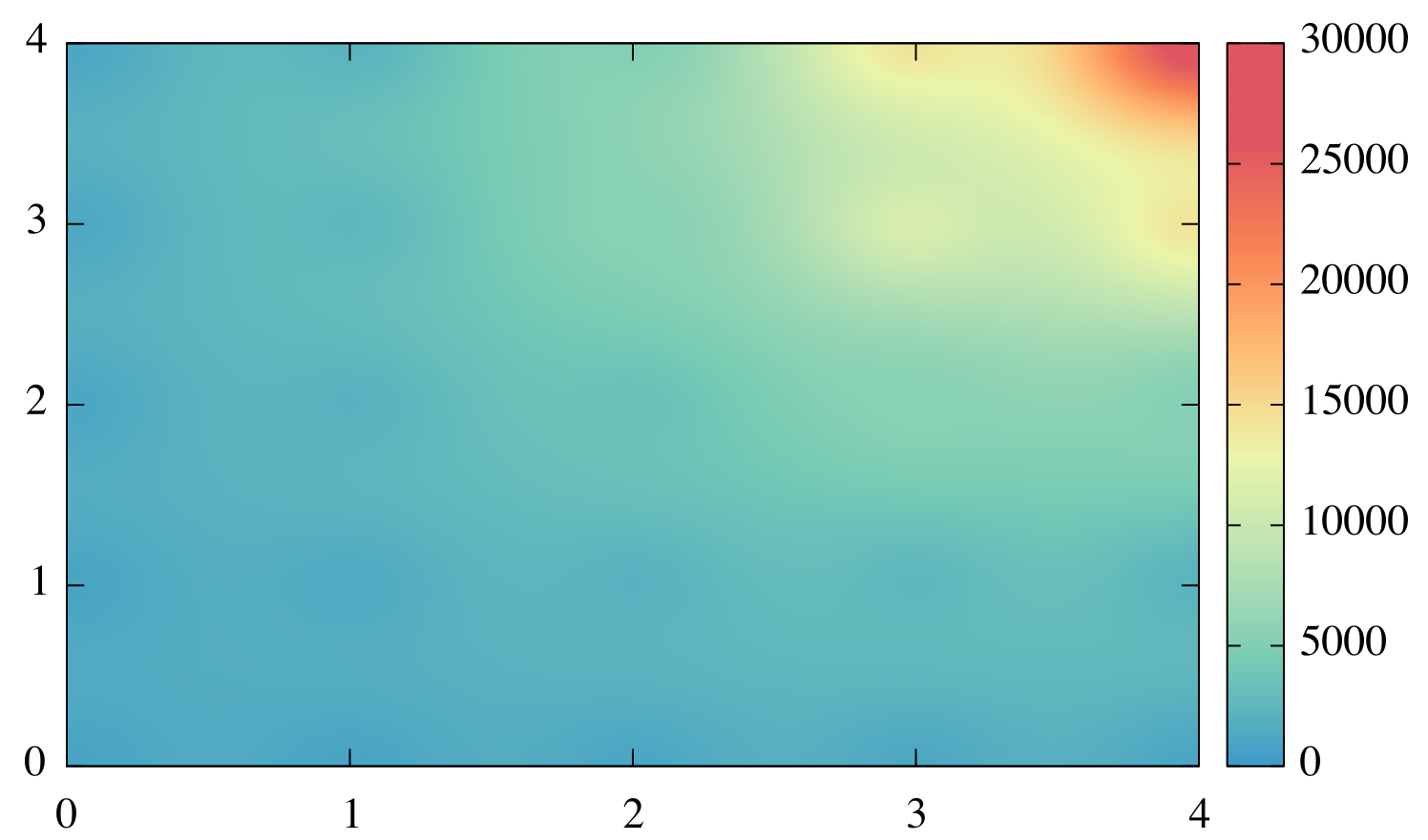}

\caption{The number of pairs of ratings given by different assessors for the same aspect of the same page.}
\label{fig:heatmap}
\end{figure}

Note that 65\% of the C3 URLs returned ``OK HTTP" status but 7\% of them could no longer be crawled. Redirects reached over 20\% that we followed and substituted for the original page.

The C3 data set contains numeric attributes for the evaluator, the page, and the evaluation itself, which can be considered as triplets in a recommender system. The majority of the evaluators however rated only one Web page and hence we expect low performance of the recommender methods over this data set. Most important elements of our classifier ensemble will hence use the bag of words representation of the page content.

Our classifier ensemble consists of the following components:
\begin{itemize}
  \item Gradient Boosted Trees and recommender methods that reached us second place at the RecSys Challenge 2014 \citeauth{palovics2014recsys}.
  \item Standard text classifiers, including our biclustering based method (Section~\ref{sect:bic}) that performed best over the DC2010 data set \citeauth{siklosi2012content}.
  \item The similarity kernel (Section \ref{sec:simker}) that may work over arbitrarily defined similarity measures over pairs of pages, using not only the text but also the C3 attributes.
\end{itemize}

In order to perform text classification, we crawled the pages listed in the C3 data set.

\subsubsection{Kernel methods}

The classification power of Support Vector Machine (Section~\ref{sect:svm}) over bag of words representations has been shown in \citeother{chapelle07witch,spam-challenge}. The models rely on term and inverse document frequency values (TF and IDF): aggregated as TF.IDF and BM25. The BM25 scheme turned out to perform best in our earlier results \citeauth{erdelyi2014classification,siklosi2012content,garzo2013cross}, where we applied SVM with various linear and polynomial kernel functions and their combinations. 

In our earlier experiments, biclustering (Section~\ref{sect:bic}) performed best for assessing the quality aspects of the DC2010 data \citeauth{siklosi2012content}. As for images we use Jensen-Shannon divergence instead of Kullback-Leibler divergence and describe pages by distances from page clusters. To exploit the similarity kernel we can think of this page clusters as additional samples with a specific distance function. In case of the pairwise factor graph (Section~\ref{sec:sim_graph}), this results sparsity in the energy function 
\begin{equation}
U(x \mid \Omega = \{\alpha,\beta\}) = U(x \mid \Omega = \{\alpha\})  + \sum_{C_i \in C} \beta_i \mbox{dist}(x, C_i),
\end{equation}
where $C_i$ corresponds to the $i$th cluster, therefore the clusters behave as a secondary sample set on a cost of expanded dimension. 

Since kernel methods are feasible for regression \citeother{Platt98sequentialminimal,svm-book}, we also use the methods of this subsection for predicting the numeric evaluation scores.

\subsubsection{Gradient Boosted Trees and Matrix factorization}
\label{sect:gbt}

We apply Gradient Boosting Trees \citeother{zheng2008general} and matrix factorization on the user and C3 data features. We used two different matrix factorization techniques. The first one is a traditional matrix factorization method \citeother{koren2009matrix}, while the second one is a simplified version of Steffen Rendle's LibFM algorithm \citeother{rendle2011fast}. Both techniques use stochastic gradient descent to optimize for mean-square error on the training set. LibFM is particularly designed to use the side information of the evaluators and the pages. 

\subsubsection{Evaluation metrics and results}
\begin{center}
\begin{table*}[htb!]
\scalebox{0.68}{
\begin{tabularx}{\linewidth}{||l||r|r|r|r|r||r||} \hhline{#=#=|=|=|=|=#=#}
Method & \ver1{Credi-} & \ver1{Presen-} & \ver1{Know-} & \ver1{Inten-} & \ver1{Complete-} & \ver1{Avg} \\
       &  bility      &   tation      &     ledge   & tions        & ness &
 \\ \hhline{#=#=|=|=|=|=#=#}
Gradient Boosted Tree (\textbf{GBT})    &0.6492 &0.6558 &0.6179 &0.6368 &0.7845 &0.6688 \\
Factorization Machine (\textbf{LibFM})  &0.6563 &0.6744 &0.6452 &0.6481 &0.7234 &0.6695 \\
Marix Factorization (\textbf{MF})       &0.5687 &0.5613 &0.5966 &0.5700 &0.5854 &0.5764 \\
TF linear kernel      	       &0.6484 &0.6962 &0.6239 &0.6767 &0.6205 &0.6531 \\
TF polynomial degree=2 SVM    &0.6481 &0.6934 &0.6374 &0.6230 &0.6472 &0.6498 \\
TF polynomial degree=3 SVM    &0.6571 &0.7024 &0.6394 &0.6234 &0.6426 &0.6530 \\
TF.IDF linear kernel  	       &0.6571 &0.7020 &0.5935 &0.6824 &0.6128 &0.6496 \\
TF.IDF polynomial d=2 SVM     &0.6666 &0.7065 &0.6080 &0.6023 &0.6304 &0.6428 \\
TF.IDF polynomial d=3 SVM     &0.6596 &0.7020 &0.6234 &0.6174 &0.6298 &0.6464 \\
BM25 linear kernel (\textbf{Lin})&0.7236 &0.7480 &0.6278 &0.6987 &0.6633 &0.6923 \\
BM25 polynomial degree=2 SVM   &0.7109 &0.7479 &0.6477 &0.6268 &0.6795 &0.6826 \\
BM25 polynomial degree=3 SVM   &0.6855 &0.7247 &0.6558 &0.6150 &0.6761 &0.6714 \\
Bicluster linear kernel	       &0.6402 &0.7467 &0.5796 &0.6482 &0.6382 &0.6506 \\ \hhline{#=#=|=|=|=|=#=#} 
Bicluster Sim kernel	       &0.6744 &0.7718 &0.6379 &0.6830 &0.6560 &0.6846 \\
C3 attributes Sim kernel	&0.6267 &0.7706 &0.6327 &0.6408 &0.6149 &0.6571 \\
TF J--S Sim kernel         	&0.6902 &0.7404 &0.6758 &0.7047 &0.6778 &0.6978 \\
TF L$_2$ Sim kernel	       &0.6335 &0.6882 &0.6200 &0.6585 &0.6300 &0.6460 \\
TF.IDF J--S Sim kernel	       &0.7006 &0.7546 &0.6552 &0.7073 &0.6791 &0.6994 \\
TF.IDF L$_2$ Sim kernel	       &0.6461 &0.7152 &0.6013 &0.6902 &0.6353 &0.6576 \\
BM25 J--S Sim kernel 	       &0.6956 &0.7473 &0.6351 &0.6529 &0.6222 &0.6706 \\
BM25 L$_2$ Sim kernel  		&0.7268 &0.7715 &0.6741 &0.7081 &0.6898 &0.7141 \\ \hhline{#=#=|=|=|=|=#=#}
%
BM25 L$_2$ \& J--S Sim kernel (\textbf{BM25})&0.7313 &0.7761 &0.6926 &0.7141 &0.7003 &0.7229 \\
BM25 \& C3 Sim kernel		&0.7449 &0.8029 &0.7009 &0.7148 &0.6993 &0.7326 \\
BM25 \& Bicluster \& C3 (\textbf{All}) Sim kernel &0.7457 &0.8086 &0.7063 &0.7158 &0.7052 &0.7363 \\ \hhline{#=#=|=|=|=|=#=#}
Lin + GBT  		&0.7296 &0.8056 &0.6589 &0.6783 &0.6939 &0.7133 \\
Lin + LibFM  		&0.7400 &0.7769 &0.6622 &0.6733 &0.6975 &0.7100 \\
%
%
All Sim kernel + Lin + GBT 	&0.7549 &0.8179 &0.6916 &0.7098 &0.7123 &0.7373 \\ \hhline{#=#=|=|=|=|=#=#}
\end{tabularx}}
  \caption{Detailed performance over the C3 labels in terms of AUC}
  \label{tab:results}
\end{table*}
\end{center}

\begin{center}
\begin{table*}[htb!]
\scalebox{0.65}{
\begin{tabularx}{\linewidth}{||ll||r|r|r|r|r||r||} \hhline{#==#=|=|=|=|=#=#}
Method && \ver1{Credi-} & \ver1{Presen-} & \ver1{Know-} & \ver1{Inten-} & \ver1{Complete-} & \ver1{Avg} \\
       &&  bility      &   tation      &     ledge   & tions        & ness &
 \\ \hhline{#==#=|=|=|=|=#=#}
Gradient Boosted Tree (GBT)	&MAE  &1.5146 &1.3067 &1.2250 &1.2737 &1.4438 &1.3528 \\
				&RMSE &1.6483 &1.4510 &1.3658 &1.4132 &1.6021 &1.4961 \\ \hhline{||--||-|-|-|-|-||-||}
Factorization Machine (LibFM)   &MAE  &1.5313 &1.3213 &1.2303 &1.2632 &1.4984 &1.3689 \\
				&RMSE &1.6725 &1.4745 &1.3744 &1.4073 &1.6759 &1.5209 \\\hhline{||--||-|-|-|-|-||-||}
Matrix Factorization (MF)	&MAE  &1.7450 &1.4093 &1.3676 &1.2905 &1.5794 &1.4784 \\
				&RMSE &1.9174 &1.5912 &1.5540 &1.4636 &1.7583 &1.6569 \\\hhline{||--||-|-|-|-|-||-||}
BM25 linear kernel (Lin)	&MAE  &0.5562 &0.7230 &0.6052 &0.5979 &0.5896 &0.6144 \\ 
				&RMSE &0.7085 &0.9072 &0.7784 &0.7910 &0.7724 &0.7915 \\\hhline{||--||-|-|-|-|-||-||}
BM25 L$_2$ Sim kernel		&MAE  &0.5678 &0.7083 &0.6228 &0.5946 &0.6045 &0.6196 \\ 
				&RMSE &0.7321 &0.9307 &0.8038 &0.7878 &0.7930 &0.8095 \\\hhline{||--||-|-|-|-|-||-||}
Bicluster Sim kernel	 	&MAE  &0.5340 &0.6868 &0.6039 &0.5883 &0.5813 &0.5989 \\ 
				&RMSE &0.6958 &0.8906 &0.7861 &0.7778 &0.7624 &0.7825 \\\hhline{||--||-|-|-|-|-||-||}
BM25 \& Bicluster \& C3 All Sim kernel &MAE  &0.5403 &0.6324 &0.5946 &0.5952 &0.5829 &0.5891 \\ 
				&RMSE &0.7106 &0.8357 &0.7763 &0.7879 &0.7661 &0.7753 \\ \hhline{#==#=|=|=|=|=#=#}
\end{tabularx}}
  \caption{Detailed performance over the C3 labels in terms of RMSE and MAE}
  \label{tab:regresults}
\end{table*}
\end{center}

First, we consider binary classification problems by simply averaging the human evaluations per page and assign them 1 for above 2.5 and 0 for below 2.5. The standard evaluation metrics since the  Web Spam Challenges \citeother{castillo08wsc} is the area under the ROC curve (AUC) (Section~\ref{sect:eval}). The use of Precision, Recall and F-measure are discouraged by experiences of the Web spam challenges. 

Unlike spam classification, the translation of quality assessments into binary values is not so obvious. Therefore we also test regression methods evaluated by Mean Absolute Error (MAE) and Root Mean Squared Error (RMSE). 

We measure the accuracy of various methods and their combinations. The detailed results can be seen in Table~\ref{tab:results}, in four groups. The first group gives the baseline methods. Below, we apply the similarity kernel separate for the corresponding attributes. In the third group we combine multiple similarity functions by the similarity kernel. Finally, in the last group, we average after standardizing the predictions. In Table~\ref{tab:regresults} part of the methods are tested for regression.

For user and item features we experiment with GraphLab Create\footnote{\url{http://graphlab.com/products/create/}} \citeother{low2012distributed} implementation of Gradient Boosted Tree and matrix factorization techniques. In case of the gradient boosted tree algorithm (GBT) we set the maximum depth of the trees 4, and enabled maximum 18 iterations. To determine the advantage of additional side information over the original matrix factorization technique (MF) we use factorization machine (LibFM) for user and item feature included collaborative filtering prediction. As seen from the tables,  matrix factorization (MF) fails due to the too low number of ratings by user and by document but LibFM can already take advantage of the website metadata with performance similar to GBT. 

Our Bag of words models use the top $30k$ stemmed terms.  For TF, TF.IDF and BM25, we show results for linear kernel SVM as it outperforms the RBF and polynomial  kernels. We use LibSVM \citeother{CC01a} for classification the Weka implementation of SMOReg \citeother{Platt98sequentialminimal} for regression.

Out of the unimodal methods, the similarity kernel gives the best results both for classification and for regression. For distance, we use L$_2$ for the C3 attributes as well as TF, TF.IDF and BM25. For the last three, we also use the Jensen--Shannon divergence (J--S). While the similarity kernel over the bi-cluster performs weak for classification, it is the most accurate single method for regression. In the similarity kernel, we may combine multiple distance measures by Equation~\eqref{eq:potential_mm}. The All Sim method fuses four representations: J--S and L$_2$ over BM25 and L$_2$ for C3 and the bi-cluster representation.

The best non-Fisher method is the average of the linear kernel over BM25 (Lin) and GBT. The performance is similar to the BM25 L$_2$ similarity kernel. As a remarkable feature of the similarity kernel, we may combine multiple distance functions in a single kernel. The best method (All Sim) outperforms the best combination not using the similarity kernel (Lin + GBT) by $3.2\%$. The difference is $7.2\%$ for classifying ``knowledge''. The same method performs bests for regression too.

\begin{figure}
\centering
\includegraphics[width=8.5cm]{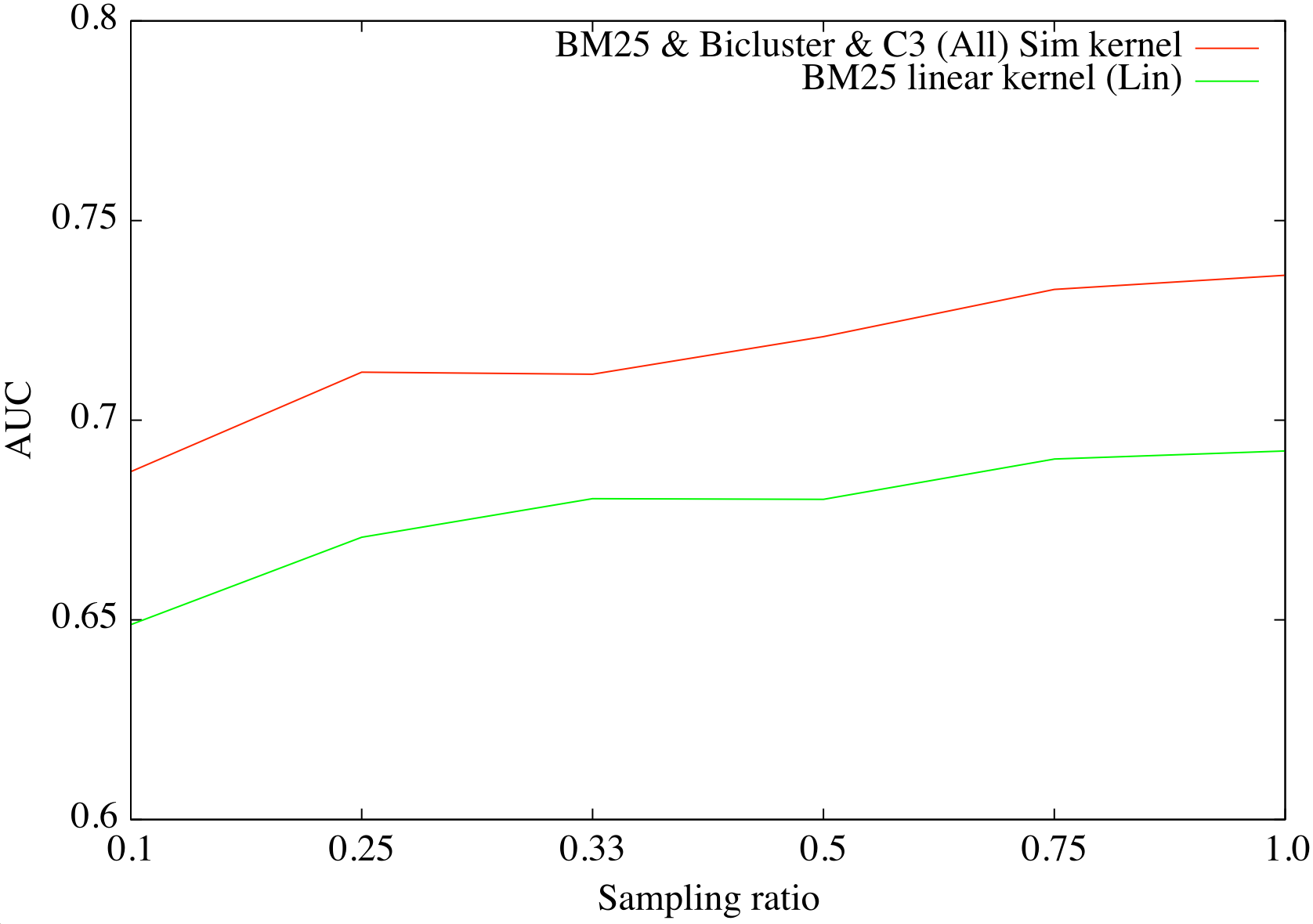}
\caption{AUC as the function of the size of the training set, given as percent of the full3040, for the baseline BM25 with linear kernel and All with similarity kernel.}
\label{fig:sampling}
\end{figure}

Our best results reach the AUC of 0.74 for credibility, 0.81 for Presentation, 0.70 for Knowledge, 0.71 for Intentions and 0.70 for Completeness. We may hence say that all results reach the level of practical usability.  
Text classification is the main component: alone it reaches 0.73, 0.77, 0.69, 0.71 and 0.70, respectively, for the five quality dimensions. 

The similarity kernel method can also resist noise and learn from small training sets. If we add 10\% noise in the training set, the combination of all similarity kernels deteriorates only to an average AUC of 0.7241 from 0.7363 (1.7\%). In contrast, the best BM25 SVM result 0.6923 degrades to 0.6657 (3.85\%), both with variance 0.004  for ten independent samples. The robustness of the similarity kernel for small training sets is similar to BM25 with linear kernel, as seen in Fig.~\ref{fig:sampling}.

\subsection{Web Spam detection over ClueWeb09}
\label{sect:spam}

In this section, we show experiments over the Waterloo Spam Rankings \citeother{cormack2011efficient} of the ClueWeb09 corpus. Detection of spam hosts can be seen as a binary classification task. As a baseline we use the same bag-of-words classifiers as for the C3 dataset. 

Since the C3 features are not available, we use the public feature set by \citeother{castillo2006know} that includes the following values computed for the home page, page with the maximum pagerank and average over the entire host: 

\begin{enumerate}
\item Number of words in the page, title;
\item Average word length, average word trigram likelihood;
\item Compression rate, entropy;
\item Fraction of anchor text, visible text;
\item Corpus and query precision and recall.
\end{enumerate}

Feature classes~1--4 can be normalized by using the average and standard deviation values while class~4 is likely domain and language independent.  

\begin{table}
\centerline{\begin{tabular}{|l|r|r|r|}\hline
Method                           & Modality     & AUC    & Gain \\ \hline
BM25 SVM                         & Text    & 0.8450 & \\
Features of \citeother{castillo2006know}& Numeric    & 0.7882 & \\
Linear comb. of above      & Multi    & 0.8517 & base \\
Pairwise sim.     $| S |=| T |$  & Text    & 0.8546 & \\
Class sim. $|S|=100, |R|=30$     & Text & 0.8590 & \\
Pairwise sim.  $| S |=| T |$     & Multi    & 0.8622 & +1.2\% \\ 
Class sim. $| S |=1000|,|R|=10$  & Multi & 0.8697 & +2.1\% \\\hline 
\end{tabular}}
\caption{Web Spam detection over ClueWeb09.}
  \label{tab:clueweb_res}
\end{table}

Corpus precision and recall are defined over the $k$ most frequent words in the dataset, excluding stopwords. Corpus precision is the fraction of words in a page that appear in the set of popular terms while corpus recall is the fraction of popular terms that appear in the page. This class of features is language independent but rely on different lists of most frequent terms for the two data sets.

Results for spam detection in Table~\ref{tab:clueweb_res} show $2.1\%$ improvement for the multimodal Similarity kernel over the linear combination of the predictions of the BM25 based SVM and the content feature based SVM. Note, the similarity kernel with class similarity graph performed better than the simpler pairwise similarity graph, although both of them outperformed the baseline.

\subsection{Summary}

As a summary of this section, we form the following statement:

\begin{itemize}
\item[] We defined Web pages via the similarity kernel in a general way according to any modality we can assign to them. The similarity kernel for Web documents can also resist noise and learn from small training sets.
\end{itemize}

\noindent The results were published in \citeauth{daroczy2015text} while biclustering was introduced for trust and bias classification in \citeauth{siklosi2012content}. My contribution was mainly the idea and development of the similarity kernel and the experiments. D\'avid Sikl\'osi developed the biclustering, crawled the web pages and calculated the BM25 features while R\'obert P\'alovics calculated the Matrix Factorization models with GraphLab.  

\newpage

\section{Mobile Radio Session drop prediction via Similarity kernel}
\label{sect:time_series}

Management of Mobile Telecommunication Networks (MTN) is a complex task. Setting up, operation and optimization of MTNs such as those defined by the 3rd Generation Partnership Project (3GPP) need high-level expert knowledge. Therefore it is important for network operators that as many processes in network deployment and operation are automated as possible, thus reducing the cost of operation. 

MTNs consist of network elements connected to each other with standard interfaces and communicating via standard protocols. MTNs are managed by Network Management System (NMS) running separately from the network elements. NMS provides functions for network configuration via Configuration Management (CM) and operation supervision via Performance Management (PM) and Fault Management (FM). There are specific functions in the CM, PM and FM systems providing automatic configuration and optimization, usually called self-configuration, self-optimization or self-healing. The common name of these functions in the 3GPP standard is Self-Organizing Network (SON) functions. In this section we focus on performance management and performance optimization. 

With the evolution of the generations of the radio and core networks ranging from 2G to 4G, PM reporting functions of the network elements have become higher granularity and more detailed, thus providing better observability. In 2G systems performance management relies mostly on counters providing aggregated measurements over a given Reporting Output Period (ROP, usually 15 minutes) within a certain node, in 3G systems it is possible to get higher granularity measurements where per-user events (e.g. Radio Resource Control connection setup, sending handover request, paging, etc.) and periodic per-user measurement reports (sent from the User Equipment to the nodeB indicating the current radio signal strength and interference conditions) might appear in node logs. In LTE the granularity grows even higher with the possibility of frequent periodic (ROP=1.28 second) measurements per-user and/or per-cell in eNodeBs. Moreover, it is also possible to get the event reports and periodic reports as a data stream, making it possible to process the incoming measurements real-time. The detailed, frequent, high-granularity, real-time reporting enables further processing and analyzing the data and applying them in data-driven techniques to be used in network functions, especially in SON functions. 

In LTE in order to enable communication between the user equipment and the eNodeB a radio bearer is established. The main metric of interest is retainability in LTE systems which is defined as the ability of a User Equipment to retain the bearer once connected, for the desired duration. The release of radio bearers between the User Equipment and the eNodeB can have multiple reasons. There are normal cases such as release due to user inactivity (after expiry of an inactivity timer), release initiated by the user, release due to successful handover to another radio cell or successful Inter Radio Access Technology handover, etc. However, there can be abnormal releases (also called drops) due to e.g. low radio quality either in downlink or uplink direction, transport link problems in the eNodeB, failed handover, etc.  Unexpected session drops may seriously impact the quality of experience of mobile users, especially those using real-time services such as Voice-over-IP (VoIP). 

The aim of our work \citeauth{daroczy2015machine} was to introduce and evaluate a method to predict session drops before the end of session and investigate how it can be applied in SON. 

\subsection{Related work}

As we mentioned, frequent periodic reports were introduced first in 3G systems. The authors in \citeother{zhou2013proactive,theera2013using} use traditional machine learning models, AdaBoost \citeother{freund1995decision} and Support Vector Machine (Section~\ref{sect:svm}), to predict call drops in 3G network and use the prediction result to either avoid them or mitigate their effects. The features of the models in these studies are aggregated values of certain radio events and reports in a fixed time window preceding the drop. While the settings greatly differ in these studies, the accuracy of our results is much better than in \citeother{theera2013using} and comparable to \citeother{zhou2013proactive}. In both papers, prediction is only made where the session is dropped in the next second. In comparison, we address the SON aspects by evaluating the power of our methods for predicting several seconds before session termination.

We provide an improved machine learning methodology where the high granularity of the performance reports is exploited and the time evolution of the main features is used as extra information to increase prediction accuracy. We deploy and extend techniques of time series classification \citeother{ding2008querying}. For single parameter series, nearest neighbour classifiers perform the best for time series classification where the distance between two time series is defined by Dynamic Time Warping (DTW) \citeother{berndt1994using}. For session drop prediction, however, we have six simultaneous data sets and hence nearest neighbour methods cannot be directly applied. The size of the data sets are also a concern. To overcome the scalability issue while take advantage of the DTW distance we use the similarity kernel with a small sized sample set. 

\subsection{Network measurements}

The analysis is based on raw logs of multiple eNodeBs from a live network containing elementary signaling events indicating e.g. RRC connection setup, user equipment context release, successful handover to/from the cell, and periodic reports having per-user radio and traffic measurements. The basic unit of information is a Radio Bearer session within a cell. The session is constituted from the elementary signaling events (see Fig.~\ref{fig:reporting}). The session is started with setting up an RRC connection or successful handover into the cell from an adjacent cell, and it is ended with a user equipment context release or successful handover out of the cell. At the end of the session the reason code of the release is reported. Periodic reports are logged during the session every 1.28s containing various radio quality and traffic descriptors.

\begin{figure}[t]
\centerline{\includegraphics[width=8cm]{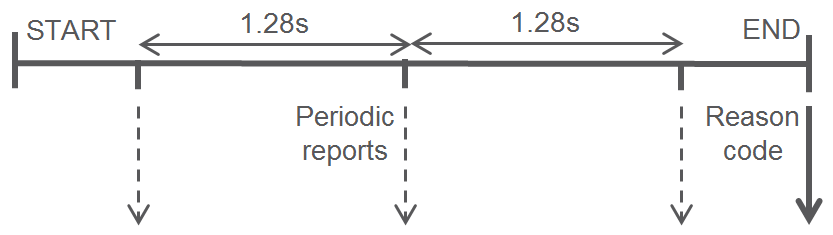}}
\caption{User session and reporting.}
\label{fig:reporting}
\end{figure}

\subsubsection{Session records}

Except the cause of release (our target variable), all of the essential variables can be collected from the session records (Table~\ref{tab:session-data}) however the cause of the release can be derived easily from the unique release reason code after the end of the session. There are ~20 different reason codes, half of them indicating normal release and the other half indicating abnormal release (drop). The describing variables are contained in the periodic reports and have a time evolution within the session. 

\begin{table}
\centerline{\begin{tabular}{|p{3cm}|p{2cm}|p{8cm}|}\hline
Variable&	Range&	Comment \\ \hline
release category&0(no drop), 1(drop)	&Derived from the release cause \\ \hline
cqi\_avg	&1--15	&Channel Quality Index \\ \hline
harqnack\_dl	&0--1	&HARQ NACK ratio in downlink \\ \hline
harqnack\_ul	&0--1	&HARQ NACK ratio in uplink \\ \hline
rlc\_dl	        &0--1	&RLC NACK ratio in downlink \\ \hline
rlc\_ul	        &0--1	&RLC NACK ratio in uplink \\ \hline
sinr\_pusch	&-4--18	&Signal to Interference plus Noise Ratio on Uplink Shared Channel \\ \hline
sinr\_pucch	&-13--3	&Signal to Interference plus Noise Ratio on Uplink Control Channel \\ \hline
\end{tabular}}
\caption{Overview of a Session Record.}
\label{tab:session-data}
\end{table}

The variable to predict is release category that is a binary variable indicating session drop. The variables contributing most to the session drops are selected from a larger set. It contains downlink and uplink parameters. Channel Quality Index (CQI) ranging from 1 to 15 characterizes the quality of the radio channel in downlink direction. Error correction and retransmission mechanisms are operating on different layers of the radio protocols. The retransmission ratio of hybrid automatic repeat request (HARQ) and radio link control (RLC) protocols are reported periodically for both downlink and uplink direction. Signal to Interference plus Noise Ratio on Uplink Shared/Control Channel (\textit{sinr\_pusch/sinr\_pucch}) characterizes the quality of the uplink shared/control channel. The \textit{sinr\_pucch} having a constant value in almost the whole dataset, therefore it has been removed from the analysis.

\subsubsection{Time evolution of the variables}

The values of the essential variables preceding the end of session have most impact on the release category. However, 1 or 2 seconds before the drop the session is already in a state where the quality is extremely low, making the service unusable. Fig.~\ref{fig:session} shows examples for sessions with normal and abnormal release. In the dropped session the \textit{sinr\_pusch} decreases and the HARQ NACK ratio increases, indicating uplink problem.

\begin{figure}
\centering
	\subfigure{{\includegraphics[width=6.1cm]{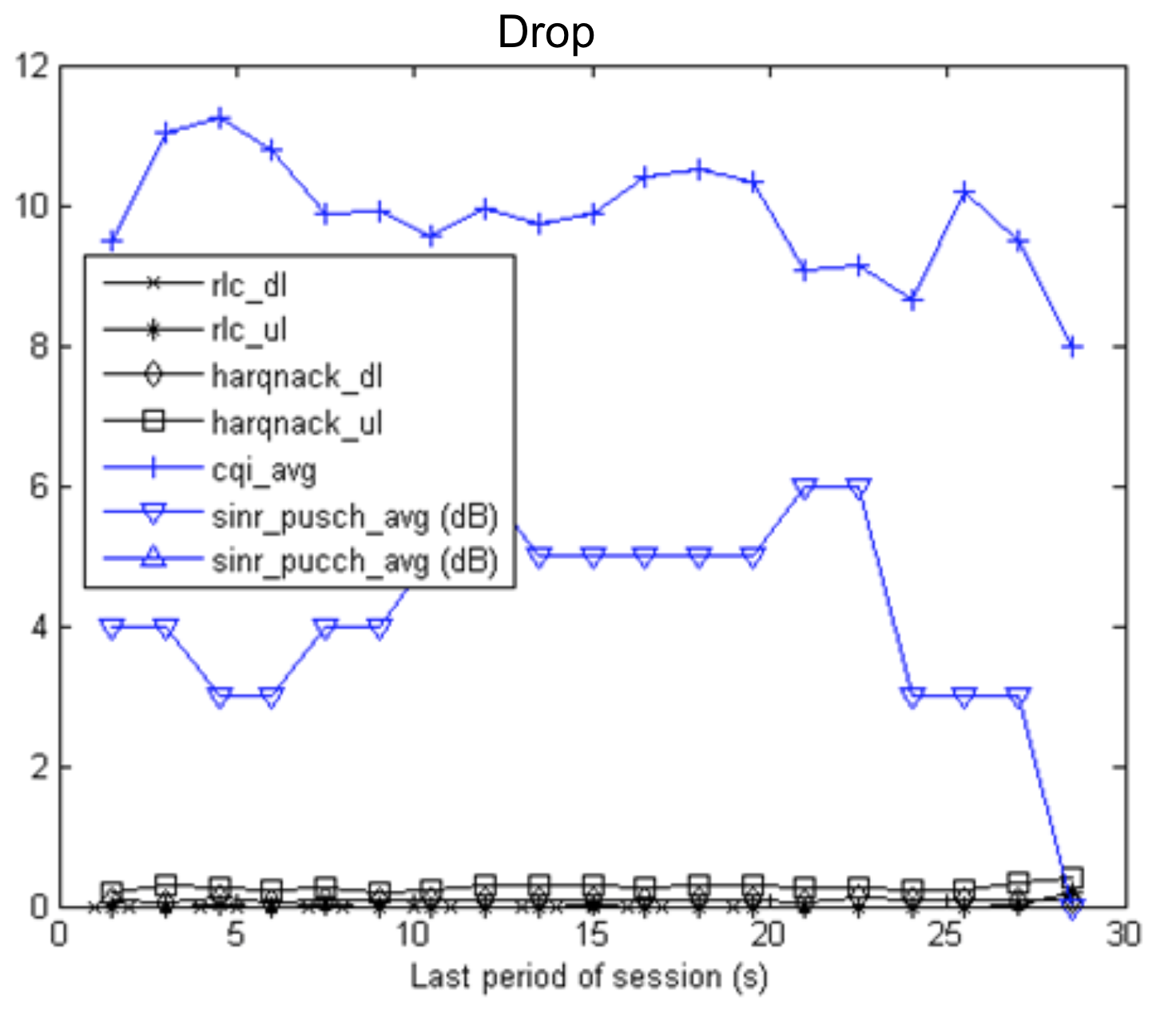} }}
	 \qquad
	\subfigure{{\includegraphics[width=6.1cm]{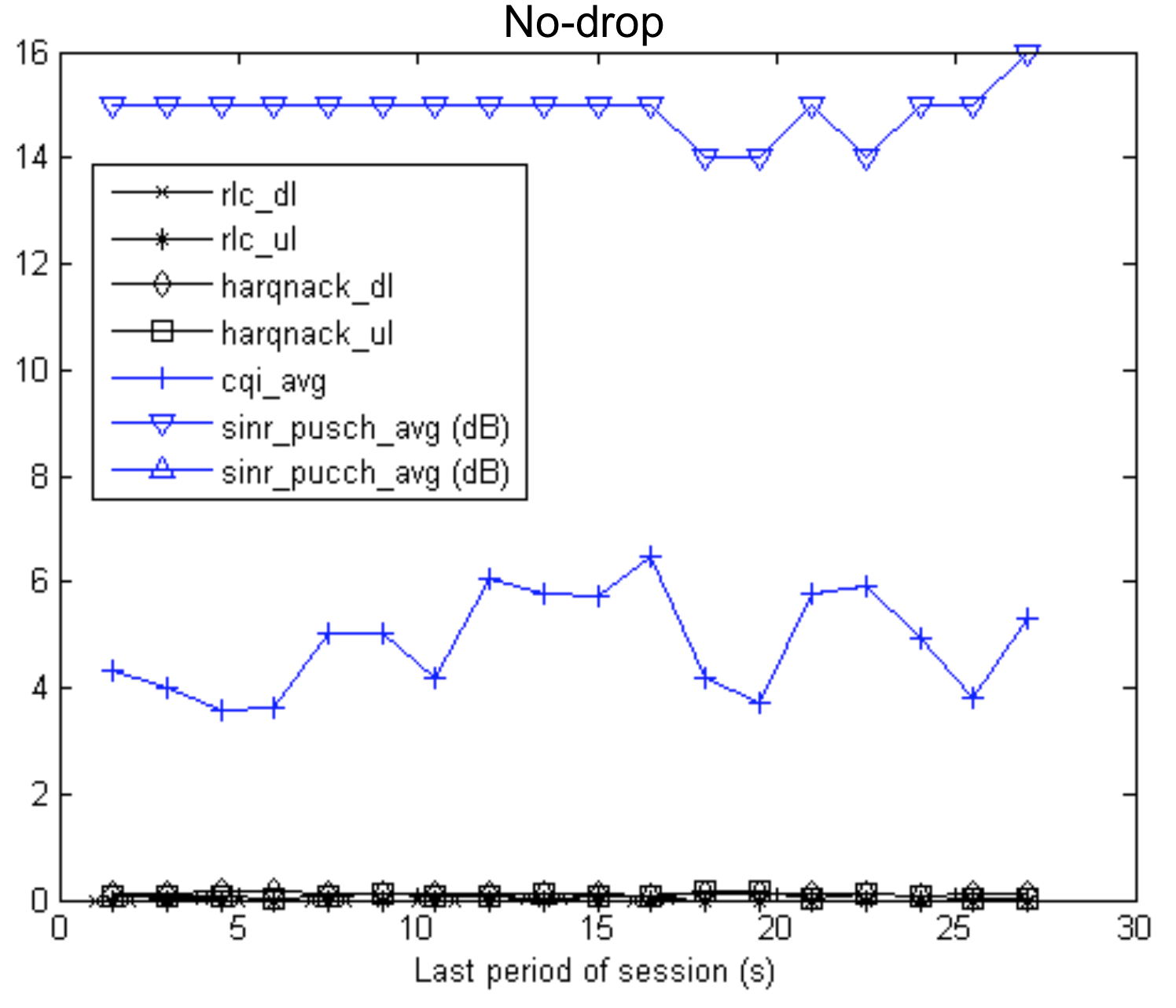}}}
\caption{Typical examples for time evolution of drop (left) and no-drop scenarios (right).}
\label{fig:session}
\end{figure}

Our objective is to predict the release category (drop or no-drop) of the session based not only on the features measured directly preceding the end of session but also the time evolution of the features. We consider each session record as a set of time series for the six technical parameters, along with the target variable of drop or no-drop. As baseline features, for each of the time series, we compute five statistical attributes (similarly to \citeother{zhou2013proactive}): minimum, maximum, most frequent item (mode), mean, variance and for each, we compute the statistical attributes over the gradient. Overall, we obtain a statistical descriptor for a session with 60 attributes: for six time series, we have five statistics and for each, we also have the gradient.

\subsection{Classification methods}

In this section we first give an overview of AdaBoost \citeother{freund1995decision}, our baseline method also used in \citeother{zhou2013proactive}. Then we describe the Similarity kernel over the Dynamic Time Warping time series distance \citeother{keogh2006decade} of the six measurement series corresponding to each radio bearer session.

\subsubsection{AdaBoost}

AdaBoost \citeother{freund1995decision} is a machine learning meta-algorithm that ``boosts'' base learners by computing their weighted sum. In each iteration, subsequent weak learners are trained by increasing the importance of the LTE session samples that were misclassified previously.

Our base learners consist of single attributes with a threshold called \emph{decision stump}. For example, a stump can classify sessions with maximum uplink RLC NACK ratio above certain value as drop, otherwise no drop. In an iteration $i$, the new stump $h_i$ is selected along with a weight $\alpha_i$ to minimize the error of the predictor with an exponential cost function $\exp(-y(x)\sum_i \alpha_i h_i(x))$ where $x$ is an instance (session) and $y (x)$ is its label, 1 for drop and -1 for no-drop.

We use the AdaBoost implementation of Weka \citeother{weka} for performing the experiments.

\subsubsection{Time Series}

By an extensive comparative study of time series classification methods \citeother{ding2008querying}, the overall best performing time series distance measure is the Euclidean distance of the optimal ``dynamic'' time warping (DTW) of the two series \citeother{berndt1994using}.

Let our time series consist of  discrete periodic reports. If the length of two series $X= (x_1$, \ldots, $x_n)$ and $Y= (y_1$, \ldots, $y_n)$ is identical, we can define their Euclidean distance  as
\begin{equation}
\textstyle L_2 (X,Y) = \sqrt{\sum_{i=1}^n (x_i-y_i)^2}.
\label{eq:l2}
\end{equation}
By Dynamic Time Warping (DTW), we may define the distance of series of different length. In addition, DTW warps the series by mapping similar behaviour to the same place. For example, peaks and valleys can be matched along the two series and the optimal warping will measure similarity in trends instead of in the actual pairs of measured values. For illustrations of DTW and Euclidean distance, see \citeother{berndt1994using,ding2008querying}.

The optimal warping is found by dynamic programming. Let the distance of two single-point series be their difference, DTW$((x_1),(y_1))=|x_1-y_1|$. 
The DTW of longer series is defined recursively as the minimum of warping either one or no endpoint,

\begin{equation}
\mbox{DTW}^2 ((x_1, \ldots, x_n), (y_1, \ldots, y_m)) =
\label{eq:dtw}
\end{equation}

\ \vspace{-1.5cm}\

\begin{eqnarray*}
 &\min& (
\mbox{DTW}^2 ((x_1, \ldots, x_{n-1}), (y_1, \ldots, y_{m-1})) + (x_n - y_m)^2,\\
&&\mbox{DTW}^2 ((x_1, \ldots, x_{n-1}), (y_1, \ldots, y_m)),\\
&&\mbox{DTW}^2 ((x_1, \ldots, x_n), (y_1, \ldots, y_{m-1}))
)
\end{eqnarray*}

The DTW distance can be used for classifying time series by any distance based method, e.g.\ nearest neighbours \citeother{ding2008querying}. In our problem of predicting mobile sessions, however, we have six time series and for a pair of sessions, six distance values need to be aggregated. In addition, we would also like to combine time series similarities with similarity in the statistical features.

In order to combine the six distance functions and the statistical features for classification, we may use both the multimodal pairwise or the class similarity graph (Section~\ref{sec:sim_graph}). To determine the sample set we randomly select a set $S$ of reference sessions, thus for each session $x$, we obtain $6|S|$ distances from the pairs of the six measurement time series for $x$ and the elements of $S$. By considering the statistical parameters, we may obtain $|S|$ additional Euclidean distance values between the statistical parameters of $x$ and elements of $S$, resulting in $7|S|$ distances overall.

As before, we used LibSVM \citeother{CC01a} for training the SVM model. Our main metric for evaluation is ROC AUC (Section~\ref{sect:eval}).

\subsection{Experimental Results}

Our data consists of 210K dropped and 27.2M normal sessions. To conduct our experiments over the same data for all parameter settings, we consider sessions with at least 15 periodic reports as summarized in Table~\ref{tab:session-summary}. Part of our experiments are conducted over a sample of the normal sessions.  

We consider the number of periodic report measurements both before and after the prediction. Data before prediction may constitute in building better descriptors. On the other hand, if we take the last $k$ periodic reports before drop or normal termination, the prediction model is required to look farther ahead in time, hence we expect deterioration in quality. Another parameter of the session is the duration till prediction: very short sessions will have too few data to predict. 

\begin{table}
\centerline{\begin{tabular}{|l|r|r|r|}\hline
                               & full &no drop, sample& drop  \\ \hline
All time series                & 27.4M       &210,000        &210,000\\ \hline
At least 15 measurement points &  2.8M       & 23,000        & 27,440\\ \hline
\end{tabular}}
\label{tab:session-summary}
\caption{Size of the session drop experimental data set.}
\end{table}

\begin{table}
\centerline{\begin{tabular}{|l|r|r|}\hline
Method                          & AUC    & Gain \\ \hline
SVM over session statistics     & 0.8627 & \\
AdaBoost over session statistics& 0.9022 & base \\
Pairwise sim., 6xDTW $|S|=30$          & 0.9276 & +3.47\% \\
Class sim., 6xDTW $|S|=30$, $|R|=10$  & 0.9598 & +6.39\% \\ \hline
\end{tabular}}
\caption{Prediction quality at 5 periodic reports before the end of the session over the small dataset with at least 15 measurement point per session.}
\label{tab:session_res}
\end{table}

Overall, we observe best performance by the DTW based similarity kernel method, followed by the baseline AdaBoost over statistical descriptors. For all practically relevant parameters, similarity kernel with pairwise similarity graph improves the accuracy of AdaBoost around by 5\% over the sample as in Table~\ref{tab:session-summary}.  Over the full data set, performance is similar: AUC 0.891 for AdaBoost and 0.908 for DTW, with five periodic reports before session termination and at least ten before the prediction. 

The possible typical physical phenomenon behind drop can be explained by considering the output model parameters. The best features returned by AdaBoost are seen in Table~\ref{tab:adaboost}. We observe that the most important factor is the increased number of packets retransmitted, most importantly over the uplink control channel followed by HARQ over the downlink. Other natural measures as the CQI or even SINR play less role.

\begin{table}
\centerline{\begin{tabular}{|l|r|r|r|}\hline
                       & drop  &no drop &weight\\
                       &score  &score   &      \\  \hline
rlc\_ul max  &		0.93447	&0.12676 &   1.55\\  \hline
rlc\_ul mean &		0.11787	&-0.11571&   0.44 (twice)\\  \hline
harqnack\_dl max &	0.02061	&0.00619 &   0.29; 0.19\\  \hline
$d/dt{}$rlc\_ul mean&	0.19277	&0.18110 &   0.24\\  \hline
sinr\_pusch mean	&1.92105&6.61538 &   0.33\\  \hline
\end{tabular}}
\caption{Best features returned by AdaBoost.}
\label{tab:adaboost}
\end{table}

In order to see how early the prediction can be made, the performance as the function of the number of periodic reports before session drop or normal termination is given in Fig.~\ref{fig:lookahead}. The figure shows the accuracy of early prediction: we observe that we can already with fairly high quality predict drop five measurements, i.e. more than 7 seconds ahead. Regarding the necessary number of observations before prediction we can see that already the first measurement point gives an acceptable level of accuracy. Beyond three reporting periods, most methods saturate and only the DTW based similarity kernel shows additional moderate improvement.

Interestingly, each descriptor needs its own machine learning method: time series with AdaBoost and statistical descriptors with SVM perform poor (Table~\ref{tab:session_res}). Additionally, we experiment with the similarity graph. If we replace the pairwise similarity graph with the class similarity graph (Section~\ref{sec:sim_graph}), the performance increases significantly achieving $0.9598$ in AUC. 

The computational time of feature extraction and the prediction depends typically linearly on the number of parameters of the methods, in the range of 1--5 ms, a small fraction of ROP per session. This enables a SON function for online sessions using the predictor to balance between how accurate or how early the prediction is performed.

\begin{figure}
\centerline 
\subfigure{{\includegraphics[width=6.1cm]{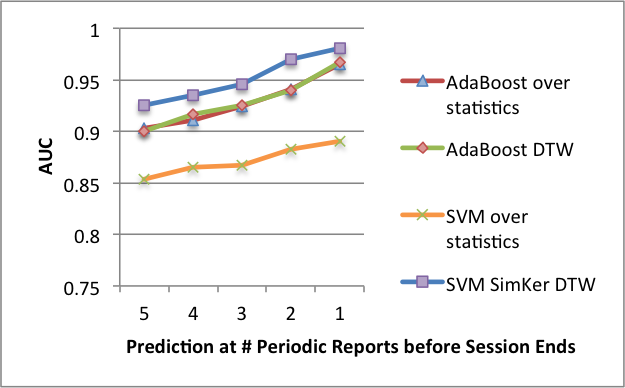}}}
\qquad
\subfigure{{\includegraphics[width=6.1cm]{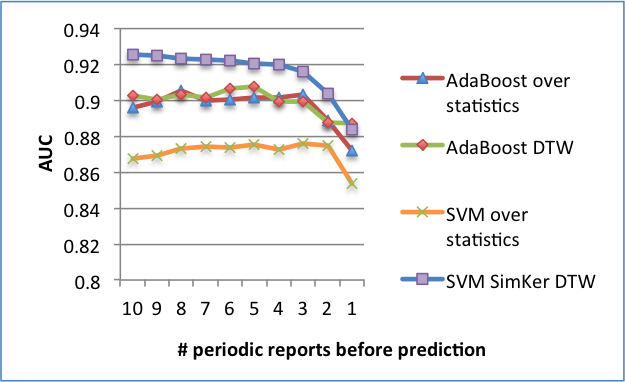}}}
\label{fig:lookahead}
\caption{Performance of early prediction (left )and dependence of prediction performance on the number of observations (right).}
\end{figure}

\subsection{Summary}

In this section we gave a method to classify complex time-series. The method is based on the similarity kernel over DTW. We experimented with a cellular data sets. In both cases the method outperformed by a large margin the existing methods, achieving more than $6\%$ increase in AUC. 

The main statement of this section:

\begin{itemize}
\item[] We predicted session drops in LTE networks more than 5 seconds before the end of the session. The model based on multi-dimensional time-series described by the class similarity graph with multiple statistical features and DTW.
\end{itemize}

The method and experiments without the class similarity graph was published in \citeauth{daroczy2015machine}. My contribution was mainly the idea and development of the similarity kernel and the experiments. 

\newpage

\section{Conclusions and future work}

In this thesis we examined several multimodal feature extraction and learning methods for retrieval and classification purposes. We reread briefly some theoretical results of learning in Section~\ref{sect:learning} and reviewed several generative and discriminative models in Section~\ref{sect:prob} while we described the similarity kernel in Section~\ref{sect:simker}. 

We examined different aspects of the multimodal image retrieval and classification in Section~\ref{sect:mm_img_class} and suggested methods for identifying quality assessments of Web documents in Section~\ref{sect:text}. In our last problem we proposed similarity kernel for time-series based classification. The experiments were carried over publicly available datasets and source codes for the most essential parts are either open source or released. 

Since the used similarity graphs (Section~\ref{sec:sim_graph}) are greatly constrained for computational purposes, we would like to continue work with more complex, evolving and capable graphs and apply for different problems such as capturing the rapid change in the distribution (e.g. session based recommendation) or complex graphs of the literature work. 

The similarity kernel with the proper metrics reaches and in many cases improves over the state-of-the-art. Hence we may conclude generative models based on instance similarities with multiple modes is a generally applicable model for classification and regression tasks ranging over various domains, including but not limited to the ones presented in this thesis. More generally, the Fisher kernel is not only unique in many ways but one of the most powerful kernel functions. Therefore we may exploit the Fisher kernel in the future over widely used generative models, such as Boltzmann Machines \citeother{hinton1984boltzman}, a particular subset, the Restricted Boltzmann Machines and Deep Belief Networks \citeother{hinton2006fast}), Latent Dirichlet Allocation \citeother{blei2003latent} or Hidden Markov Models \citeother{baum1966statistical} to name a few.

\newpage




\begin{thebibliography}{}

\bibitem[Bencz{\'u}r et~al., 2008]{benczur2008multimodal}
Bencz{\'u}r, A., B{\'\i}r{\'o}, I., Brendel, M., Csalog{\'a}ny, K.,
  Dar{\'o}czy, B., and Sikl{\'o}si, D. (2008).
\newblock Multimodal retrieval by text--segment biclustering.
\newblock {\em Advances in Multilingual and Multimodal Information Retrieval,
  Lecture Notes in Computer Science (LNCS) 5152}, pages 518--521.

\bibitem[Dar{\'o}czy et~al., 2013]{daroczy2013fisher}
Dar{\'o}czy, B., Bencz{\'u}r, A.~A., and R{\'o}nyai, L. (2013).
\newblock Fisher kernels for image descriptors: a theoretical overview and
  experimental results.
\newblock {\em Annales Universitatis Scientiarum Budapestinensis de Rolando
  E{\H{o}}tv{\H{o}}s Nominatae. Sectio Computatorica}.

\bibitem[Dar{\'o}czy et~al., 2009a]{daroczy2009sztaki_lncs}
Dar{\'o}czy, B., Fekete, Z., Brendel, M., R{\'a}cz, S., Bencz{\'u}r, A.,
  Sikl{\'o}si, D., and Pereszl{\'e}nyi, A. (2009a).
\newblock Sztaki@ imageclef 2008: visual feature analysis in segmented images.
\newblock {\em Evaluating Systems for Multilingual and Multimodal Information
  Access, Lecture Notes in Computer Science (LNCS) 5706}, pages 644--651.

\bibitem[Daroczy et~al., 2015]{daroczy2015predict}
Daroczy, B., Palovics, R., Wieszner, V., Farkas, R., and Benczur (2015).
\newblock Predicting user-specific temporal retweet count.
\newblock In {\em Proceedings of the 3rd International Workshop on News
  Recommendation and Analytics (INRA 2015) in conjunction with ACM RecSys
  2015}.

\bibitem[Dar{\'o}czy et~al., 2011]{daroczy2011sztaki}
Dar{\'o}czy, B., Pethes, R., and Bencz{\'u}r, A.~A. (2011).
\newblock Sztaki@ imageclef 2011.
\newblock In {\em CLEF (Notebook Papers/Labs/Workshop) Amsterdam, The
  Netherlands, 2011}.

\bibitem[Dar{\'o}czy et~al., 2010]{daroczy2010interest}
Dar{\'o}czy, B., Petr{\'a}s, I., Bencz{\'u}r, A., Fekete, Z., Nemeskey, D.,
  Sikl{\'o}si, D., and Weiner, Z. (2010).
\newblock Interest point and segmentation-based photo annotation.
\newblock {\em Multilingual Information Access Evaluation II. Multimedia
  Experiments, Lecture Notes in Computer Science (LNCS 6242)}, pages 340--347.

\bibitem[Dar{\'o}czy et~al., 2009b]{daroczy2009sztaki_clef}
Dar{\'o}czy, B., Petr{\'a}s, I., Bencz{\'u}r, A., Fekete, Z., Nemeskey, D.~M.,
  Sikl{\'o}si, D., and Weiner, Z. (2009b).
\newblock Sztaki@ imageclef 2009.
\newblock {\em CLEF (Notebook Papers/Labs/Workshop) Corfu, Greece 2009}.

\bibitem[Dar{\'o}czy et~al., 2012]{daroczy2012sztaki}
Dar{\'o}czy, B., Sikl{\'o}si, D., and Bencz{\'u}r, A.~A. (2012).
\newblock Sztaki@ imageclef 2012 photo annotation.
\newblock In {\em CLEF (Notebook Papers/Labs/Workshop) Rome, Italy, 2012}.

\bibitem[Dar\'oczy et~al., 2015]{daroczy2015text}
Dar\'oczy, B., Sikl\'osi, D., P\'alovics, R., and Bencz\'ur, A.~A. (2015).
\newblock Text classification kernels for quality prediction over the c3 data
  set.
\newblock In {\em WWW '15 Companion Proceedings of the 24th International
  Conference on World Wide Web, Florence, Italy 2015}, pages 1441--1446.
  International World Wide Web Conferences Steering Committee.

\bibitem[Dar{\'o}czy et~al., 2015]{daroczy2015machine}
Dar{\'o}czy, B., Vaderna, P., and Bencz{\'u}r, A. (2015).
\newblock Machine learning based session drop prediction in lte networks and
  its son aspects.
\newblock In {\em Proceedings of IWSON at IEEE 81st Vehicular Technology
  Conference VTC'15 Spring, Glasgow, Scotland 2015}.

\bibitem[Deselaers et~al., 2008]{deselaers2008overview}
Deselaers, T., Hanbury, A., Viitaniemi, V., Bencz{\'u}r, A., Brendel, M.,
  Dar{\'o}czy, B., Balderas, H. J.~E., Gevers, T., Gracidas, C. A.~H., Hoi,
  S.~C., et~al. (2008).
\newblock Overview of the imageclef 2007 object retrieval task.
\newblock In {\em Advances in Multilingual and Multimodal Information
  Retrieval, LNCS 5152}, pages 445--471. Springer Berlin Heidelberg.

\bibitem[Erd{\'e}lyi et~al., 2014]{erdelyi2014classification}
Erd{\'e}lyi, M., Bencz{\'u}r, A.~A., Dar{\'o}czy, B., Garz{\'o}, A., Kiss, T.,
  and Sikl{\'o}si, D. (2014).
\newblock The classification power of web features.
\newblock {\em Internet Mathematics}, 10(3-4):421--457.

\bibitem[Garz{\'o} et~al., 2013]{garzo2013cross}
Garz{\'o}, A., Dar{\'o}czy, B., Kiss, T., Sikl{\'o}si, D., and Bencz{\'u}r,
  A.~A. (2013).
\newblock Cross-lingual web spam classification.
\newblock In {\em Proceedings of the 22nd international conference on World
  Wide Web companion}, pages 1149--1156. International World Wide Web
  Conferences Steering Committee.

\bibitem[P{\'a}lovics et~al., 2014]{palovics2014recsys}
P{\'a}lovics, R., Ayala-G{\'o}mez, F., Csikota, B., Dar{\'o}czy, B., Kocsis,
  L., Spadacene, D., and Bencz{\'u}r, A.~A. (2014).
\newblock Recsys challenge 2014: an ensemble of binary classifiers and matrix
  factorization.
\newblock In {\em Proceedings of the 2014 Recommender Systems Challenge},
  page~13. ACM.

\bibitem[Sikl{\'o}si et~al., 2012]{siklosi2012content}
Sikl{\'o}si, D., Dar{\'o}czy, B., and Bencz{\'u}r, A.~A. (2012).
\newblock Content-based trust and bias classification via biclustering.
\newblock In {\em Proceedings of the 2nd Joint WICOW/AIRWeb Workshop on Web
  Quality in conjuction with WWW'12, Lyon, France}, pages 41--47. ACM.

\end{thebibliography}

\begin{thebibliography}{}

\bibitem[Abernethy et~al., 2008]{chapelle07witch}
Abernethy, J., Chapelle, O., and Castillo, C. (2008).
\newblock {WITCH: A New Approach to Web Spam Detection}.
\newblock In {\em Proceedings of the 4th International Workshop on Adversarial
  Information Retrieval on the Web (AIRWeb)}.

\bibitem[Ah-Pine et~al., 2008]{ah2008xrce}
Ah-Pine, J., Cifarelli, C., Clinchant, S., Csurka, G., and Renders, J. (2008).
\newblock {XRCEs Participation to ImageCLEF 2008}.
\newblock In {\em Working Notes of the 2008 CLEF Workshop}.

\bibitem[Amari, 1996]{amari1996neural}
Amari, S.-i. (1996).
\newblock Neural learning in structured parameter spaces-natural riemannian
  gradient.
\newblock In {\em NIPS}, pages 127--133. Citeseer.

\bibitem[Arni et~al., 2009]{arni:clef08:lncs:photo}
Arni, T., Clough, P., Sanderson, M., and Grubinger, M. (2008 (printed in
  2009)).
\newblock Overview of the {ImageCLEFphoto} 2008 photographic retrieval task.
\newblock In Peters, C., Giampiccol, D., Ferro, N., Petras, V., Gonzalo, J.,
  Pe{\~n}as, A., Deselaers, T., Mandl, T., Jones, G., and Kurimo, M., editors,
  {\em Evaluating Systems for Multilingual and Multimodal Information Access --
  9th Workshop of the Cross-Language Evaluation Forum}, Lecture Notes in
  Computer Science, Aarhus, Denmark.

\bibitem[Bach et~al., 2004]{bach2004multiple}
Bach, F.~R., Lanckriet, G.~R., and Jordan, M.~I. (2004).
\newblock Multiple kernel learning, conic duality, and the smo algorithm.
\newblock In {\em Proceedings of the twenty-first international conference on
  Machine learning}, page~6. ACM.

\bibitem[Baum and Petrie, 1966]{baum1966statistical}
Baum, L.~E. and Petrie, T. (1966).
\newblock Statistical inference for probabilistic functions of finite state
  markov chains.
\newblock {\em The annals of mathematical statistics}, pages 1554--1563.

\bibitem[Bell and Koren, 2007]{bell2007lnp}
Bell, R.~M. and Koren, Y. (2007).
\newblock Lessons from the netflix prize challenge.
\newblock {\em ACM SIGKDD Explorations Newsletter}, 9(2):75--79.

\bibitem[Belongie et~al., 1998]{belongie1998color}
Belongie, S., Carson, C., Greenspan, H., and Malik, J. (1998).
\newblock Color-and texture-based image segmentation using em and its
  application to content-based image retrieval.
\newblock In {\em Computer Vision, 1998. Sixth International Conference on},
  pages 675--682. IEEE.

\bibitem[Bencz\'ur et~al., 2003]{husearch-www2003}
Bencz\'ur, A.~A., Csalog\'any, K., Friedman, E., Fogaras, D., Sarl\'os, T.,
  Uher, M., and Windhager, E. (2003).
\newblock Searching a small national domain---preliminary report.
\newblock In {\em Proceedings of the 12th World Wide Web Conference (WWW)},
  Budapest, Hungary.

\bibitem[Bennett and Lanning, 2007]{netflix-prize}
Bennett, J. and Lanning, S. (2007).
\newblock The netflix prize.
\newblock In {\em KDD Cup and Workshop in conjunction with KDD 2007}.

\bibitem[Berndt and Clifford, 1994]{berndt1994using}
Berndt, D.~J. and Clifford, J. (1994).
\newblock Using dynamic time warping to find patterns in time series.
\newblock In {\em KDD workshop}, volume~10, pages 359--370. Seattle, WA.

\bibitem[Besag, 1974]{besag1974spatial}
Besag, J. (1974).
\newblock Spatial interaction and the statistical analysis of lattice systems.
\newblock {\em Journal of the Royal Statistical Society. Series B
  (Methodological)}, pages 192--236.

\bibitem[Besag, 1975]{besag1975statistical}
Besag, J. (1975).
\newblock Statistical analysis of non-lattice data.
\newblock {\em The statistician}, pages 179--195.

\bibitem[Binder et~al., 2013]{binder2013enhanced}
Binder, A., Samek, W., M{\"u}ller, K.-R., and Kawanabe, M. (2013).
\newblock Enhanced representation and multi-task learning for image annotation.
\newblock {\em Computer Vision and Image Understanding}, 117(5):466--478.

\bibitem[Blei et~al., 2003]{blei2003latent}
Blei, D.~M., Ng, A.~Y., and Jordan, M.~I. (2003).
\newblock Latent dirichlet allocation.
\newblock {\em the Journal of machine Learning research}, 3:993--1022.

\bibitem[Boser et~al., 1992]{boser1992training}
Boser, B.~E., Guyon, I.~M., and Vapnik, V.~N. (1992).
\newblock A training algorithm for optimal margin classifiers.
\newblock In {\em Proceedings of the fifth annual workshop on Computational
  learning theory}, pages 144--152. ACM.

\bibitem[Campbell, 1986]{campbell1986extended}
Campbell, L. (1986).
\newblock An extended {\v{c}}encov characterization of the information metric.
\newblock {\em Proceedings of the American Mathematical Society},
  98(1):135--141.

\bibitem[Campbell, 1985]{Campbell1985}
Campbell, L.~L. (1985).
\newblock The relation between information theory and the differential geometry
  approach to statistics.
\newblock {\em Information sciences}, 35(3):199--210.

\bibitem[Canny, 1986]{Canny:1986}
Canny, J. (1986).
\newblock A computational approach to edge detection.
\newblock {\em IEEE Trans. Pattern Anal. Mach. Intell.}, 8(6):679--698.

\bibitem[Cao et~al., 2010]{cao2010spatial}
Cao, Y., Wang, C., Li, Z., Zhang, L., and Zhang, L. (2010).
\newblock Spatial-bag-of-features.
\newblock In {\em Computer Vision and Pattern Recognition (CVPR), 2010 IEEE
  Conference on}, pages 3352--3359. IEEE.

\bibitem[Carson et~al., 2002]{carson2002blobworld}
Carson, C., Belongie, S., Greenspan, H., and Malik, J. (2002).
\newblock Blobworld: Image segmentation using expectation-maximization and its
  application to image querying.
\newblock {\em IEEE Trans. Pattern Anal. Mach. Intell.}, 24(8):1026--1038.

\bibitem[Castillo et~al., 2008]{castillo08wsc}
Castillo, C., Chellapilla, K., and Denoyer, L. (2008).
\newblock Web spam challenge 2008.
\newblock In {\em Proceedings of the 4th International Workshop on Adversarial
  Information Retrieval on the Web (AIRWeb)}.

\bibitem[Castillo et~al., 2006]{spam-challenge}
Castillo, C., Donato, D., Becchetti, L., Boldi, P., Leonardi, S., Santini, M.,
  and Vigna, S. (2006).
\newblock A reference collection for web spam.
\newblock {\em SIGIR Forum}, 40(2):11--24.

\bibitem[Castillo et~al., 2007]{castillo2006know}
Castillo, C., Donato, D., Gionis, A., Murdock, V., and Silvestri, F. (2007).
\newblock {Know your neighbors: web spam detection using the web topology}.
\newblock {\em Proceedings of the 30th annual international ACM SIGIR
  conference on Research and development in information retrieval}, pages
  423--430.

\bibitem[Cencov, 2000]{cencov2000statistical}
Cencov, N.~N. (2000).
\newblock {\em Statistical decision rules and optimal inference}.
\newblock Number~53. American Mathematical Soc.

\bibitem[Chang and Lin, 2001]{CC01a}
Chang, C.-C. and Lin, C.-J. (2001).
\newblock {\em {LIBSVM}: a library for support vector machines}.
\newblock Software available at \url{http://www.csie.ntu.edu.tw/~cjlin/libsvm}.

\bibitem[Chatfield et~al., 2011]{Zissermann2011}
Chatfield, K., Lempitsky, V., Vedaldi, A., and Zisserman, A. (2011).
\newblock The devil is in the details: an evaluation of recent feature encoding
  methods.
\newblock In {\em British Machine Vision Conference}.

\bibitem[Chen and Wang, 2004]{wang2004regions}
Chen, Y. and Wang, J.~Z. (2004).
\newblock Image categorization by learning and reasoning with regions.
\newblock {\em J. Mach. Learn. Res.}, 5:913--939.

\bibitem[Cormack et~al., 2011]{cormack2011efficient}
Cormack, G., Smucker, M., and Clarke, C. (2011).
\newblock Efficient and effective spam filtering and re-ranking for large web
  datasets.
\newblock {\em Information retrieval}, 14(5):441--465.

\bibitem[Cortes and Vapnik, 1995]{Vapnik95}
Cortes, C. and Vapnik, V. (1995).
\newblock Support-vector networks.
\newblock {\em Machine Learning}, 20.

\bibitem[Costa et~al., 2014]{costa2014fisher}
Costa, S.~I., Santos, S.~A., and Strapasson, J.~E. (2014).
\newblock Fisher information distance: a geometrical reading.
\newblock {\em Discrete Applied Mathematics}.

\bibitem[Cristianini and Shawe-Taylor, 2000]{cristianini2000introduction}
Cristianini, N. and Shawe-Taylor, J. (2000).
\newblock {\em An introduction to support vector machines and other
  kernel-based learning methods}.
\newblock Cambridge university press.

\bibitem[Csurka et~al., 2004]{csurka2004visual}
Csurka, G., Dance, C., Fan, L., Willamowski, J., and Bray, C. (2004).
\newblock {Visual categorization with bags of keypoints}.
\newblock In {\em Workshop on Statistical Learning in Computer Vision, ECCV},
  volume~1, page~22. Citeseer.

\bibitem[Dalal and Triggs, 2005]{hog}
Dalal, N. and Triggs, B. (2005).
\newblock Histograms of oriented gradients for human detection.
\newblock In {\em Computer Vision and Pattern Recognition (CVPR), 2005 IEEE
  Conference on}.

\bibitem[Dave et~al., 2003]{dave2003mining}
Dave, K., Lawrence, S., and Pennock, D. (2003).
\newblock Mining the peanut gallery: Opinion extraction and semantic
  classification of product reviews.
\newblock In {\em Proceedings of the 12th international conference on World
  Wide Web}, pages 519--528. ACM.

\bibitem[Dempster et~al., 1977]{dempster1977maximum}
Dempster, A.~P., Laird, N.~M., and Rubin, D.~B. (1977).
\newblock Maximum likelihood from incomplete data via the em algorithm.
\newblock {\em Journal of the royal statistical society. Series B
  (methodological)}, pages 1--38.

\bibitem[Deng et~al., 2009]{deng2009imagenet}
Deng, J., Dong, W., Socher, R., Li, L.-J., Li, K., and Fei-Fei, L. (2009).
\newblock Imagenet: A large-scale hierarchical image database.
\newblock In {\em Computer Vision and Pattern Recognition, 2009. CVPR 2009.
  IEEE Conference on}, pages 248--255. IEEE.

\bibitem[Devroye et~al., 1996]{devroye1996probabilistic}
Devroye, L., Gy{\"o}rfi, L., and Lugosi, G. (1996).
\newblock {\em A Probabilistic Theory of Pattern Recognition}, volume~31.
\newblock Springer Science \& Business Media.

\bibitem[Dhillon et~al., 2003]{dhillon2003itc}
Dhillon, I., Mallela, S., and Modha, D. (2003).
\newblock {Information-theoretic co-clustering}.
\newblock {\em Proceedings of the Ninth ACM SIGKDD International Conference on
  Knowledge Discovery and Data Mining}, pages 89--98.

\bibitem[Ding et~al., 2008]{ding2008querying}
Ding, H., Trajcevski, G., Scheuermann, P., Wang, X., and Keogh, E. (2008).
\newblock Querying and mining of time series data: experimental comparison of
  representations and distance measures.
\newblock {\em Proceedings of the VLDB Endowment}, 1(2):1542--1552.

\bibitem[Duygulu et~al., 2006]{duygulu2006object}
Duygulu, P., Barnard, K., de~Freitas, J.~F., and Forsyth, D.~A. (2006).
\newblock Object recognition as machine translation: Learning a lexicon for a
  fixed image vocabulary.
\newblock In {\em Computer Vision—ECCV 2002}, pages 97--112. Springer.

\bibitem[Erd{\'e}lyi et~al., 2011]{erdelyi11webquality}
Erd{\'e}lyi, M., Garz{\'o}, A., and Bencz{\'u}r, A.~A. (2011).
\newblock Web spam classification: a few features worth more.
\newblock In {\em Joint WICOW/AIRWeb Workshop on Web Quality (WebQuality 2011)
  In conjunction with the 20th International World Wide Web Conference in
  Hyderabad, India.} ACM Press.

\bibitem[Everingham et~al., 2010]{everingham2010pascal}
Everingham, M., Van~Gool, L., Williams, C.~K., Winn, J., and Zisserman, A.
  (2010).
\newblock The pascal visual object classes (voc) challenge.
\newblock {\em International journal of computer vision}, 88(2):303--338.

\bibitem[Felzenszwalb and Huttenlocher, 2004]{FelzandHutt}
Felzenszwalb, P.~F. and Huttenlocher, D.~P. (2004).
\newblock Efficient graph-based image segmentation.
\newblock {\em International Journal of Computer Vision}, 59.

\bibitem[Fetterly and Gy\"ongyi, 2009]{fetterly2008fifth}
Fetterly, D. and Gy\"ongyi, Z. (2009).
\newblock {Fifth international workshop on adversarial information retrieval on
  the web (AIRWeb 2009)}.

\bibitem[Fisher et~al., 1960]{fisher1960design}
Fisher, S. R.~A., Genetiker, S., Fisher, R.~A., Genetician, S., Fisher, R.~A.,
  and G{\'e}n{\'e}ticien, S. (1960).
\newblock {\em The design of experiments}, volume~12.
\newblock Oliver and Boyd Edinburgh.

\bibitem[Fogarty et~al., 2005]{fogarty05roc}
Fogarty, J., Baker, R.~S., and Hudson, S.~E. (2005).
\newblock Case studies in the use of roc curve analysis for sensor-based
  estimates in human computer interaction.
\newblock In {\em Proceedings of Graphics Interface 2005}, GI '05, pages
  129--136, School of Computer Science, University of Waterloo, Waterloo,
  Ontario, Canada. Canadian Human-Computer Communications Society.

\bibitem[Freund and Schapire, 1995]{freund1995decision}
Freund, Y. and Schapire, R.~E. (1995).
\newblock A decision-theoretic generalization of on-line learning and an
  application to boosting.
\newblock In {\em Computational learning theory}, pages 23--37. Springer.

\bibitem[Galleguillos et~al., 2008]{galleguillos2008weakly}
Galleguillos, C., Babenko, B., Rabinovich, A., and Belongie, S. (2008).
\newblock Weakly supervised object localization with stable segmentations.
\newblock In {\em Computer Vision--ECCV 2008}, pages 193--207. Springer.

\bibitem[Geman and Graffigne, 1986]{geman1986markov}
Geman, S. and Graffigne, C. (1986).
\newblock Markov random field image models and their applications to computer
  vision.
\newblock In {\em Proceedings of the International Congress of Mathematicians},
  volume~1, page~2.

\bibitem[G{\"o}nen and Alpayd{\i}n, 2011]{gonen2011multiple}
G{\"o}nen, M. and Alpayd{\i}n, E. (2011).
\newblock Multiple kernel learning algorithms.
\newblock {\em The Journal of Machine Learning Research}, 12:2211--2268.

\bibitem[Gromov, 2012]{gromov2012search}
Gromov, M. (2012).
\newblock In a search for a structure, part 1: On entropy.
\newblock {\em Proc ECM6, Krakow}.

\bibitem[Grubinger et~al., 2006]{grubinger06iapr}
Grubinger, M., Clough, P., Müller, H., and Deselears, T. (2006).
\newblock The {IAPR TC-12} benchmark - a new evaluation resource for visual
  information systems.
\newblock In {\em OntoImage}, pages 13--23.

\bibitem[Hammersley and Clifford, 1971]{Hammersley1971markov}
Hammersley, J.~M. and Clifford, P. (1971).
\newblock Markov fields on finite graphs and lattices.
\newblock {\em seminar, unpublished}.

\bibitem[Harris and Stephens, 1988]{harris1988combined}
Harris, C. and Stephens, M. (1988).
\newblock A combined corner and edge detector.
\newblock In {\em Alvey vision conference}, volume~15, page~50. Citeseer.

\bibitem[He et~al., 2015]{he2015delving}
He, K., Zhang, X., Ren, S., and Sun, J. (2015).
\newblock Delving deep into rectifiers: Surpassing human-level performance on
  imagenet classification.
\newblock {\em arXiv preprint arXiv:1502.01852}.

\bibitem[He et~al., 2004]{he2004multiscale}
He, X., Zemel, R.~S., and Carreira-Perpin{\'a}n, M.~A. (2004).
\newblock Multiscale conditional random fields for image labeling.
\newblock In {\em Computer Vision and Pattern Recognition, 2004. CVPR 2004.
  Proceedings of the 2004 IEEE Computer Society Conference on}, volume~2, pages
  II--695. IEEE.

\bibitem[Hinton et~al., 2006]{hinton2006fast}
Hinton, G.~E., Osindero, S., and Teh, Y.-W. (2006).
\newblock A fast learning algorithm for deep belief nets.
\newblock {\em Neural computation}, 18(7):1527--1554.

\bibitem[Hinton et~al., 1984]{hinton1984boltzman}
Hinton, G.~E., Sejnowski, T.~J., and Ackley, D.~H. (1984).
\newblock {\em Boltzmann machines: Constraint satisfaction networks that
  learn}.
\newblock Carnegie-Mellon University, Department of Computer Science
  Pittsburgh, PA.

\bibitem[Hopcroft and Kannan, 2012]{HopcroftKannan}
Hopcroft, J. and Kannan, R. (2012).
\newblock {\em Computer Science Theory for the Information Age}.
\newblock draft.

\bibitem[Hsieh et~al., 2008]{hsieh2008dual}
Hsieh, C.-J., Chang, K.-W., Lin, C.-J., Keerthi, S.~S., and Sundararajan, S.
  (2008).
\newblock A dual coordinate descent method for large-scale linear svm.
\newblock In {\em Proceedings of the 25th international conference on Machine
  learning}, pages 408--415. ACM.

\bibitem[Jaakkola and Haussler, 1999]{JH}
Jaakkola, T.~S. and Haussler, D. (1999).
\newblock Exploiting generative models in discriminative classifiers.
\newblock {\em Advances in neural information processing systems}, pages
  487--493.

\bibitem[Janke et~al., 2004]{janke2004information}
Janke, W., Johnston, D., and Kenna, R. (2004).
\newblock Information geometry and phase transitions.
\newblock {\em Physica A: Statistical Mechanics and its Applications},
  336(1):181--186.

\bibitem[J{\"a}rvelin and Kek{\"a}l{\"a}inen, 2002]{jarvelin2002cumulated}
J{\"a}rvelin, K. and Kek{\"a}l{\"a}inen, J. (2002).
\newblock Cumulated gain-based evaluation of ir techniques.
\newblock {\em ACM Transactions on Information Systems (TOIS)}, 20(4):422--446.

\bibitem[Jeon et~al., 2003]{jeon2003automatic}
Jeon, J., Lavrenko, V., and Manmatha, R. (2003).
\newblock Automatic image annotation and retrieval using cross-media relevance
  models.
\newblock In {\em Proceedings of the 26th annual international ACM SIGIR
  conference on Research and development in informaion retrieval}, pages
  119--126. ACM.

\bibitem[Jost, 2011]{Jo}
Jost, J. (2011).
\newblock {\em Riemannian geometry and geometric analysis}.
\newblock Springer.

\bibitem[Karush, 1939]{karush1939minima}
Karush, W. (1939).
\newblock {\em Minima of functions of several variables with inequalities as
  side constraints}.
\newblock PhD thesis, Master's thesis, Dept. of Mathematics, Univ. of
  Chicago.

\bibitem[Kato and Pong, 2006]{kato2006markov}
Kato, Z. and Pong, T.-C. (2006).
\newblock A markov random field image segmentation model for color textured
  images.
\newblock {\em Image and Vision Computing}, 24(10):1103--1114.

\bibitem[Keogh, 2006]{keogh2006decade}
Keogh, E. (2006).
\newblock A decade of progress in indexing and mining large time series
  databases.
\newblock In {\em Proceedings of the 32nd international conference on Very
  large data bases}, pages 1268--1268. VLDB Endowment.

\bibitem[Koren et~al., 2009]{koren2009matrix}
Koren, Y., Bell, R., and Volinsky, C. (2009).
\newblock Matrix factorization techniques for recommender systems.
\newblock {\em Computer}, 42(8):30--37.

\bibitem[Krizhevsky et~al., 2012]{krizhevsky2012imagenet}
Krizhevsky, A., Sutskever, I., and Hinton, G.~E. (2012).
\newblock Imagenet classification with deep convolutional neural networks.
\newblock In {\em Advances in neural information processing systems}, pages
  1097--1105.

\bibitem[Kuhn and Tucker, 1951]{kuhn1951nonlinear}
Kuhn, H. and Tucker, A. (1951).
\newblock Nonlinear programming. sid 481--492 i proc. of the second berkeley
  symposium on mathematical statistics and probability.

\bibitem[Lanckriet et~al., 2004]{lanckriet2004learning}
Lanckriet, G.~R., Cristianini, N., Bartlett, P., Ghaoui, L.~E., and Jordan,
  M.~I. (2004).
\newblock Learning the kernel matrix with semidefinite programming.
\newblock {\em The Journal of Machine Learning Research}, 5:27--72.

\bibitem[LeCun et~al., 1998]{lecun1998gradient}
LeCun, Y., Bottou, L., Bengio, Y., and Haffner, P. (1998).
\newblock Gradient-based learning applied to document recognition.
\newblock {\em Proceedings of the IEEE}, 86(11):2278--2324.

\bibitem[Li and Fei-Fei, 2010]{li2010optimol}
Li, L.-J. and Fei-Fei, L. (2010).
\newblock Optimol: automatic online picture collection via incremental model
  learning.
\newblock {\em International journal of computer vision}, 88(2):147--168.

\bibitem[Li, 2009]{li2009markov}
Li, S.~Z. (2009).
\newblock {\em Markov random field modeling in image analysis}.
\newblock Springer Science \& Business Media.

\bibitem[Lin et~al., 2011]{lin2011large}
Lin, Y., Lv, F., Zhu, S., Yang, M., Cour, T., Yu, K., Cao, L., and Huang, T.
  (2011).
\newblock Large-scale image classification: fast feature extraction and svm
  training.
\newblock In {\em Computer Vision and Pattern Recognition (CVPR), 2011 IEEE
  Conference on}, pages 1689--1696. IEEE.

\bibitem[Liu et~al., 2014]{liu2014selective}
Liu, N., Dellandrea, E., Tellez, B., and Chen, L. (2014).
\newblock A selective weighted late fusion for visual concept recognition.
\newblock In {\em Fusion in Computer Vision}, pages 1--28. Springer.

\bibitem[Low et~al., 2012]{low2012distributed}
Low, Y., Bickson, D., Gonzalez, J., Guestrin, C., Kyrola, A., and Hellerstein,
  J.~M. (2012).
\newblock Distributed graphlab: a framework for machine learning and data
  mining in the cloud.
\newblock {\em Proceedings of the VLDB Endowment}, 5(8):716--727.

\bibitem[Lowe, 1999]{lowe1999object}
Lowe, D. (1999).
\newblock {Object recognition from local scale-invariant features}.
\newblock In {\em International Conference on Computer Vision}, volume~2, pages
  1150--1157.

\bibitem[Lv et~al., 2004]{quin2004similarity}
Lv, Q., Charikar, M., and Li, K. (2004).
\newblock Image similarity search with compact data structures.
\newblock In {\em CIKM '04: Proceedings of the Thirteenth ACM International
  Conference on Information and Knowledge Management}, pages 208--217, New
  York, NY, USA. ACM Press.

\bibitem[McLachlan and Krishnan, 2007]{mclachlan2007algorithm}
McLachlan, G. and Krishnan, T. (2007).
\newblock {\em The EM algorithm and extensions}, volume 382.
\newblock John Wiley \& Sons.

\bibitem[Mikolajczyk et~al., 2005]{mikolajczyk2005comparison}
Mikolajczyk, K., Tuytelaars, T., Schmid, C., Zisserman, A., Matas, J.,
  Schaffalitzky, F., Kadir, T., and Van~Gool, L. (2005).
\newblock A comparison of affine region detectors.
\newblock {\em International journal of computer vision}, 65(1-2):43--72.

\bibitem[Nigam et~al., 2000]{nigam2000text}
Nigam, K., McCallum, A., Thrun, S., and Mitchell, T. (2000).
\newblock Text classification from labeled and unlabeled documents using em.
\newblock {\em Machine learning}, 39(2):103--134.

\bibitem[Nowak, 2010]{Nowak10}
Nowak, S. (2010).
\newblock {New Strategies for Image Annotation: Overview of the Photo
  Annotation Task at ImageCLEF 2010}.
\newblock In {\em Cross Language Evaluation Forum , ImageCLEF Workshop, 2010}.

\bibitem[Olteanu et~al., 2013]{olteanu2013web}
Olteanu, A., Peshterliev, S., Liu, X., and Aberer, K. (2013).
\newblock Web credibility: Features exploration and credibility prediction.
\newblock In {\em Advances in Information Retrieval}, pages 557--568. Springer.

\bibitem[Papaioannou et~al., 2012]{papaioannou2012decentralized}
Papaioannou, T.~G., Ranvier, J.-E., Olteanu, A., and Aberer, K. (2012).
\newblock A decentralized recommender system for effective web credibility
  assessment.
\newblock In {\em Proceedings of the 21st ACM international conference on
  Information and knowledge management}, pages 704--713. ACM.

\bibitem[Perronnin and Dance, 2007]{perronnin2007fisher}
Perronnin, F. and Dance, C. (2007).
\newblock {Fisher kernels on visual vocabularies for image categorization}.
\newblock In {\em IEEE Conference on Computer Vision and Pattern Recognition,
  2007. CVPR'07}, pages 1--8.

\bibitem[Perronnin et~al., 2010a]{IFK2010}
Perronnin, F., S{\'a}nchez, J., and Mensink, T. (2010a).
\newblock Improving the fisher kernel for large-scale image classification.
\newblock In {\em ECCV (4)}, pages 143--156.

\bibitem[Perronnin et~al., 2010b]{perronnin2010improving}
Perronnin, F., S{\'a}nchez, J., and Mensink, T. (2010b).
\newblock Improving the fisher kernel for large-scale image classification.
\newblock In {\em Computer Vision--ECCV 2010}, pages 143--156. Springer.

\bibitem[Petz and Sudar, 1999]{Petz1999}
Petz, D. and Sudar, C. (1999).
\newblock Extending the fisher metric to density matrices.
\newblock {\em Geometry of Present Days Science}, pages 21--34.

\bibitem[Platt, 1998]{Platt98sequentialminimal}
Platt, J.~C. (1998).
\newblock Sequential minimal optimization: A fast algorithm for training
  support vector machines.
\newblock Technical report, ADVANCES IN KERNEL METHODS - SUPPORT VECTOR
  LEARNING.

\bibitem[Prasad et~al., 2004]{prasad2004region}
Prasad, B.~G., Biswas, K.~K., and Gupta, S.~K. (2004).
\newblock Region-based image retrieval using integrated color, shape, and
  location index.
\newblock {\em Comput. Vis. Image Underst.}, 94(1-3):193--233.

\bibitem[Rakotomamonjy et~al., 2008]{simplemkl}
Rakotomamonjy, A., Bach, F., Canu, S., and Grandvalet, Y. (2008).
\newblock simplemkl.
\newblock {\em Journal of Machine Learning Research}, 9:2491--2521.

\bibitem[Rendle et~al., 2011]{rendle2011fast}
Rendle, S., Gantner, Z., Freudenthaler, C., and Schmidt-Thieme, L. (2011).
\newblock Fast context-aware recommendations with factorization machines.
\newblock In {\em Proceedings of the 34th international ACM SIGIR conference on
  Research and development in Information Retrieval}, pages 635--644. ACM.

\bibitem[Ripley and Kelly, 1977]{ripley1977markov}
Ripley, B.~D. and Kelly, F.~P. (1977).
\newblock Markov point processes.
\newblock {\em Journal of the London Mathematical Society}, 2(1):188--192.

\bibitem[Robertson and Jones, 1976]{robertson1976relevance}
Robertson, S.~E. and Jones, K.~S. (1976).
\newblock Relevance weighting of search terms.
\newblock {\em Journal of the American Society for Information science},
  27(3):129--146.

\bibitem[Robertson and Walker, 1994]{Robertson94somesimple}
Robertson, S.~E. and Walker, S. (1994).
\newblock Some simple effective approximations to the 2-poisson model for
  probabilistic weighted retrieval.
\newblock In {\em In Proceedings of SIGIR'94}, pages 232--241. Springer-Verlag.

\bibitem[S.~Lazebnik and Ponce., 2006]{LazebnikSpatial06}
S.~Lazebnik, C.~S. and Ponce., J. (2006).
\newblock {Beyond Bags of Features: Spatial Pyramid Matching for Recognizing
  Natural Scene Categories}.
\newblock In {\em {Proceedings of the IEEE Conference on Computer Vision and
  Pattern Recognition, New York, June 2006}}.

\bibitem[Sauer, 1972]{sauer1972density}
Sauer, N. (1972).
\newblock On the density of families of sets.
\newblock {\em Journal of Combinatorial Theory, Series A}, 13(1):145--147.

\bibitem[Sch{\"o}lkopf, 2000]{scholkopf}
Sch{\"o}lkopf, B. (2000).
\newblock The kernel trick for distances.
\newblock {\em MIT Press}, pages 301--307.

\bibitem[Sch\"{o}lkopf et~al., 1999]{svm-book}
Sch\"{o}lkopf, B., Burges, C. J.~C., and Smola, A.~J., editors (1999).
\newblock {\em Advances in kernel methods: support vector learning}.
\newblock MIT Press, Cambridge, MA, USA.

\bibitem[Schroff et~al., 2011]{schroff2011harvesting}
Schroff, F., Criminisi, A., and Zisserman, A. (2011).
\newblock Harvesting image databases from the web.
\newblock {\em Pattern Analysis and Machine Intelligence, IEEE Transactions
  on}, 33(4):754--766.

\bibitem[Schwarz and Morris, 2011]{schwarz2011augmenting}
Schwarz, J. and Morris, M. (2011).
\newblock Augmenting web pages and search results to support credibility
  assessment.
\newblock In {\em Proceedings of the SIGCHI Conference on Human Factors in
  Computing Systems}, pages 1245--1254. ACM.

\bibitem[Shawe-Taylor and Cristianini, 2004]{shawe2004kernel}
Shawe-Taylor, J. and Cristianini, N. (2004).
\newblock {\em Kernel methods for pattern analysis}.
\newblock Cambridge university press.

\bibitem[Shi and Malik, 2000]{shimalik}
Shi, J. and Malik, J. (2000).
\newblock Normalized cuts and image segmentation.
\newblock {\em IEEE Transactions on Pattern and Machine Intelligence},
  22:888--905.

\bibitem[Shotton et~al., 2006]{shotton2006textonboost}
Shotton, J., Winn, J., Rother, C., and Criminisi, A. (2006).
\newblock Textonboost: Joint appearance, shape and context modeling for
  multi-class object recognition and segmentation.
\newblock In {\em Computer Vision--ECCV 2006}, pages 1--15. Springer.

\bibitem[Sonnenburg et~al., 2006]{sonnenburg2006large}
Sonnenburg, S., R{\"a}tsch, G., Sch{\"a}fer, C., and Sch{\"o}lkopf, B. (2006).
\newblock Large scale multiple kernel learning.
\newblock {\em The Journal of Machine Learning Research}, 7:1531--1565.

\bibitem[Szir{\'a}nyi et~al., 2000]{sziranyi2000image}
Szir{\'a}nyi, T., Zerubia, J., Cz{\'u}ni, L., Geldreich, D., and Kato, Z.
  (2000).
\newblock Image segmentation using markov random field model in fully parallel
  cellular network architectures.
\newblock {\em Real-Time Imaging}, 6(3):195--211.

\bibitem[T.~Mensink et~al., 2010]{xrce2010}
T.~Mensink, G.~C., Perronnin, F., Sánchez, J., and Verbeek, J. (2010).
\newblock {LEAR and XRCEs participation to Visual Concept Detection Task at
  ImageCLEF 2010}.
\newblock In {\em {Working Notes for the CLEF 2010 Workshop}}.

\bibitem[Tak{\'a}cs et~al., 2008]{takacs2008investigation}
Tak{\'a}cs, G., Pil{\'a}szy, I., N{\'e}meth, B., and Tikk, D. (2008).
\newblock {Investigation of various matrix factorization methods for large
  recommender systems}.
\newblock In {\em Proceedings of the 2nd KDD Workshop on Large-Scale
  Recommender Systems and the Netflix Prize Competition}, pages 1--8. ACM.

\bibitem[Tan et~al., 2005]{TSK}
Tan, P.-N., Steinbach, M., and Kumar, V. (2005).
\newblock {\em Introduction to Data Mining, (First Edition)}.
\newblock Addison-Wesley Longman Publishing Co., Inc., Boston, MA, USA.

\bibitem[Taskar et~al., 2004]{taskar2004learning}
Taskar, B., Chatalbashev, V., and Koller, D. (2004).
\newblock Learning associative markov networks.
\newblock In {\em Proceedings of the twenty-first international conference on
  Machine learning}, page 102. ACM.

\bibitem[Theera-Ampornpunt et~al., 2013]{theera2013using}
Theera-Ampornpunt, N., Bagchi, S., Joshi, K.~R., and Panta, R.~K. (2013).
\newblock Using big data for more dependability: a cellular network tale.
\newblock In {\em Proceedings of the 9th Workshop on Hot Topics in Dependable
  Systems}, page~2. ACM.

\bibitem[Thomee et~al., 2013]{bart2013special}
Thomee, B., Huiskes, M., and S.~Lew, M. (2013).
\newblock Special issue on visual concept detection in the mirflickr/imageclef
  benchmark.
\newblock {\em Computer Vision and Image Understanding}, 117:451--452.

\bibitem[Thomee and Popescu, 2012]{bartFlickr12}
Thomee, B. and Popescu, A. (2012).
\newblock Overview of the imageclef 2012 flickr photo annotation and retrieval
  task.
\newblock {\em Working Notes of CLEF 2012, Rome, Italy}, 2012.

\bibitem[\u{C}encov, 1982]{cencov1982}
\u{C}encov, N.~N. (1982).
\newblock Statistical decision rules and optimal inference.
\newblock {\em American Mathematical Society}, 53.

\bibitem[Van~de Sande et~al., 2010]{vandeSandeTPAMI2010}
Van~de Sande, K. E.~A., Gevers, T., and Snoek, C. G.~M. (2010).
\newblock Evaluating color descriptors for object and scene recognition.
\newblock {\em IEEE Transactions on Pattern Analysis and Machine Intelligence},
  32(9):1582--1596.

\bibitem[van Gemert et~al., 2008]{van2008kernel}
van Gemert, J.~C., Geusebroek, J.-M., Veenman, C.~J., and Smeulders, A.~W.
  (2008).
\newblock Kernel codebooks for scene categorization.
\newblock In {\em Computer Vision--ECCV 2008}, pages 696--709. Springer.

\bibitem[Vapnik and Chervonenkis, 1971]{vapnik1971uniform}
Vapnik, V.~N. and Chervonenkis, A.~Y. (1971).
\newblock On the uniform convergence of relative frequencies of events to their
  probabilities.
\newblock {\em Theory of Probability \& Its Applications}, 16(2):264--280.

\bibitem[Vapnik and Vapnik, 1998]{vapnik1998statistical}
Vapnik, V.~N. and Vapnik, V. (1998).
\newblock {\em Statistical learning theory}, volume~1.
\newblock Wiley New York.

\bibitem[Vedaldi et~al., 2009]{vedaldi2009multiple}
Vedaldi, A., Gulshan, V., Varma, M., and Zisserman, A. (2009).
\newblock Multiple kernels for object detection.
\newblock In {\em Computer Vision, 2009 IEEE 12th International Conference on},
  pages 606--613. IEEE.

\bibitem[Wang et~al., 2010]{wang2010locality}
Wang, J., Yang, J., Yu, K., Lv, F., Huang, T., and Gong, Y. (2010).
\newblock Locality-constrained linear coding for image classification.
\newblock In {\em Computer Vision and Pattern Recognition (CVPR), 2010 IEEE
  Conference on}, pages 3360--3367. IEEE.

\bibitem[Witkin, 1984]{witkin1984scale}
Witkin, A.~P. (1984).
\newblock Scale-space filtering: A new approach to multi-scale description.
\newblock In {\em Acoustics, Speech, and Signal Processing, IEEE International
  Conference on ICASSP'84.}, volume~9, pages 150--153. IEEE.

\bibitem[Witten and Frank, 2005]{weka}
Witten, I.~H. and Frank, E. (2005).
\newblock {\em Data Mining: Practical Machine Learning Tools and Techniques}.
\newblock Morgan Kaufmann Series in Data Management Systems. Morgan Kaufmann,
  second edition.

\bibitem[Xu and Croft, 1996]{xu1996qeu}
Xu, J. and Croft, W. (1996).
\newblock {Query expansion using local and global document analysis}.
\newblock {\em Proceedings of the 19th annual international ACM SIGIR
  conference on Research and development in information retrieval}, pages
  4--11.

\bibitem[Yang et~al., 2009]{yang2009linear}
Yang, J., Yu, K., Gong, Y., and Huang, T. (2009).
\newblock Linear spatial pyramid matching using sparse coding for image
  classification.
\newblock In {\em Computer Vision and Pattern Recognition, 2009. CVPR 2009.
  IEEE Conference on}, pages 1794--1801. IEEE.

\bibitem[Ye et~al., 2012]{ye2012robust}
Ye, G., Liu, D., Jhuo, I.-H., and Chang, S.-F. (2012).
\newblock Robust late fusion with rank minimization.
\newblock In {\em Computer Vision and Pattern Recognition (CVPR), 2012 IEEE
  Conference on}, pages 3021--3028. IEEE.

\bibitem[Zhang et~al., 2009]{zhang2009descriptive}
Zhang, S., Tian, Q., Hua, G., Huang, Q., and Li, S. (2009).
\newblock Descriptive visual words and visual phrases for image applications.
\newblock In {\em Proceedings of the 17th ACM international conference on
  Multimedia}, pages 75--84. ACM.

\bibitem[Zheng et~al., 2008]{zheng2008general}
Zheng, Z., Zha, H., Zhang, T., Chapelle, O., Chen, K., and Sun, G. (2008).
\newblock A general boosting method and its application to learning ranking
  functions for web search.
\newblock In {\em Advances in neural information processing systems}, pages
  1697--1704.

\bibitem[Zhou et~al., 2013]{zhou2013proactive}
Zhou, S., Yang, J., Xu, D., Li, G., Jin, Y., Ge, Z., Kosseifi, M.~B.,
  Doverspike, R., Chen, Y., and Ying, L. (2013).
\newblock Proactive call drop avoidance in umts networks.
\newblock In {\em INFOCOM, 2013 Proceedings IEEE}, pages 425--429. IEEE.

\bibitem[Zhou et~al., 2010a]{zhou2010image}
Zhou, X., Yu, K., Zhang, T., and Huang, T.~S. (2010a).
\newblock Image classification using super-vector coding of local image
  descriptors.
\newblock In {\em Computer Vision--ECCV 2010}, pages 141--154. Springer.

\bibitem[Zhou et~al., 2010b]{supvec}
Zhou, X., Yu, K., Zhang, T., and Huang, T.~S. (2010b).
\newblock Image classification using super-vector coding of local image
  descriptors.
\newblock In {\em Proceedings of the 11th European conference on Computer
  vision: Part V}, ECCV'10, pages 141--154, Berlin, Heidelberg.
  Springer-Verlag.

\bibitem[Zhou and Huang, 2003]{zhou2003relevance}
Zhou, X.~S. and Huang, T.~S. (2003).
\newblock Relevance feedback in image retrieval: A comprehensive review.
\newblock {\em Multimedia systems}, 8(6):536--544.

\end{thebibliography}

\end{document}